\def\eqref#1{equation~\ref{#1}}
\def\1{\bm{1}}
\def\vc{{\bm{c}}}
\def\vx{{\bm{x}}}
\def\mI{{\bm{I}}}
\DeclareMathAlphabet{\mathsfit}{\encodingdefault}{\sfdefault}{m}{sl}
\SetMathAlphabet{\mathsfit}{bold}{\encodingdefault}{\sfdefault}{bx}{n}
\def\gL{{\mathcal{L}}}
\def\gN{{\mathcal{N}}}
\newcommand{\epsilonv}{\bm{\epsilon}}
\newcommand{\E}{\mathbb{E}}
\theoremstyle{plain}
\newtheorem{theorem}{Theorem}
\theoremstyle{definition}
\theoremstyle{remark}
\icmltitlerunning{Visual Generation Without Guidance}
\begin{document}

\twocolumn[
\icmltitle{Visual Generation Without Guidance}



\icmlsetsymbol{equal}{*}

\begin{icmlauthorlist}


\icmlauthor{Huayu Chen*}{thu}
\icmlauthor{Kai Jiang*}{thu,shengshu}
\icmlauthor{Kaiwen Zheng}{thu}
\icmlauthor{Jianfei Chen}{thu}
\icmlauthor{Hang Su}{thu}
\icmlauthor{Jun Zhu}{thu,shengshu}

\end{icmlauthorlist}

\icmlaffiliation{thu}{Department of Computer Science \& Technology, Tsinghua University}
\icmlaffiliation{shengshu}{ShengShu, Beijing, China}

\icmlcorrespondingauthor{Jun Zhu}{dcszj@tsinghua.edu.cn}

\icmlkeywords{Visual Generative Modeling, Guidance, Guided Sampling, CFG, Efficiency, Distillation}

\icmlkeywords{Machine Learning, ICML}

\vskip 0.3in
]



\printAffiliationsAndNotice{\icmlEqualContribution} 

\begin{abstract}
Classifier-Free Guidance (CFG) has been a default technique in various visual generative models, yet it requires inference from both conditional and unconditional models during sampling.
We propose to build visual models that are free from guided sampling. The resulting algorithm, Guidance-Free Training (GFT), matches the performance of CFG while reducing sampling to a single model, halving the computational cost. 
Unlike previous distillation-based approaches that rely on pretrained CFG networks, GFT enables training directly from scratch. 
GFT is simple to implement. It retains the same maximum likelihood objective as CFG and differs mainly in the parameterization of conditional models.
Implementing GFT requires only minimal modifications to existing codebases, as most design choices and hyperparameters are directly inherited from CFG.
Our extensive experiments across five distinct visual models demonstrate the effectiveness and versatility of GFT. Across domains of diffusion, autoregressive, and masked-prediction modeling, GFT consistently achieves comparable or even lower FID scores, with similar diversity-fidelity trade-offs compared with CFG baselines, all while being guidance-free. Code: \url{https://github.com/thu-ml/GFT}.
\end{abstract}
\begin{figure}
    \centering
    \includegraphics[width=\linewidth]{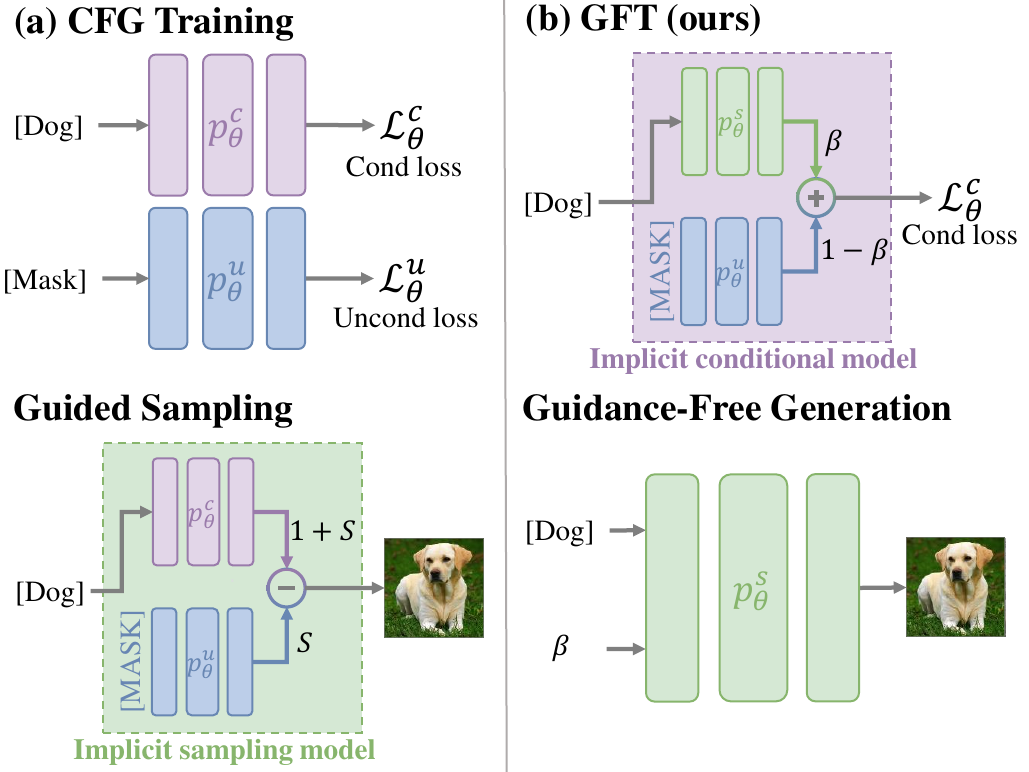}
    \caption{Comparison of GFT and CFG method. GFT shares CFG's training objective but has a different parameterization technique for the conditional model. This enables direct training of an explicit sampling model.}
    \vspace{-3mm}
    \label{fig:main1}
\end{figure}
\begin{figure*}
    \centering
    \begin{minipage}{0.246\linewidth}
        \centering
        \includegraphics[width=\linewidth]{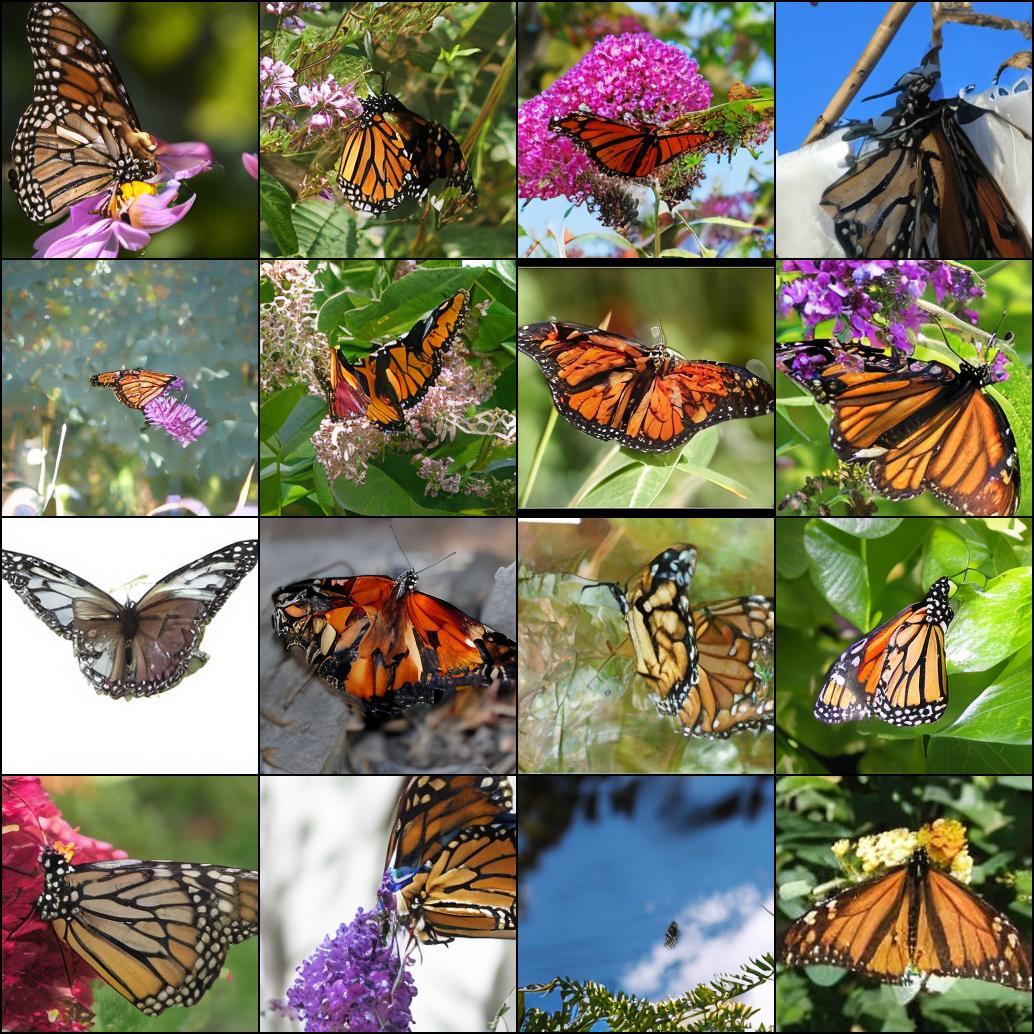}
        \vspace{0.2em} 
        \small{$\beta=1.0$}
    \end{minipage}
    \begin{minipage}{0.246\linewidth}
        \centering
        \includegraphics[width=\linewidth]{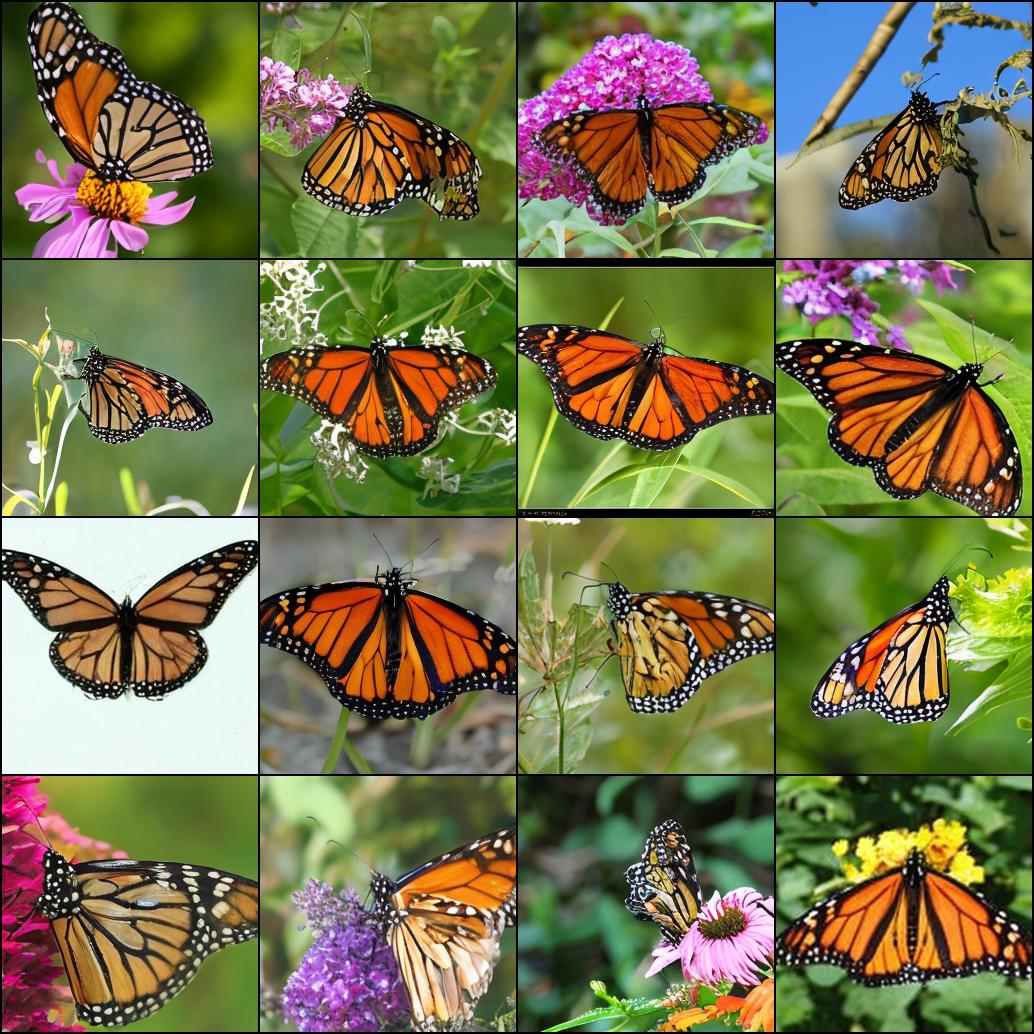}
        \vspace{0.2em}
        \small{$\beta=0.5$}
    \end{minipage}
    \begin{minipage}{0.246\linewidth}
        \centering
        \includegraphics[width=\linewidth]{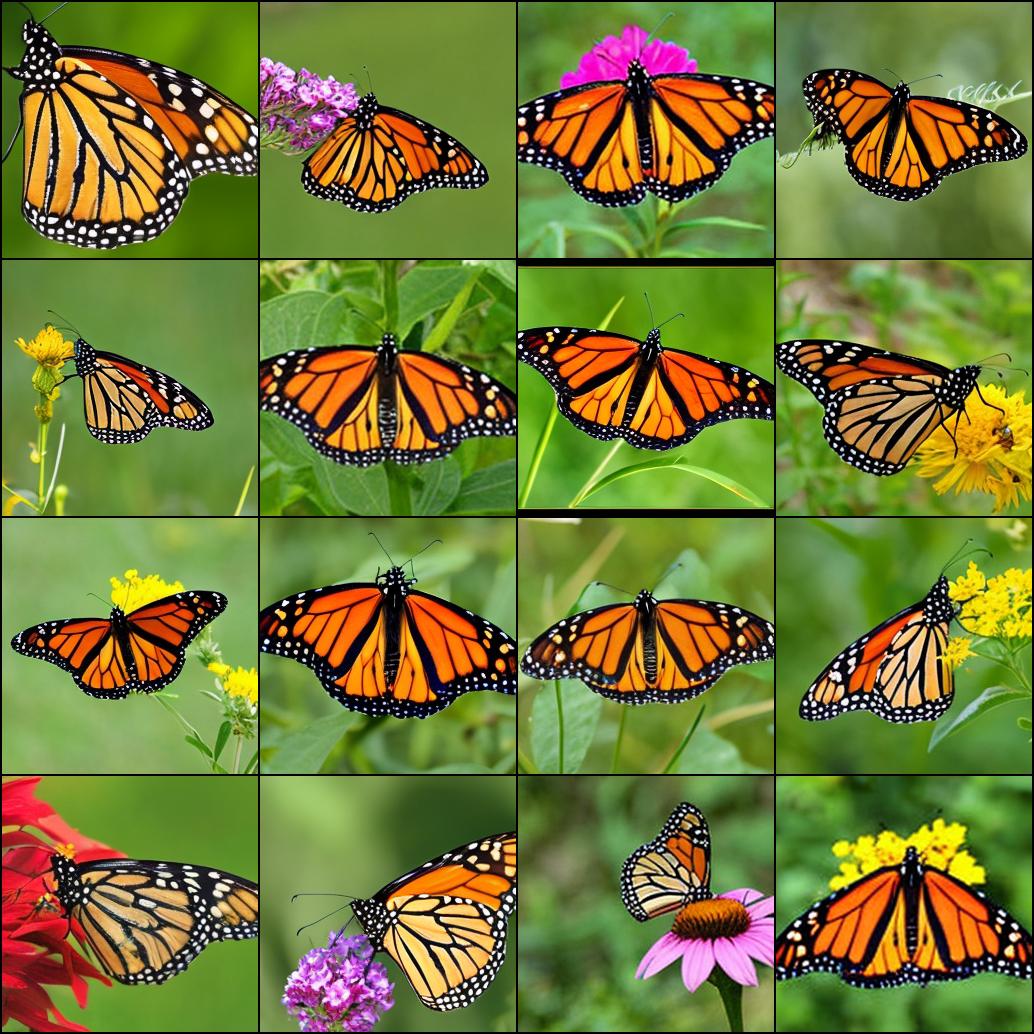}
        \vspace{0.2em}
        \small{$\beta=0.25$}
    \end{minipage}
    \begin{minipage}{0.246\linewidth}
        \centering
        \includegraphics[width=\linewidth]{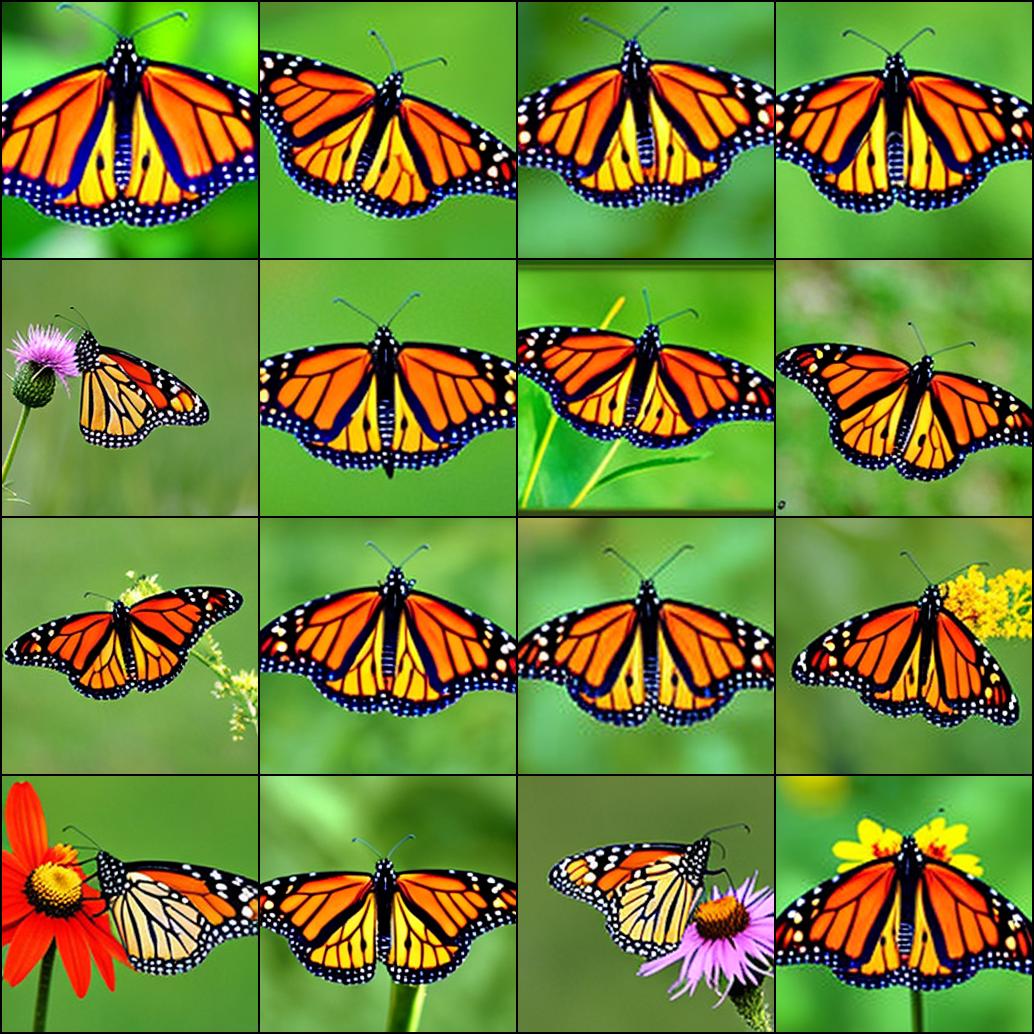}
        \vspace{0.2em}
        \small{$\beta=0.1$}
    \end{minipage}
    \vspace{-3mm}
    \caption{Impact of adjusting GFT sampling temperature $\beta$ for guidance-free DiT-XL/2. GFT achieves similar results to CFG without requiring dual model inference at each step. More examples are in Figure \ref{fig:more_beta}.
    }
    \label{fig:various beta}
    \vspace{-3mm}
\end{figure*}
\vspace{-3mm}
\section{Introduction}
\label{sec:Introduction}
Low-temperature sampling is a critical technique for enhancing generation quality by focusing only on the model's high-likelihood areas. 
Visual models mainly achieve this via Classifier-Free Guidance (CFG) \citep{cfg}.
As illustrated in Fig.~\ref{fig:main1} (left), CFG jointly optimizes the target conditional model and an extra unconditional model during training, and combines them to define the sampling process. By altering the guidance scale $s$, it can flexibly trade off image fidelity and diversity, while significantly improving the sample quality.
Due to its effectiveness, 
CFG has been adopted as a default technique for a wide spectrum of visual generative models, including diffusion \citep{ddpm}, autoregressive (AR) \citep{igpt, var}, and masked-prediction models \citep{maskgit,mage}. 

However, CFG is not problemless. First, the reliance on an extra unconditional model doubles the sampling cost compared with a vanilla conditional generation. Second, this reliance complicates the post-training of visual models -- when distilling pretrained diffusion models \citep{Meng_2023_CVPR, lcm, dmd} for fast inference or applying RLHF techniques \citep{ddpo, cca}, the extra unconditional model needs to be specially considered in the algorithm design. 
Third, this also renders a sharp difference from low-temperature sampling in language models (LMs), where a single model is sufficient to represent the sampling distributions across various temperatures. Similarly following LMs' approach to divide model output by a constant temperature value is generally ineffective in visual sampling \citep{adm}, even for visual AR models with similar architecture to LMs \citep{llamagen}. All these lead us to ask, \textit{can we effectively control the sampling temperature for visual models using one single model?}

Existing attempts like distillation methods for diffusion models \citep{Meng_2023_CVPR, lcm, dmd} 
and alignment methods for AR models \citep{cca} are not ultimate solutions. They all rely heavily on pretrained CFG networks for loss definition and do not support training guidance-free models from scratch. Their two-stage optimization pipeline may also lead to performance loss compared with CFG, even after extensive tuning. Generalizability is also a concern. Current methods are typically tailored for either continuous diffusion models or discrete AR models, lacking the versatility to cover all domains.

We propose Guidance-Free Training (GFT), a foundational algorithm for building visual generative models with no guidance. GFT matches CFG in performance while requiring only a single model for temperature-controlled sampling, effectively halving sampling costs compared with CFG. GFT offers stable and efficient training with the same convergence rate as CFG, almost no extra memory usage, and only 10–20\% additional computation per training update. GFT is highly versatile, applicable in all visual domains within CFG’s scope, including diffusion, AR, and masked models.

The core idea behind GFT is to transform the desired sampling model into easily learnable forms. GFT optimizes the same conditional objective as CFG. 
However, instead of aiming to learn an explicit conditional network, GFT defines the conditional model implicitly as the linear interpolation of a sampling network and the unconditional network (Figure \ref{fig:main1}). By training this \textit{implicit} model, GFT directly optimizes the underlying sampling network, which is then employed for visual generation without guidance. In essence, one can consider GFT simply as a conditional parameterization technique in CFG training. This perspective makes GFT extremely easy to implement based on existing codebases, requiring only a few lines of modifications and with most design choices and hyperparameters inherited.

We verify the effectiveness and efficiency of GFT in both class-to-image and text-to-image tasks, spanning 5 distinctive types of visual models: DiT \citep{dit}, VAR \citep{var}, LlamaGen \citep{llamagen}, MAR \citep{mar} and LDM \citep{ldm}. Across all models, GFT enjoys almost lossless FID in fine-tuning existing CFG models into guidance-free models (Sec. \ref{sec:finetune_exp}). For instance, we achieve a guidance-free FID of 1.99 for the DiT-XL model with only 2\% of pretraining epochs, while the CFG performance is 2.11. This surpasses previous distillation and alignment methods in their respective application domains. GFT also demonstrates great superiority in building guidance-free models from scratch. With the same amount of training epochs, GFT models generally match or even outperform CFG models, despite being $50\%$ cheaper in sampling (Sec. \ref{sec:pretrain_exp}). By taking in a temperature parameter as model input, GFT can achieve a flexible diversity-fidelity trade-off similar to CFG (Sec. \ref{sec:tradeoff_exp}).

\section{Background}
\subsection{Visual Generative Modeling}
\paragraph{Continuous diffusion models.} Diffusion models \citep{ddpm} define a forward process that gradually injects noises into clean images from data distribution $p(\vx)$:
\begin{equation*}
\label{eq:forward_process}
    \vx_t = \alpha_t \vx + \sigma_t \epsilonv,
\end{equation*}
where $t\in [0,1]$, and $\epsilonv$ is standard Gaussian noise. $\alpha_t,\sigma_t$ defines the denoising schedule. We have
\begin{equation*}
\label{eq:diffusion_forward_process}
    p_{t}(\vx_t)=\int \gN(\vx_t | \alpha_t\vx, \sigma_t^2\mI) p(\vx) \mathrm{d} \vx,
\end{equation*}
where $p_0(\vx) = p(\vx)$ and $p_1 \approx \gN(0, 1)$. 

Given data following $p(\vx,\vc)$, we can train conditional diffusion models by predicting the Gaussian noise added to $\vx_t$.
\begin{equation}
\label{Eq:CFG_diffusion_loss}
\min_\theta \E_{p(\vx, \vc), t,\epsilonv}\left[
        \| \epsilon_\theta(\vx_t|\vc)- \epsilonv\|_2^2
    \right].
\end{equation}
More formally, \citet{song2021maximum} proved that Eq. (\ref{Eq:CFG_diffusion_loss}) is essentially performing maximum likelihood training with evidence lower bound (ELBO). Also, the denoising model $\epsilon_{\theta}^*$ eventually converges to the data score function:
\begin{equation}
\label{Eq:diffusion_score}
\epsilon_{\theta}^*(\vx_t|\vc) = - \sigma_t \nabla_{\vx_t}\log p_t(\vx_t|\vc)
\end{equation}
Given condition $\vc$, $\epsilon_{\theta}$ can be leveraged to generate images from $p_\theta(\vx|\vc)$ by denoising noises from $p_1$ iteratively. 
\paragraph{Discrete AR \& masked models.}
AR models \citep{igpt} and masked-prediction models \citep{maskgit} function similarly. Both discretize images $\vx$ into token sequences
$\vx_{1:N}$ and then perform token prediction. Their maximum likelihood training objective can be unified as 
\begin{equation}
\min_\theta \E_{p(\vx_{1:N}, \vc)}- \sum_i p_\theta(\vx_n|\vx_{<n},\vc).
\label{eq:loss_AR}
\end{equation}
For AR models, $\vx_{<n}$ represents the first $i$ tokens in a pre-determined order, and $\vx_{i}$ is the next token to be predicted. For masked models, $\vx_{i}$ represents all the unknown tokens that are randomized masked during training, while $\vx_{<n}$ are the unmasked ones. Due to discrete modeling, the data likelihood $p_\theta$ in Eq. (\ref{eq:loss_AR}) can be easily calculated.
\subsection{Classifier-Free Guidance}
\paragraph{Continuous CFG.} In diffusion modeling, vanilla temperature sampling (dividing model output by a constant value) is generally found ineffective in improving generation quality \citep{adm}. Current methods typically employ CFG \citep{cfg}, which redefines the sampling denoising function $\epsilon_\theta^\text{s}(\vx_t|c)$ using two models:
\begin{equation}
    \label{eq:CFG}
    \epsilon_\theta^\text{s}(\vx_t|c) := \epsilon_\theta^\text{c}(\vx_t|c) + s [\epsilon_\theta^\text{c}(\vx_t|c) - \epsilon_\theta^\text{u}(\vx_t)],
\end{equation}
where $\epsilon_\theta^c$ and $\epsilon_\theta^\text{u}$ respectively model the conditional data distribution $p(x|c)$ and the unconditional data distribution $p(x)$. In practice, $\epsilon_\theta^\text{u}$ can be jointly trained with $\epsilon_\theta^c$, by randomly masking the conditioning data in Eq. (\ref{Eq:CFG_diffusion_loss}) with some fixed probability.

According to Eq.~(\ref{Eq:diffusion_score}), CFG's sampling distribution $p^\text{s}(\vx|c)$ has shifted from standard conditional distribution $p(\vx|\vc)$ to
\begin{equation}
\label{eq:target_dist}
    p^\text{s}(\vx|c) \propto p(\vx|\vc) \left[\frac{p(\vx|\vc)}{p(\vx)}\right]^s.
\end{equation}

CFG offers an effective approach for lowering sampling temperature in visual generation by simply increasing $s>0$, thereby substantially improving sample quality. 

\paragraph{Discrete CFG.} Besides diffusion, CFG is also a critical sampling technique in discrete visual modeling \citep{mage, Chameleon, var, showo}. Though the guidance operation performs on the logit space instead of the score field: 
\begin{align*}
    \label{eq:CFG_AR}
    &\ell_\theta^\text{s}(\vx_n| \vx_{<n},\vc)= \\
    &\ell_\theta^\text{c}(\vx_n|\vx_{<n},\vc) + s [\ell_\theta^\text{c}(\vx_n|\vx_{<n},\vc) - \ell_\theta^\text{u}(\vx_n| \vx_{<n})].
\end{align*}
Given $\ell_\theta \propto \log p_\theta$, it can be proved the sampling distribution for discrete visual models also satisfies Eq. (\ref{eq:target_dist}).

\section{Method}
 Despite its effectiveness, CFG requires inferencing an extra unconditional model to guide the sampling process, directly doubling the computation cost.  Moreover, CFG complicates the post-training of visual generative models because the unconditional model needs to be additionally considered in algorithm design \citep{Meng_2023_CVPR, ddpo}. 

We propose \textbf{Guidance-Free Training (GFT)} as an alternative method of CFG for improving sample quality in visual generation without guided sampling. 
GFT matches CFG in performance but only leverages a single model to represent CFG's sampling distribution $p^\text{s}(\vx|c)$. 

We derive GFT's training objective for diffusion models in Sec. \ref{sec:derivation}, discuss its practical implementation in Sec. \ref{sec:practical_implement}, and explain how it can be extended to discrete AR and masked models in Sec. \ref{sec:AR_method}.

\subsection{Algorithm Derivation}
\label{sec:derivation}
The key challenge to directly learn the target sampling model $\epsilon_\theta^s$ is the absence of a dataset that aligns with the distribution $p^\text{s}(\vx|c)$ in Eq.~(\ref{eq:target_dist}). This makes it impractical to optimize a maximum-likelihood-training objective like
\begin{equation*} \min_\theta \E_{\color{red}{p^s(\vx, \vc)}\color{black}, t,\epsilonv}\left[
        \| \epsilon_\theta^s(\vx_t|\vc)- \epsilonv\|_2^2
    \right],
\end{equation*}
as we cannot draw samples from $p^\text{s}$. In contrast, training $\epsilon_\theta^c(\vx|\vc)$ and $\epsilon_\theta^u(\vx)$ separately as in CFG is feasible because their corresponding datasets, $\{(\vx, \vc) \sim p(\vx, \vc)\}$ and $\{\vx\sim p(\vx)\}$, can be easily obtained.

To address this, we reformulate Eq. (\ref{eq:CFG}):
\begin{equation}
    \label{eq:GFT}
    \underbrace{\epsilon_\theta^\text{c}(\vx_t|c)}_{\substack{ \; \color{blue}p(\vx|\vc) \\ \text{Learnable}}} \; =\; \frac{1}{1+s} \underbrace{\epsilon_\theta^\text{s}(\vx_t|c)}_{\substack{   \color{red}p^s(\vx|\vc) \\ \text{\textit{\hspace{-10mm}$\leftarrow$ \hspace{5mm} Target sampling model\hspace{5mm} +\hspace{-10mm}}} }}\; +\; \frac{s}{1+s} \underbrace{\epsilon_\theta^\text{u}(\vx_t)}_{\substack{  p(\vx) \\  
 \text{Learnable} }}.
\end{equation}
Although learning $\epsilon_\theta^\text{s}$ directly is difficult, we note it can be combined with an unconditional model $\epsilon_\theta^\text{u}$ to represent the standard conditional $\epsilon_\theta^\text{c}$, which is learnable. Thus, we can leverage the \textit{same} conditional loss in Eq.~(\ref{Eq:CFG_diffusion_loss}) to train $\epsilon_\theta^\text{s}$, namely:
\begin{equation}
\label{Eq:GFT_diffusion_loss}
\min_\theta \E_{\color{blue}p(\vx, \vc)\color{black}, t,\epsilonv}\left[
        \|\frac{1}{1+s} \epsilon_\theta^s(\vx_t|\vc) + \frac{s}{1+s} \epsilon_\theta^u(\vx_t) - \epsilonv\|_2^2
    \right],
\end{equation}
where $\epsilonv$ is standard Gaussian noise, $\vx_t=\alpha_t \vx + \sigma_t \epsilonv$ are diffused images. $\alpha_t$ and $\sigma_t$ define the forward process.

To this end, we have a practical algorithm for directly learning guidance-free models $\epsilon_\theta^s$. However, unlike CFG which allows controlling sampling temperature by adjusting guidance scale $s$ to trade off fidelity and diversity, our method still lacks similar inference-time flexibility as Eq.~(\ref{Eq:GFT_diffusion_loss}) is performed for a specific $s$.

To solve this problem, we define a pseudo-temperature $\beta := 1/(1+s)$ and further condition our sampling model $\epsilon_\theta^s(\vx_t|\vc, \beta)$ on the extra $\beta$ input. We can randomly sample $\beta \in [0,1]$ during training, corresponding to $s \in [0, + \infty)$. The GFT objective in Eq.~(\ref{Eq:GFT_diffusion_loss}) now becomes:
\begin{equation}
\label{Eq:GFT_diffusion_loss_beta}
\min_\theta \E_{p(\vx, \vc), t,\epsilonv,\beta}\left[
        \|\beta \epsilon_\theta^s(\vx_t|\vc,\beta) + (1-\beta) \epsilon_\theta^u(\vx_t) - \epsilonv\|_2^2
    \right].
\end{equation}
When $\beta=1$, Eq.~(\ref{Eq:GFT_diffusion_loss_beta}) reduces to conditional diffusion loss $\| \epsilon_\theta^s(\vx_t|\vc,\beta) - \epsilonv\|_2^2$. When $\beta=0$, Eq.~(\ref{Eq:GFT_diffusion_loss_beta}) becomes an unconditional loss $\|\epsilon_\theta^u(\vx_t) - \epsilonv\|_2^2$. This allows simultaneous training of both conditional and unconditional models. 

As pseudo-temperature $\beta$ decreases $1\to0$, the modeling target for $\epsilon_\theta^s$ gradually shifts from conditional data distribution $p(\vx|\vc)$ to lower-temperature distribution $p^s(\vx|\vc)$ as defined by Eq.~(\ref{eq:target_dist}) (See Fig.~\ref{fig:various beta}).



\subsection{Practical Implementation}
\label{sec:practical_implement}
\begin{algorithm}[t]
    \caption{Guidance-Free Training (Diffusion)}
    \begin{algorithmic}[1]
    \label{alg:gft}
        \STATE Initialize $\theta$ from pretrained models or from scratch.
        \FOR{each gradient step}
        \STATE \small{\color{gray} \textit{// CFG training w/ pseudo-temperature $\beta$}}   
        \STATE $\vx, \vc \sim p(\vx, \vc)$
        \STATE $\beta\sim U (0, 1)$, $t\sim U (0, 1)$
        \STATE $\epsilonv \sim \gN(\mathbf{0} , \mI^2)$
        \STATE $\vx_t = \alpha_t \vx + \sigma_t \epsilonv$ 
        \STATE $\vc_\varnothing = \vc$ masked by $\varnothing$ with $10\%$ probability
        \STATE Calculate $\epsilon_\theta^s(\vx_t|\vc_\varnothing,\beta)$ in \emph{training} mode\vspace{1mm}
        \STATE \small{\color[HTML]{8181c9} \textit{// Additional to CFG}}   
        \STATE {\color[HTML]{0202ff} Calculate $\epsilon_{\theta}^u(\vx_t| \varnothing, 1)$ in \emph{evaluation} mode}
        \STATE {\color[HTML]{0202ff}$\epsilonv_\theta = \beta \epsilon_\theta^s(\vx_t|\vc_\varnothing,\beta ) + (1-\beta) \mathrm{\mathbf{sg}}[\epsilon_{\theta}^u(\vx_t| \varnothing, 1)]$}\vspace{1mm}
        \STATE \small{\color{gray} \textit{// Standard Maximum Likelihood Training}}   
        \STATE $\theta \leftarrow \theta - \lambda \nabla_{\theta} \|\epsilonv_\theta - \epsilonv\|_2^2$\hspace{3mm} (Eq. \ref{Eq:practical_loss})
        \ENDFOR
    \end{algorithmic}
\end{algorithm}
A desirable algorithm should not only ensure soundness but also offer computational efficiency, seamless integration, and practical deployability. To achieve this, we further present Eq.~(\ref{Eq:practical_loss}) as a practical loss function of GFT. The implementation is in Algorithm \ref{alg:gft}. 
\begin{align}
\label{Eq:practical_loss}
&\gL_\theta^\text{diff} (\vx, \vc_\varnothing, t,\epsilonv,\beta)  \nonumber \\
       &=\|\beta  \epsilon_\theta^s(\vx_t|\vc_\varnothing,\beta ) + (1-\beta) \mathrm{\mathbf{sg}}[\epsilon_{\theta}^u(\vx_t| \varnothing, 1)] - \epsilonv\|_2^2.
\end{align}
\paragraph{Stopping the unconditional gradient.} The main difference between Eq. (\ref{Eq:practical_loss}) and Eq. (\ref{Eq:GFT_diffusion_loss_beta}) is that $\epsilon_{\theta}^u$ is computed in evaluation mode, with model gradients stopped by the $\mathrm{\mathbf{sg}}[\cdot]$ operation. To train the model unconditionally, we randomly mask conditions $\vc$ with $\varnothing$ when computing $\epsilon_{\theta}^s$. We show this design does not affect the training convergence point: 
\begin{theorem}[GFT Optimal Solution]
\label{thrm:GFT}
Given unlimited model capacity and training data, the optimal $\epsilon_{\theta^*}^s$ for optimizing Eq. (\ref{Eq:practical_loss}) and Eq. (\ref{Eq:GFT_diffusion_loss_beta}) are the same. Both satisfy
\begin{align*}
&\epsilon_{\theta^*}^s(\vx_t|\vc,\beta)\\& = -\sigma_t\left[ \frac{1}{\beta}  \nabla_{\vx_t}\log p_t(\vx_t|\vc) - (\frac{1}{\beta}-1)  \nabla_{\vx_t}\log p_t(\vx_t)\right]
\end{align*}
\end{theorem}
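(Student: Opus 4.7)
My plan is to show that both losses admit the stated formula as their pointwise minimizer, with the unconditional prediction $\epsilon^u_{\theta^*}$ simultaneously pinned to the unconditional score $-\sigma_t\nabla_{\vx_t}\log p_t(\vx_t)$. Under the unlimited-capacity assumption, minimizing the expected loss is equivalent to minimizing the conditional expected loss at every input configuration $(\vx_t,\vc,\beta)$, so the argument reduces to working out first-order stationarity conditions in the relevant quantities and then solving the resulting linear system.

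For Eq.~(\ref{Eq:GFT_diffusion_loss_beta}), I would condition on $(\vx_t,\vc_\varnothing,\beta)$ and use that the conditional mean of $\epsilonv$ is $-\sigma_t\nabla_{\vx_t}\log p_t(\vx_t|\vc_\varnothing)$ by Tweedie's identity (equivalent to Eq.~(\ref{Eq:diffusion_score})), with the convention $p_t(\vx_t|\varnothing) := p_t(\vx_t)$. Pointwise stationarity in $\epsilon^s_\theta(\vx_t|\vc,\beta)$ gives
$$\beta\,\epsilon^s_{\theta^*}(\vx_t|\vc,\beta) + (1-\beta)\,\epsilon^u_{\theta^*}(\vx_t) = -\sigma_t\nabla_{\vx_t}\log p_t(\vx_t|\vc).$$
The complementary stationarity in $\epsilon^u_\theta(\vx_t)$ — most transparently seen at $\beta = 0$, where the loss collapses to $\|\epsilon^u_\theta-\epsilonv\|_2^2$, or via the shared-weight identification used in the algorithm — pins $\epsilon^u_{\theta^*}(\vx_t) = -\sigma_t\nabla_{\vx_t}\log p_t(\vx_t)$. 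Substituting back into the first equation and solving for $\epsilon^s_{\theta^*}$ recovers the closed form claimed in the theorem.

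For Eq.~(\ref{Eq:practical_loss}), the $\mathrm{\mathbf{sg}}[\cdot]$ eliminates one gradient pathway but leaves the stationarity equation in $\epsilon^s_\theta(\vx_t|\vc,\beta)$ unchanged, so the relationship between $\epsilon^s_{\theta^*}$ and $\epsilon^u_{\theta^*}$ is identical. The anchor configuration $\vc_\varnothing=\varnothing$ at $\beta=1$ is what re-pins the unconditional prediction: the coefficient $(1-\beta)$ vanishes there, making the stop-gradient inert, and the loss collapses to $\|\epsilon^s_\theta(\vx_t|\varnothing,1)-\epsilonv\|_2^2$. Since the algorithm defines $\epsilon^u_\theta(\vx_t) \equiv \epsilon^s_\theta(\vx_t|\varnothing,1)$, this once again fixes $\epsilon^u_{\theta^*}$ to the unconditional score, and solving the $\epsilon^s_{\theta^*}$ equation yields the same closed form as before.

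The main obstacle is handling the shared parameterization: $\epsilon^u_\theta$ is not a distinct network but an evaluation of $\epsilon^s_\theta$ at the anchor inputs, so its value is not a free parameter independent of $\epsilon^s_\theta$. Unlimited capacity lets me treat the network's output at distinct input configurations as independent parameters and read off pointwise minimizers, but I must then verify mutual consistency with the shared-weight identification. Consistency follows because the stated formula, evaluated at $(\vc,\beta)=(\varnothing,\beta)$, collapses to $-\sigma_t\nabla_{\vx_t}\log p_t(\vx_t)$ for every $\beta$ (using $\nabla\log p_t(\vx_t|\varnothing)=\nabla\log p_t(\vx_t)$), which matches the required value of $\epsilon^u_{\theta^*}$. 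A minor technicality is the $1/\beta$ factor in the closed form, which is a parameterization artifact: at $\beta=0$ the output $\epsilon^s_\theta$ drops out of the loss entirely, so the theorem should be read as characterizing the optimum on $\beta>0$.
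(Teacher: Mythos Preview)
Your proposal is correct and takes essentially the same approach as the paper: both pin $\epsilon_\theta^u$ to the unconditional score (the paper via the $\beta\to 0$ limit of the raw loss and a gradient computation at $\vc_\varnothing=\varnothing$ for the practical loss; you via the same limit and the anchor $(\varnothing,1)$), then invoke standard diffusion-loss optimality to fix $\beta\,\epsilon_\theta^s+(1-\beta)\,\epsilon_\theta^u$ to the conditional score, and solve. Your explicit consistency check on the shared parameterization and your remark on the $\beta=0$ artifact are more careful than the paper's own treatment, but the method is the same.
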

\begin{proof}
In Appendix \ref{sec:proof}.
\end{proof}
The stopping-gradient technique has the following benefits:

\textit{(1) Alignment with CFG.} The practical GFT algorithm (Eq. \ref{Eq:practical_loss}) differs from CFG training by a single unconditional inference step. This allows us to implement GFT with only a few lines of code based on existing codebases. 

\textit{(2) Computational efficiency.} Since the extra unconditional calculation is gradient-free. GFT requires virtually no extra GPU memory and only 19\% additional train time per update vs. CFG (Figure \ref{fig:efficiency_CFG_GFT}). 
This stands in contrast with the naive implementation without gradient stopping (Eq. \ref{Eq:GFT_diffusion_loss_beta}), which is equivalent to doubling the batch size for CFG training. 
\begin{figure}
    \centering
    \includegraphics[width=1.04\linewidth]{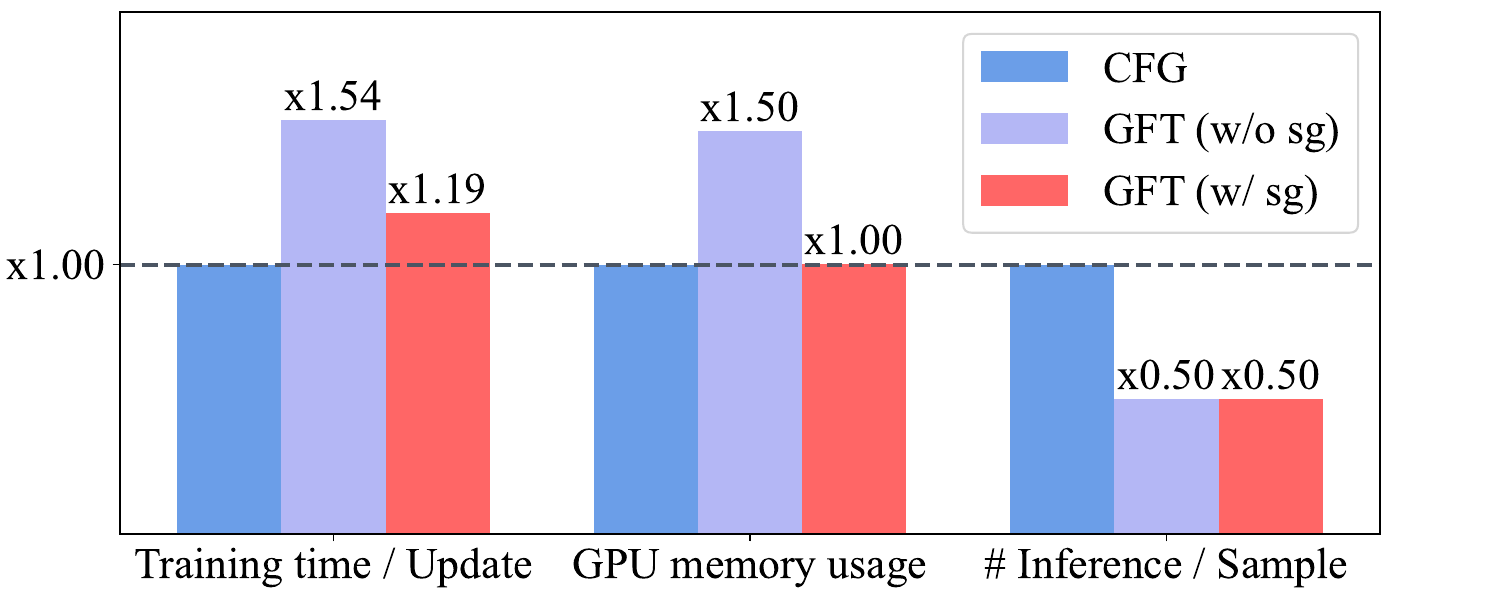}
    \caption{Comparison of computational efficiency between GFT and CFG. Estimated based on the DiT-XL model.}
    \label{fig:efficiency_CFG_GFT}
\end{figure}

\textit{(3) Training stability.} We empirically observe that stopping the gradient for the unconditional model could lead to better training stability and thus improved performance. 

\begin{table*}[t]
\centering
\caption{Comparison of GFT (ours) and other guidance-free methods. Numbers are reported based on experiments for the DiT-XL model or the VAR-d30 model. We use 8$\times$80GB H100 GPU cards.}
\begin{tabular}{c|c|c|c}
\toprule
Method & \textbf{Guidance Distillation}  & {\footnotesize\textbf{Condition Contrastive Alignment}} & \textbf{Guidance-Free Training}\\
\addlinespace
\hline
\addlinespace
Modeling target & $p_\phi^c + p_\phi^u \to p^s_\theta(\vx|\vc)$   & $p_\theta^s + p_\phi^c \to \frac{p(\vx|\vc)}{p(\vx)}$ & $p_\theta^s + p_\theta^u \to p(\vx|\vc)$\\
\addlinespace
Applicable area&Diffusion & AR \& Masked & All \\
\addlinespace
Loss form&\footnotesize$\left\|\epsilonv_\theta^s - [(1\!+\!s)\epsilonv_\phi^c - s \epsilonv_\phi^u] \right\|_2^2$ & 
\footnotesize$-\!\log\!\sigma(r_\theta^p)\!-\!\log\!\sigma (-r_\theta^n);$  $r_\theta\!=\!\frac{1}{s} \log \frac{p_\theta^s}{p_\phi^c}$ 
&\footnotesize $\beta f_\theta^s\!+\!(1\!\!-\!\!\beta) f_\theta^u$ + Diff/AR Loss\\
Reference equation& Eq. (\ref{eq:CFG_distillation_loss})& 
Eq. (\ref{eq:CCA_loss}) 
&Eq. (\ref{Eq:practical_loss}) \& Eq. (\ref{eq:practical_loss_AR}) \\

\addlinespace
\hline
\addlinespace
Train from scratch?  & Not allowed &Not allowed & Allowed \\
\addlinespace
\# Inference / Update & 3 & 4 & 2 \\
\addlinespace
Train time / Update &$\times$1.19&$\times$1.69&$\times$1.00\\
\addlinespace
GPU memory usage &$\times$1.15&$\times$1.39&$\times$1.00 \\
\bottomrule
\end{tabular}
\label{tab:comparison}
\end{table*}

\paragraph{Input of $\beta$.} GFT requires an extra pseudo-temperature input in comparison with CFG. For this, we first process $\beta$ using the similar Fourier embedding method for diffusion time $t$ \citep{adm, Meng_2023_CVPR}. This is followed by some MLP layers. Finally, the temperature embedding is added to the model's original time or class embedding. If fine-tuning, 
we apply zero initialization for the final MLP layer so that $\beta$ would not affect model output at the start of training.

\paragraph{Hyperparameters.} Due to the high similarity between CFG and GFT training, we inherit most hyperparameter choices used for existing CFG models and mainly adjust parameters like learning rate during finetuning. When training from scratch, we find simply keeping all parameters the same with CFG is enough to yield good performance.

\paragraph{Training epochs.} When fine-tuning pretrained CFG models, we find 1\% - 5\% of pretraining epochs are sufficient to achieve nearly lossless FID performance. When from scratch, we always use the same training epochs compared with the CFG baseline.
\subsection{GFT for AR and Masked Models}
\label{sec:AR_method}
Similar to Sec. \ref{sec:derivation}, we can derive the GFT objective for AR and masked models as standard cross-entropy loss:
\begin{align}
\gL_\theta^\text{AR} (\vx, \vc_\varnothing,\beta) & = -\sum_i \log p_\theta^c (\vx_n| \vx_{<n},\vc_\varnothing, \beta)  \\
& = - \sum_i \log \frac{e^{\ell_\theta^c(\vx_n|\vx_{<n},\vc_\varnothing, \beta)}}{\sum_{w \in \mathcal{V}} e^{\ell_\theta^c(w|\vx_{<n},\vc_\varnothing, \beta)}},
\label{eq:practical_loss_AR}
\end{align}
where $w$ is a token in the vocabulary $\mathcal{V}$, and
\begin{align}
&\ell_\theta^c(w|\vx_{<n},\vc_\varnothing,\beta)\nonumber \\
&:= \beta \ell_\theta^s(w|\vx_{<n},\vc_\varnothing,\beta) + (1-\beta) \mathrm{\mathbf{sg}}[\ell_{\theta}^u(w|\vx_{<n})].
\label{eq:ar_combine}
\end{align}
In Sec. \ref{sec:exp}, we apply GFT to a wide spectrum of visual generative models, including diffusion, AR, and masked models, demonstrating its versatility.

\section{Connection with Other Guidance-Free Methods}
Previous attempts to remove guided sampling from visual generation mainly include distillation methods for diffusion models and alignment methods for AR models. Alongside GFT, these methods all transform the sampling distribution $p^s$ into simpler, learnable forms, differing mainly in how they decompose the sampling distribution and set up modeling targets (Table \ref{tab:comparison}).

\textbf{Guidance Distillation} \citep{Meng_2023_CVPR} is quite straightforward, it simply learns a single model to match the output of pretrained CFG targets using L2 loss:
\begin{equation}
\label{eq:CFG_distillation_loss}
\gL_\theta^\text{GD} = \left\|\epsilonv_\theta^s(\vx_t|\vc, s) - \left[(1\!+\!s)\epsilonv_\phi^c(\vx_t|\vc) \!-\! s \epsilonv_\phi^u(\vx_t)\right] \right\|_2^2,
\end{equation}

where $\epsilonv_\phi^u$ and $\epsilonv_\phi^c$ are pretrained models. $\gL_\theta^\text{GD}$ breaks down the sampling model into a linear combination of conditional and unconditional models, which can be separately learned. 

Despite being effective, Guidance distillation relies on pretrained CFG models as teacher models, and cannot be leveraged for from-scratch training. This results in an indirect, two-stage pipeline for learning guidance-free models. In comparison, our method unifies guidance-free training in one singular loss, allowing learning in an end-to-end style. 
Besides, GFT no longer requires learning an explicit conditional model $\epsilonv^c_\theta$. This saves training computation and VRAM usage. A detailed comparison is in Table \ref{tab:comparison}.

\begin{table}[t]
\caption{Model comparisons on the class-conditional ImageNet $256\times256$ benchmark.}
\label{tab:finetune_fid}
\centering
\resizebox{\linewidth}{!}{
\begin{tabular}{lcc}
\toprule
\multirow{2}{*}{\textbf{Model Type} } & \multicolumn{2}{c}{FID $\downarrow$} \\
    & w/o Guidance & w/ Guidance \\
\midrule
\multicolumn{3}{l}{\textbf{Diffusion Models}} \\
ADM~~\citep{adm}& 7.49  & 3.94 \\
LDM-4~~\citep{ldm}& 10.56 & 3.60 \\
U-ViT-H/2~~\citep{uvit}& --  & 2.29  \\
MDTv2-XL/2~~\citep{gao2023masked} & 5.06  & 1.58 \\
DiT-XL/2~~\citep{dit} & 9.34  & 2.11 \\
$\quad+$Distillation~~\citep{Meng_2023_CVPR}    & 2.11   & -- \\
$\quad+$\textbf{GFT (Ours)}    & \textbf{1.99}  & -- \\
\midrule
\multicolumn{3}{l}{\textbf{Autoregressive Models}} \\
VQGAN~~\citep{vqgan}& 15.78 & 5.20 \\
ViT-VQGAN~~\citep{vit-vqgan}& 4.17 & 3.04 \\
RQ-Transformer~~\citep{rq} & 7.55  & 3.80 \\
LlamaGen-3B~~\citep{llamagen} & 9.44  & 2.22 \\
$\quad+$CCA~~\citep{cca}    & 2.69  & -- \\
$\quad+$\textbf{GFT (Ours)}   & \underline{2.21}  & -- \\
VAR-d30~~\citep{var}        & 5.26 & 1.92 \\
$\quad+$CCA~~\citep{cca}    & 2.54 & -- \\
$\quad+$\textbf{GFT (Ours)}    & \textbf{1.91}  & -- \\
\midrule
\multicolumn{3}{l}{\textbf{Masked Models}} \\
MaskGIT~~\citep{maskgit}& 6.18  & -- \\
MAGVIT-v2~~\citep{magvit2}& 3.65   & 1.78 \\
MAGE~~\citep{mage}& 6.93  & -- \\
MAR-B~~\citep{mar}& 4.17    & 2.27 \\
$\quad+$\textbf{GFT (Ours)}   & \textbf{2.39}   & -- \\
\bottomrule
\end{tabular}
}
\end{table}

\textbf{Condition Contrastive Alignment} \citep{cca} constructs a preference pair for each image $\vx$ in the dataset and applies similar preference alignment techniques for language models \citep{dpo, nca} to fine-tune visual AR models:
\begin{equation}
\label{eq:CCA_loss}
\gL_\theta^\text{CCA} = -\!\log\!\sigma\left[r_\theta(\vx, \vc^p)\right]\!-\!\log\!\sigma \left[-r_\theta(\vx, \vc^n)\right],
\end{equation}
where $\vc^p$ is the preferred positive condition corresponding to the image $\vx$, $\vc^n$ is a negative condition randomly and independently sampled from the dataset. Given a conditional reference model $p_\phi^c$, the implicit reward $r_\theta$ is defined as
\begin{equation*}
    r_\theta (\vx, \vc):=\!\frac{1}{s} \log \frac{p_\theta^s (\vx|\vc)}{p_\phi^c(\vx|\vc)}.
\end{equation*}
CCA proves the optimal solution for solving Eq. (\ref{eq:CCA_loss}) is $r_\theta^* = \log \frac{p(\vx|\vc)}{p(\vx)}$, thus the convergence point for $p_\theta^s (\vx|\vc)$ also satisfies Eq. (\ref{eq:target_dist}). 

Both CCA and GFT train sampling model $p^s_\theta$ directly by combining it with another model to represent a learnable distribution. GFT leverages $\beta p^s_\theta + (1-\beta) p^u_\theta$ to represent standard conditional distribution $p(\vx|\vc)$, while CCA combines $p^s_\theta$ and pretrained $p_\phi(\vx|\vc)$ to represent the conditional residual $\log\frac{p(\vx|\vc)}{p(\vx)}$. They also differ in applicable areas. CCA is based on language alignment losses, which requires calculating model likelihood $\log p_\theta$ during training. This forbids its direct application to diffusion models, where calculating exact likelihood is infeasible. 

\begin{table}[t]
\caption{Model comparisons for zero-shot text-to-image generation on the COCO 2014 validation set.}
\label{tab:finetune_fid_T2I}
\resizebox{\linewidth}{!}{
\begin{tabular}{lcc}
\toprule
\multirow{2}{*}{\textbf{Text to Image Models}} & \multicolumn{2}{c}{FID $\downarrow$} \\
    & w/o Guidance & w/ Guidance \\
\midrule
GLIDE~~\citep{glide}& -- &  12.24 \\
LDM~~\citep{ldm}& -- &  12.63 \\
DALL$\cdot$E 2~~\citep{dalle2}& -- & 10.39 \\
Stable Diffusion 1.5~~\citep{ldm}& 22.55 & 7.87 \\
$\quad+$Distillation~~\citep{Meng_2023_CVPR} & 8.16   & -- \\
$\quad+$\textbf{GFT (Ours)}    & \textbf{8.10}  & -- \\
\bottomrule
\end{tabular}
}
\end{table}

\section{Experiments}
\label{sec:exp}

Our experiments aim to investigate:
\begin{enumerate}
\item GFT's effectiveness and efficiency in fine-tuning CFG models into guidance-free variants (Sec. \ref{sec:finetune_exp})
\item GFT's ability in training guidance-free models from scratch, compared with classic CFG training (Sec. \ref{sec:pretrain_exp})
\item GFT's capability of controlling diversity-fidelity trade-off through temperature parameter $\beta$. (Sec. \ref{sec:tradeoff_exp})
\end{enumerate}

\begin{figure*}[!th]
    \centering

    \begin{minipage}{0.3\linewidth}
    \centering
    \includegraphics[width=\columnwidth]{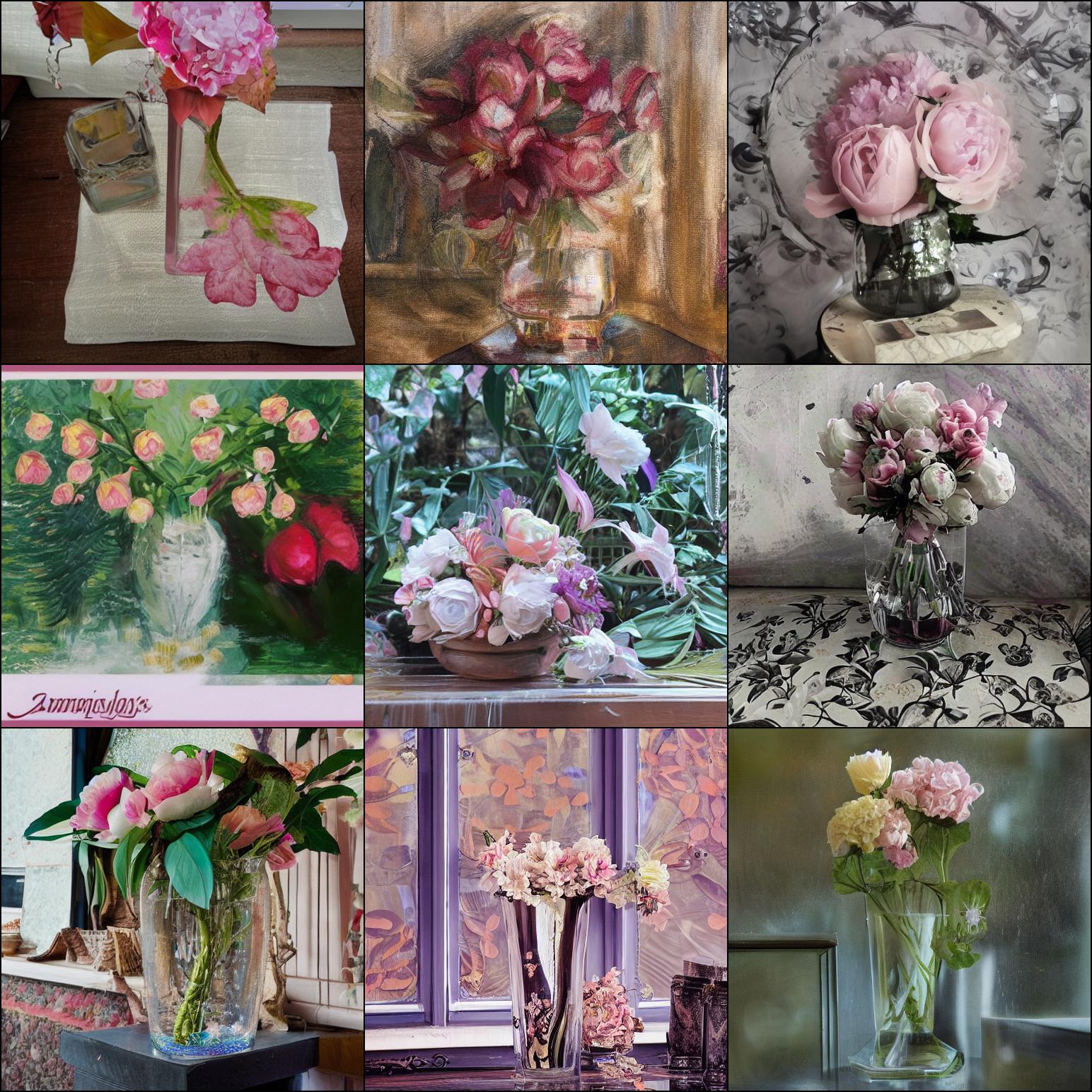}
    \vspace{0.2em}
        \small{Vanilla conditional generation (w/o CFG)}
    \end{minipage}
    \begin{minipage}{0.301\linewidth}
    \centering
    \includegraphics[width=\columnwidth]{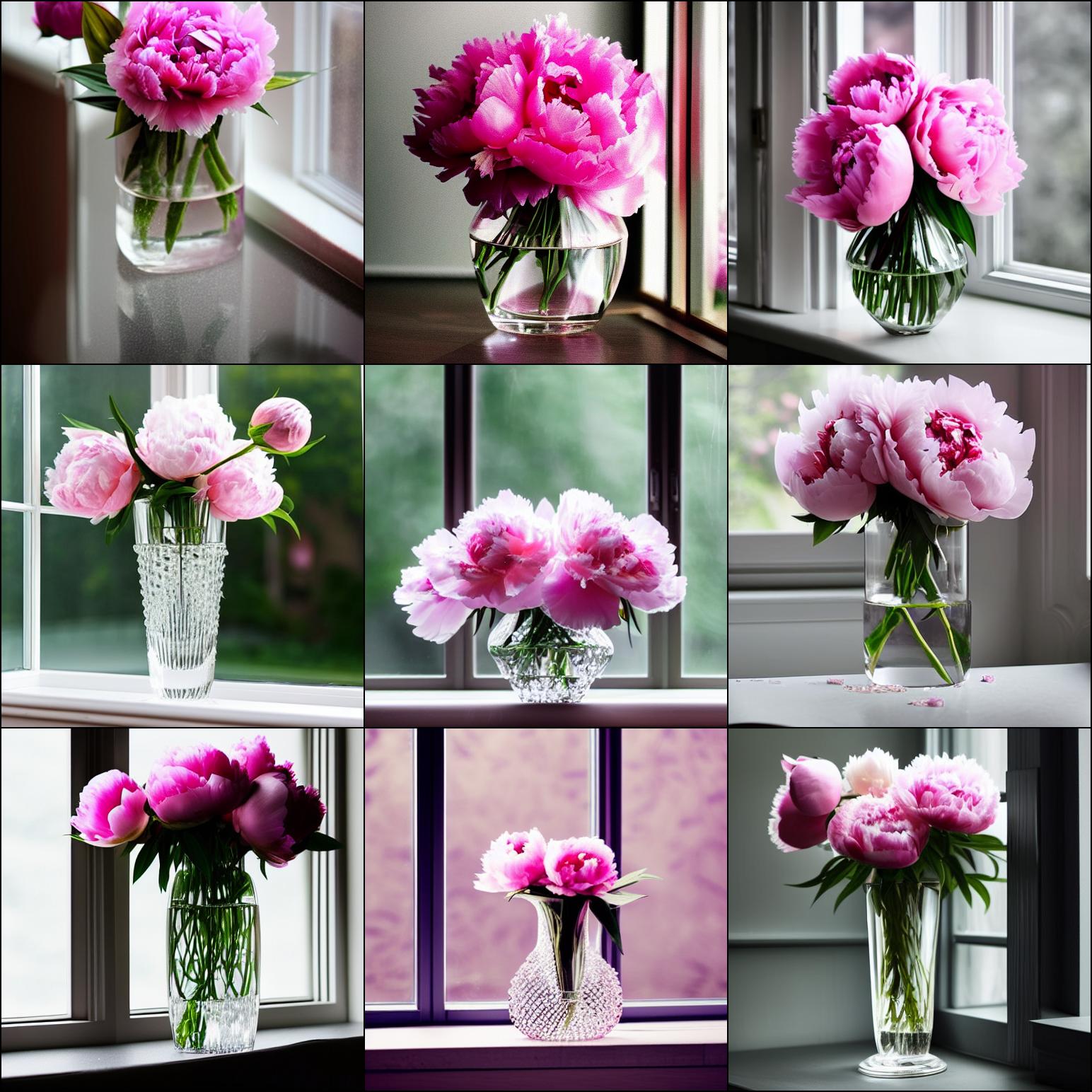}
        \vspace{0.2em}
        \small{w/ CFG}
    \end{minipage}
    \begin{minipage}{0.301\linewidth}
    \centering
    \includegraphics[width=\columnwidth]{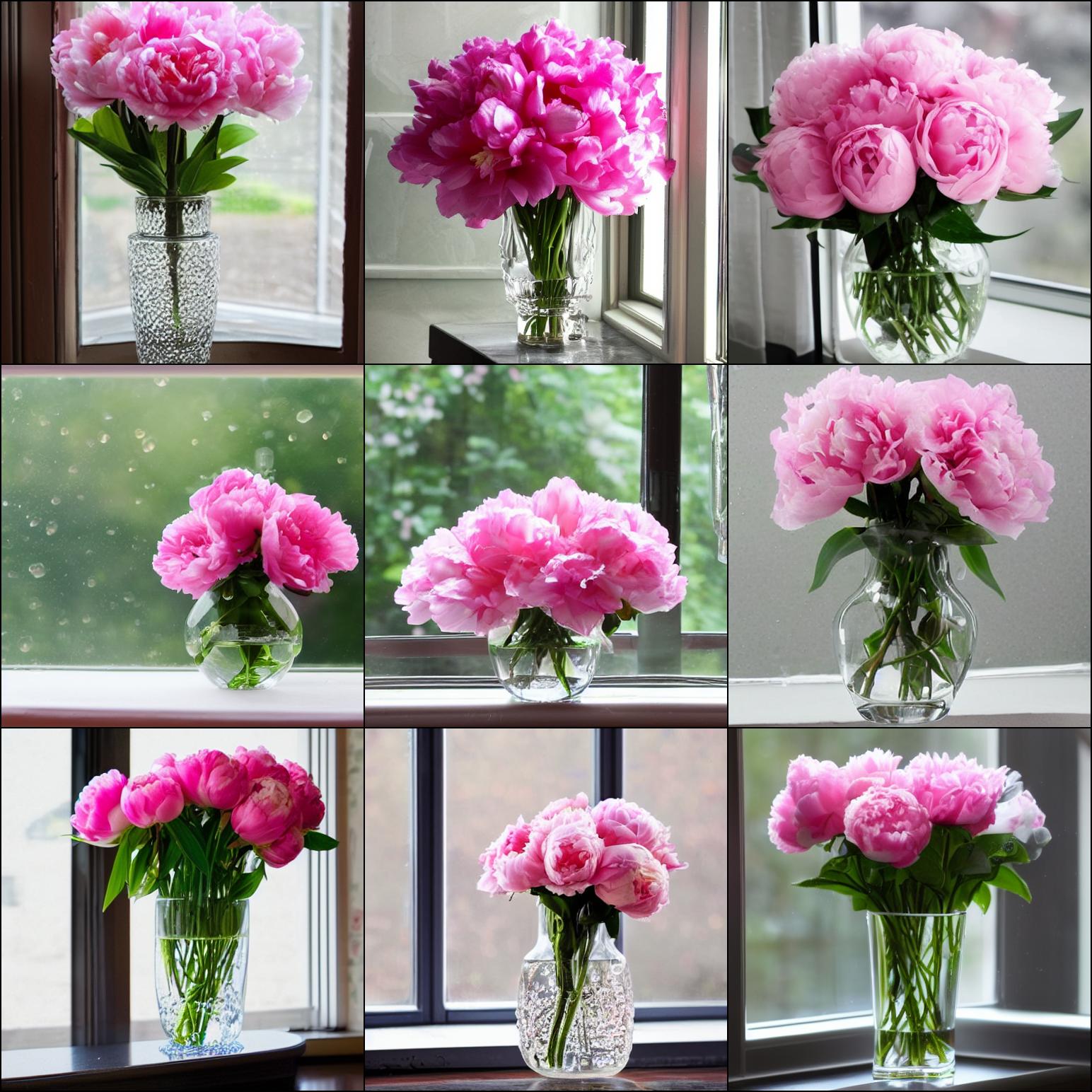}
    \vspace{0.2em}
    \small{ GFT (w/o Guidance)}
    \end{minipage}


\caption{Qualitative T2I comparison between vanilla conditional generation, GFT, and CFG on Stable Diffusion 1.5 with the prompt ``Elegant crystal vase holding pink peonies, soft raindrops tracing paths down the window behind it''. More examples are in Figure \ref{fig:more_figure_comparision}.}
    \label{fig:cfg_compare}
\end{figure*}

\subsection{Experimental Setups}

\paragraph{Tasks \& Models.} 
We evaluate GFT in both class-to-image (C2I) and text-to-image (T2I) tasks. For C2I, we experiment with diverse architectures: DiT \citep{dit} (transformer-based latent diffusion model), MAR \citep{mar} (masked-token prediction model with diffusion heads), and autoregressive models: VAR \citep{var} and LlamaGen \citep{llamagen}. For T2I, we use Stable Diffusion 1.5 \citep{ldm}, a text-to-image model based on the U-Net architecture \citep{unet}, to provide a comprehensive evaluation of GFT’s performance across various conditioning modalities. 
All these models rely on guided sampling as a critical component.

\paragraph{Training \& Evaluation.}
We train C2I models on ImageNet-256x256 \citep{imagenet}. For T2I models, we use a subset of the LAION-Aesthetic $5+$ \citep{laion}, consisting of 18 million image-text pairs. Our codebases are directly modified from the official CFG implementation of each respective baseline, keeping most hyperparameters consistent with CFG training. 
We use official OPENAI evaluation scripts to evaluate our C2I models. 
For T2I models, we evaluate our model on zero-shot COCO 2014 \citep{coco}. The training and evaluation details for each model can be found in Appendix \ref{Implementation_Details}. 

\subsection{Make CFG Models Guidance-Free}
\label{sec:finetune_exp}

\paragraph{Method Effectiveness.} 
In Table \ref{tab:finetune_fid} and \ref{tab:finetune_fid_T2I}, we apply GFT to fine-tune a wide spectrum of visual generative models. With less than 5\% pretraining computation, the fine-tuned models achieve comparable FID scores to CFG while being 2× faster in sampling. Figure \ref{fig:cfg_compare} visually demonstrates this quality improvement.

\paragraph{Comparison with other guidance-free approaches.} GFT achieves comparable performance to guidance distillation (designed for diffusion models) and outperforms AR alignment method (Table \ref{tab:finetune_fid} and \ref{tab:finetune_fid_T2I}). Notably, GFT demonstrates superior efficiency compared to both methods (Table \ref{tab:comparison}). We attribute this effectiveness to GFT's end-to-end training style, and to the nice convergence property of its maximum likelihood training objective.

\paragraph{Finetuning Efficiency} Figure \ref{fig:FID_DiT_XL_FT} tracks the FID progression of DiT-XL/2 during fine-tuning. The guidance-free FID rapidly improves from 9.34 to 2.22 in the first epoch, followed by steady optimization. After three epochs, our model achieves a better FID than the CFG baseline (2.05 vs 2.11). This computational is almost negligible compared with pretraining, demonstrating GFT's efficiency.

\begin{figure}[!th]
\centering
\includegraphics[width=0.95\linewidth]{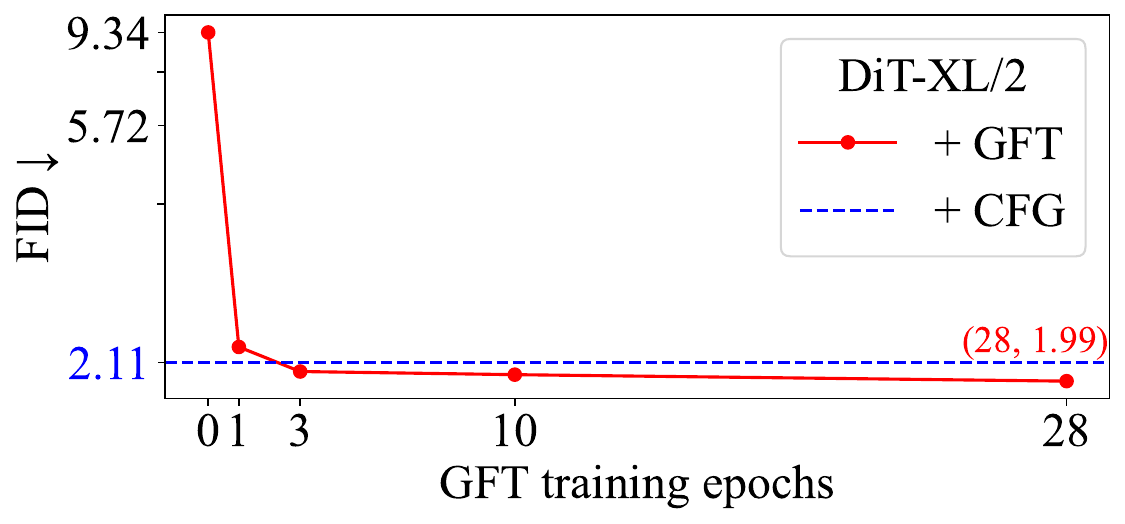}
\vspace{-2mm}
\caption{Efficient convergence of FID scores for GFT using DiT-XL/2 model across training epochs.
}
\label{fig:FID_DiT_XL_FT}  
\end{figure}

\subsection{Building Guidance-Free Models from Scratch}
\label{sec:pretrain_exp}

\begin{table*}[ht]
\centering
\resizebox{0.88\textwidth}{!}{
\begin{tabular}{l|c|cc|cc|cc|cc}
\hline
\textbf{\multirow{3}{*}{Method}} & \textbf{\multirow{3}{*}{Guidance}} & \multicolumn{2}{c|}{\textbf{DiT-B/2}} & \multicolumn{2}{c|}{\textbf{MAR-B}} & \multicolumn{2}{c|}{\textbf{LlamaGen-L} } & \multicolumn{2}{c}{\textbf{VAR-d16}} \\
& & \multicolumn{2}{c|}{\citep{dit}} & \multicolumn{2}{c|}{\citep{mar}} & \multicolumn{2}{c|}{\citep{llamagen}} & \multicolumn{2}{c}{\citep{var}} \\
& & \textbf{FID $\downarrow$}  & \textbf{IS $\uparrow$} & \textbf{FID $\downarrow$} & \textbf{IS $\uparrow$} & \textbf{FID $\downarrow$} & \textbf{IS $\uparrow$} & \textbf{FID $\downarrow$} & \textbf{IS $\uparrow$} \\
\hline
\textbf{Base} & \textbf{w/o} & 44.8 & 30.7 & 4.17 & 175.4 & 19.0 & 64.7 & 11.97 & 105.5 \\
$\quad+$\textbf{CFG} & w/ & 9.72 & 161.5 & \textbf{2.27} & 260.7 & 3.06 & 257.1 & 3.36 & \textbf{280.1} \\
$\quad+$\textbf{GFT} & \textbf{w/o} & 10.9 & 128.5 & 2.39 & 264.7 & 2.88 & 238.4 & \textbf{3.28} & 251.3 \\
\textbf{GFT} & \textbf{w/o} & \textbf{9.04} & \textbf{166.6} & \textbf{2.27} & \textbf{279.5} & \textbf{2.52} & \textbf{270.5} & 3.42 & 254.8 \\
\hline
\end{tabular}
}
\caption{
Performance comparison between GFT from-scratch training, CFG, and GFT fine-tuning variants across different model architectures. GFT and the base model are trained for the same number of epochs.
}
\label{tab:fid_is_comparison_0}
\end{table*}

Training Guidance-Free Models from scratch is more tempting than the two-stage pipeline adopted by Sec \ref{sec:finetune_exp}. However, this is also more challenging due to higher requirements for the algorithm's stability and convergence speed. 
We investigate this by comparing from-scratch GFT training with classic supervised training using CFG across various architectures, maintaining consistent training epochs. We mainly focus on smaller models due to computational constraints.

\paragraph{Performance.} 
Table \ref{tab:fid_is_comparison_0} shows that GFT models trained from scratch outperform CFG baselines across DiT-B/2, MAR-B, and LlamaGen-L models, while reducing evaluation costs by 50\%. Notably, these from-scratch models outperform their fine-tuned counterparts, demonstrating the advantages of direct guidance-free training. 

\paragraph{Training stability.} An informative indicator of an algorithm's stability and scalability is its loss convergence speed. With consistent hyperparameters, we find GFT convergences at least as fast as CFG for both diffusion and autoregressive modeling (Figure \ref{fig:loss_comparison}). Direct loss comparison is valid as both methods optimize the same objective: the conditional modeling loss for the dataset distribution. The only difference is that the conditional model for CFG is a single end-to-end model, while for GFT it is constructed as a linear interpolation of two model outputs.

Based on the above observations, we believe that GFT is at least as stable and reliable as CFG algorithms, providing a new training paradigm and a viable alternative for visual generative models.

\begin{figure}
	\begin{minipage}{0.95\linewidth}
		\centering
			\includegraphics[width=\linewidth]{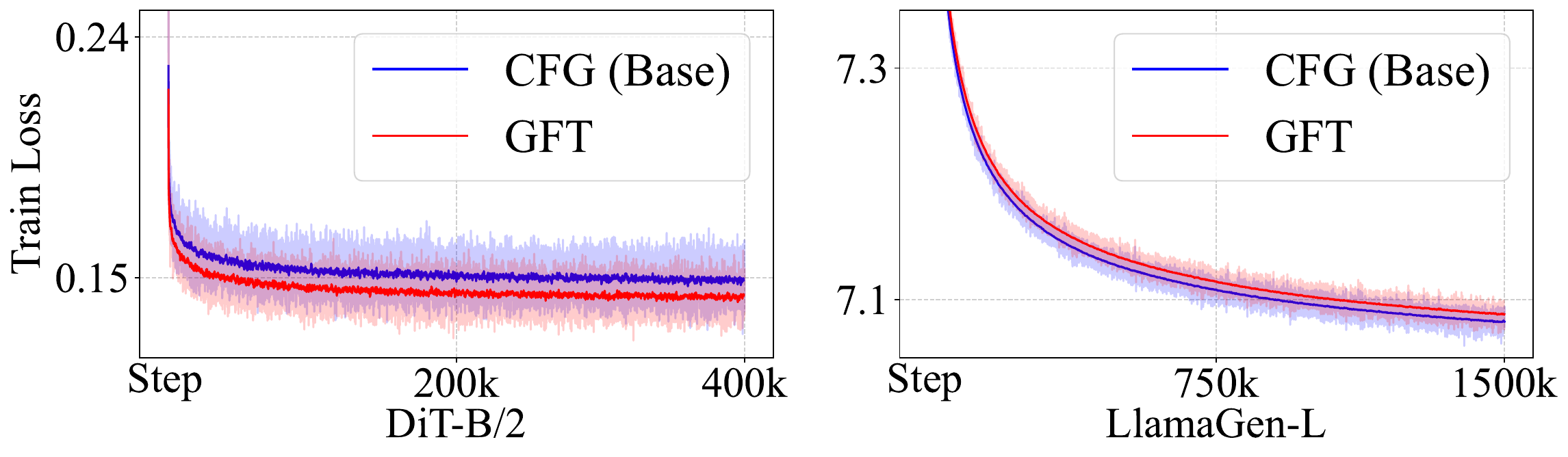}\\
	\end{minipage}
	\caption{
Comparison of convergence speed between GFT and CFG on diffusion and autoregressive models in from-scratch training.}
	\label{fig:loss_comparison}
\end{figure}

\begin{figure}[!th]
\centering
\includegraphics[width=0.95\linewidth]{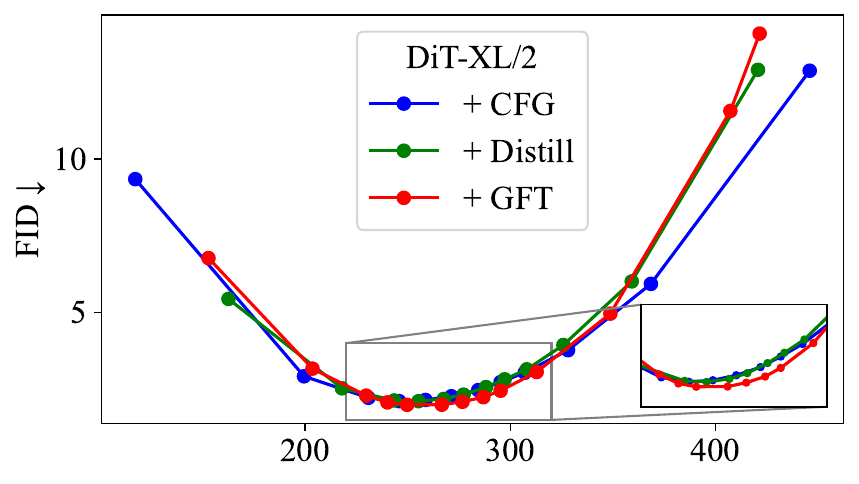}
\vspace{-2mm}  
\\
\includegraphics[width=0.95\linewidth]{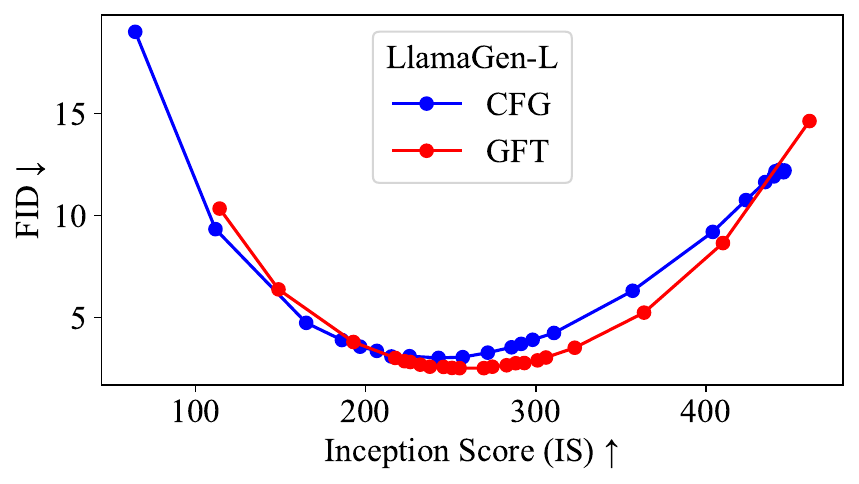}
\vspace{-2mm}
\caption{
FID-IS trade-off comparisons on ImageNet. Upper: DiT-XL/2 with GFT (fine-tuned), CFG, and Guidance Distillation. Lower: LlamaGen-L with GFT (trained from scratch) and CFG.
}
\label{fig:combined_tradeoff}
\end{figure}

\begin{figure}[!th]
\centering
\includegraphics[width=0.95\linewidth]{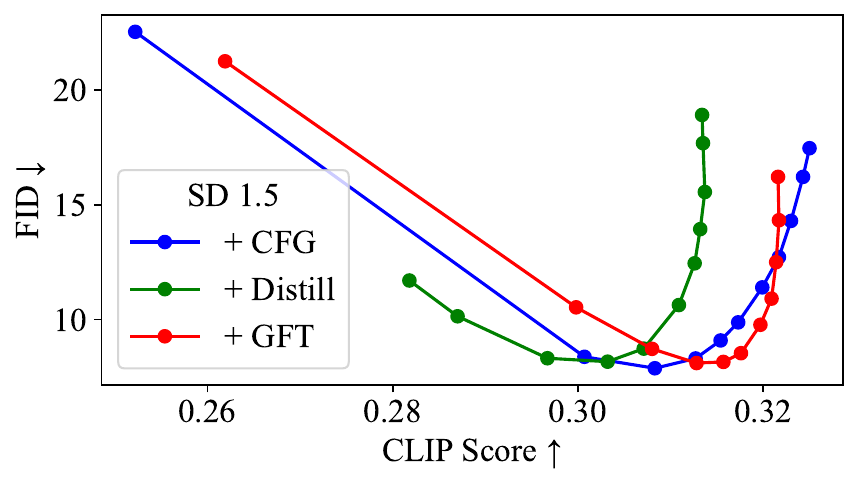}
\vspace{-2mm}
\caption{
FID-CLIP trade-off comparison on COCO-2014 validation set. Methods compared using Stable Diffusion 1.5.
}
\label{fig:IS_FID_tradeoff_SD}
\end{figure}

\subsection{Sampling Temperature for Visual Generation}
\label{sec:tradeoff_exp}
A key advantage of CFG is its flexible sampling temperature for diversity-fidelity trade-offs. Our results demonstrate that GFT models share this capability.

We evaluate diversity-fidelity trade-offs across various models, with FID-IS trade-off for c2i models and FID-CLIP trade-off for t2i models. Results for DiT-XL/2 (fine-tuning), LlamaGen-L (from-scratch training) and Stable Diffusion 1.5 (fine-tuning) are shown in Figures \ref{fig:combined_tradeoff} and \ref{fig:IS_FID_tradeoff_SD}, with additional trade-off curves provided in Appendix \ref{appendix:tradeoff}. For DiT-XL/2 and Stable Diffusion 1.5, we also compare GFT with Guidance Distillation, showing that GFT achieves results comparable to CFG while outperforming Guidance Distillation on CLIP scores.

Figure \ref{fig:various beta} shows how adjusting temperature $\beta$ produces effects similar to adjusting CFG's scale $s$. This similarity results from both methods aiming to model the same distribution (Eq. \ref{eq:target_dist}). The key difference is that GFT directly learns a series of sampling distributions controlled by $\beta$ through training, while CFG modifies the sampling process to achieve comparable results.

\section{Conclusion}
In this work, we proposed Guidance-Free Training (GFT) as an alternative to guided sampling in visual generative models, achieving comparable performance to Classifier-Free Guidance (CFG). GFT reduces sampling computational costs by 50\%. The method is simple to implement, requiring minimal modifications to existing codebases. Unlike previous distillation-based methods, GFT enables direct training from scratch.

Our extensive evaluation across multiple types of visual models demonstrates GFT's effectiveness. The approach maintains high sample quality while offering flexible control over the diversity-fidelity trade-off through temperature adjustment. GFT represents an advancement in making high-quality visual generation more efficient and accessible.

\clearpage
\newpage
\newpage
\section*{Acknowledgement}
We thank Huanran Chen, Xiaoshi Wu, Cheng Lu, Fan Bao, Chengdong Xiang, Zhengyi Wang, Chang Li and Peize Sun for the discussion. 
This work was supported by the NSFC Project (No. 62376131, 92270001, 92370124, 92248303) and
the High Performance Computing Center, Tsinghua University. J.Z is also supported by the XPlorer Prize.

\section*{Impact Statement}
Our Guidance-Free Training (GFT) method significantly reduces the computational costs of visual generative models by eliminating the need for dual inference during sampling, contributing to more sustainable AI development and reduced environmental impact. However, since our method accelerates the sampling process of generative models, it could potentially be misused to create harmful content more efficiently, emphasizing the importance of establishing appropriate safety measures and deploying these models responsibly with proper oversight mechanisms.

\nocite{langley00}

\bibliography{example_paper}
\bibliographystyle{icml2025}

\newpage
\appendix
\onecolumn

\section{Related Work}

\paragraph{Visual generation model with guidance.} 
Visual generative modeling has witnessed significant advancements in recent years. Recent explicit-likelihood approaches can be broadly categorized into diffusion-based models \citep{sohl2015deep, song2019generative, ddpm, song2020score, adm, vdm, ldm, dalle2, imagen, edm, uvit, dit, sd3, showo}, auto-regressive models \citep{igpt, vqgan, dalle1, vit-vqgan, var, Chameleon, llamagen, star, var-clip, hart}, and masked-prediction models \citep{maskgit, magvit, muse, mar, titok, fluid}. The introduction of guidance techniques has substantially improved the capabilities of these models. These include classifier guidance \citep{adm}, classifier-free guidance \citep{cfg}, energy guidance \citep{chung2022diffusion, zhao2022egsde, cep,song2023loss}, and various advanced guidance methods \citep{kim2022refining, hong2023improving, kynkaanniemi2024applying, autoguidance, cfg++, koulischer2024dynamic, shenoy2024gradient}.

\paragraph{Guidance distillation.}
To address the computational overhead introduced by classifier-free guidance (CFG), One widely used approach to remove CFG is guidance distillation \citep{Meng_2023_CVPR}, where a student model is trained to directly learn the output of a pre-trained teacher model that incorporates guidance. 
This idea of guidance distillation has been widely adopted in methods aimed at accelerating diffusion models \citep{lcm, dmd, sdxl-lightning, sid-lsg}. By integrating the teacher model's guided outputs into the training process, these approaches achieve efficient few-step generation without guidance.\

\paragraph{Alternative Methods for Building Guidance-free Models.}
Recent studies in diffusion models show that perceptual losses \citep{lin2023diffusion}, score-based distillation \citep{add, sid, ladd, sida}, and consistency models \citep{cm, ect, scm} can also achieve comparable FID scores to CFG. For auto-regressive models, Condition Contrastive Alignment \citep{cca} could enhance guidance-free performance through alignment \citep{dpo,nca} in a self-contrastive manner.

\clearpage
\newpage

\section{Proof of Theorem 1}
\label{sec:proof}
We first copy the training objective in Eq. (\ref{Eq:practical_loss}) and Eq. (\ref{Eq:GFT_diffusion_loss_beta}).

\begin{equation}
\label{Eq:proofeq1}
\gL_\theta^\text{raw}  =  \E_{p(\vx, \vc), t,\epsilonv,\beta}\left[
        \|\beta \epsilon_\theta^s(\vx_t|\vc,\beta) + (1-\beta) \epsilon_\theta^u(\vx_t) - \epsilonv\|_2^2
    \right].
\end{equation}
\begin{equation}
\label{Eq:proofeq2}
\gL_\theta^\text{practical}  =\E_{p(\vx, \vc_\varnothing), t,\epsilonv,\beta} \|\beta  \epsilon_\theta^s(\vx_t|\vc_\varnothing,\beta ) + (1-\beta) \mathrm{\mathbf{sg}}[\epsilon_{\theta}^u(\vx_t| \varnothing, 1)] - \epsilonv\|_2^2.
\end{equation}
\setcounter{theorem}{0}

\begin{theorem}[GFT Optimal Solution]
\label{thrm:GFT}
Given unlimited model capacity and training data, the optimal $\epsilon_{\theta^*}^s$ for optimizing Eq. (\ref{Eq:proofeq1}) and Eq. (\ref{Eq:proofeq2}) are the same. Both satisfy
\begin{equation*}
\epsilon_{\theta^*}^s(\vx_t|\vc,\beta) = -\sigma_t\left[ \frac{1}{\beta}  \nabla_{\vx_t}\log p_t(\vx_t|\vc) - (\frac{1}{\beta}-1)  \nabla_{\vx_t}\log p_t(\vx_t)\right]
\end{equation*}
\end{theorem}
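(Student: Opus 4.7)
The plan is to exploit the classical denoising identity (Eq.~\ref{Eq:diffusion_score}): the minimizer of a squared-error denoising loss equals the conditional expectation of the noise, which is $-\sigma_t$ times the score of $p_t$ conditional on the conditioning variables. I would first analyze the raw objective (Eq.~\ref{Eq:GFT_diffusion_loss_beta}) by viewing the combined quantity $g_\theta(\vx_t, \vc, \beta, t) := \beta\, \epsilon_\theta^s + (1-\beta)\, \epsilon_\theta^u$ as an implicit conditional denoiser. A pointwise-minimization argument over each $(\vx_t, \vc, t, \beta)$ shows that the inner squared error is minimized precisely when
\begin{equation*}
g_{\theta^*}(\vx_t,\vc,\beta,t) \;=\; \E[\epsilonv \mid \vx_t, \vc, t] \;=\; -\sigma_t\, \nabla_{\vx_t} \log p_t(\vx_t \mid \vc).
\end{equation*}

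Next I would pin down $\epsilon_{\theta^*}^u$. Because $\epsilon_\theta^u$ depends on $\vx_t$ only, evaluating the raw loss at $\beta = 0$ collapses it to $\E[\|\epsilon_\theta^u - \epsilonv\|_2^2]$ with $\vc$ marginalised, whose minimizer is $\epsilon_{\theta^*}^u(\vx_t) = -\sigma_t\, \nabla_{\vx_t} \log p_t(\vx_t)$. For the practical objective (Eq.~\ref{Eq:practical_loss}), the same identification is enforced by the condition-masking branch of Algorithm~\ref{alg:gft} (line~8 at $\vc = \varnothing$ and $\beta = 1$), which acts as a pure unconditional denoising loss on the network output $\epsilon_\theta^u(\vx_t \mid \varnothing, 1)$. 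Substituting $\epsilon_{\theta^*}^u$ into the linear relation $\beta\, \epsilon^s_{\theta^*} + (1-\beta)\, \epsilon^u_{\theta^*} = -\sigma_t\, \nabla_{\vx_t} \log p_t(\vx_t \mid \vc)$ and solving this $1\times 1$ linear system in $\epsilon^s_{\theta^*}$ yields the stated closed form after a one-line algebraic rearrangement.

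To reconcile the two losses, I would then argue that the stop-gradient $\mathrm{\mathbf{sg}}[\cdot]$ in the practical loss does not shift the optimum of $\epsilon_\theta^s$: since the forward value of $\epsilon_\theta^u$ already equals its unconditional optimum and the gradient flows only through $\epsilon_\theta^s$, the first-order stationarity condition for $\epsilon_\theta^s$ is identical to the raw-loss condition with $\epsilon_\theta^u$ frozen at the unconditional score. Hence both objectives share the same $\epsilon_{\theta^*}^s$.

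The main obstacle is the bookkeeping around $\epsilon_\theta^u$: because the functional class of $\epsilon_\theta^s$ is rich enough to absorb any choice of $\epsilon_\theta^u$ whenever $\beta > 0$, the pointwise-optimality system is, taken in isolation, underdetermined. One has to argue carefully that the $\beta \to 0$ boundary of the raw loss, together with the explicit $(\vc = \varnothing, \beta = 1)$ masking path in the practical loss, uniquely pins $\epsilon_\theta^u$ to the unconditional score. Once this identification is secured, what remains is a purely algebraic solve of the linear combination, so essentially all the mathematical content of the theorem lives in justifying this step.
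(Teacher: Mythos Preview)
Your proposal is correct and follows essentially the same route as the paper: identify the combined output $\beta\,\epsilon_\theta^s+(1-\beta)\,\epsilon_\theta^u$ with the conditional score via the denoising identity, pin $\epsilon_\theta^u$ to the unconditional score using the boundary case ($\beta\to 0$ for the raw loss, the $\vc_\varnothing=\varnothing$ masking branch for the practical loss), then solve the resulting linear relation for $\epsilon_\theta^s$ and observe that the stop-gradient leaves this fixed point unchanged. Your explicit discussion of why the system would be underdetermined without the boundary/masking identification of $\epsilon_\theta^u$ is a useful addition; the paper's proof takes the same steps but states them more tersely.
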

\begin{proof}
The proof is quite straightforward. 

First consider the unconditional part of the model. Let $\beta=1$ in $\gL_\theta^\text{raw}$, we have
\begin{equation*}
    \gL_\theta^\text{raw}  =  \E_{p(\vx, \vc), t,\epsilonv,\beta=1}\left[
        \| \epsilon_\theta^u(\vx_t) - \epsilonv\|_2^2
    \right],
\end{equation*}
which is standard unconditional diffusion loss. According to Eq. (\ref{Eq:diffusion_score}) we have
\begin{equation}
\label{eq:proof_uncond}
    \epsilon_{\theta^*}^u(\vx_t) = -\sigma_t \nabla_{\vx_t}\log p_t(\vx_t)
\end{equation}
Then we prove for $\forall \beta \in (0,1]$, stopping the unconditional gradient does not change this optimal solution. Taking derivatives of $\gL_\theta^\text{practical}$ we have:
\begin{align*}
    \nabla_\theta \gL_\theta^\text{practical}(\vc_\varnothing=\varnothing) &=  \E_{p(\vx), t,\epsilonv,\beta} \nabla_\theta \|\beta  \epsilon_\theta^s(\vx_t|\varnothing,1 ) + (1-\beta) \mathrm{\mathbf{sg}}[\epsilon_{\theta}^u(\vx_t| \varnothing, 1)] - \epsilonv\|_2^2 \\
    &=\E_{p(\vx), t,\epsilonv,\beta} 2\beta [\nabla_\theta \epsilon_\theta^s(\vx_t|\varnothing,1 )]  \|\beta  \epsilon_\theta^s(\vx_t|\varnothing,1 ) + (1-\beta) \epsilon_{\theta}^u(\vx_t| \varnothing, 1) - \epsilonv\|_2  \\
    &=\E_{p(\vx), t,\epsilonv,\beta} 2\beta [\nabla_\theta \epsilon_\theta^s(\vx_t|\varnothing,1 )]  \|  \epsilon_\theta^s(\vx_t|\varnothing,1 )  - \epsilonv\|_2  \\
    &=[2\E \beta ] \nabla_\theta \E_{p(\vx, \vc), t,\epsilonv,\beta=1}\left[\| \epsilon_\theta^u(\vx_t) - \epsilonv\|_2^2\right] \\
    &=[2\E \beta ] \nabla_\theta \gL_\theta^\text{raw} (\beta=1) \\
\end{align*}
Since $[2\E \beta ]$ is a constant, this does not change the convergence point of $\gL_\theta^\text{raw}$. The optimal unconditional solution for $\gL_\theta^\text{practical}$ remains the same.

For the conditional part of the model, since both $\gL_\theta^\text{raw}$ and $\gL_\theta^\text{practical}$ are standard conditional diffusion loss, we have

\begin{equation*}
    \beta \epsilon_{\theta^*}^s(\vx_t|\vc,\beta) + (1-\beta) \epsilon_{\theta^*}^u(\vx_t) = -\sigma_t \nabla_{\vx_t}\log p_t(\vx_t|\vc)
\end{equation*}

Combining Eq. (\ref{eq:proof_uncond}), we have
\begin{equation*}
\epsilon_{\theta^*}^s(\vx_t|\vc,\beta) = -\sigma_t\left[ \frac{1}{\beta}  \nabla_{\vx_t}\log p_t(\vx_t|\vc) - (\frac{1}{\beta}-1)  \nabla_{\vx_t}\log p_t(\vx_t)\right].
\end{equation*}

Let $s= \frac{1}{\beta} - 1$, we can see that GFT models the same sampling distribution as CFG (Eq. \ref{eq:CFG}).
\end{proof}

\clearpage
\newpage

\section{Additional Experiment Results.}
\label{appendix:tradeoff}

\begin{figure}[H] 
\centering
\begin{minipage}{0.49\linewidth}
\centering
\includegraphics[width=0.95\columnwidth]{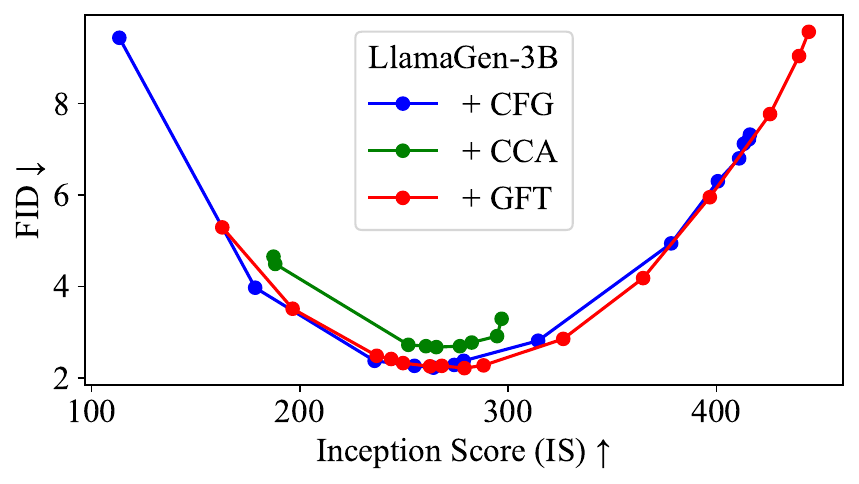}
\end{minipage}
\hfill
\begin{minipage}{0.49\linewidth}
\centering
\includegraphics[width=0.95\columnwidth]{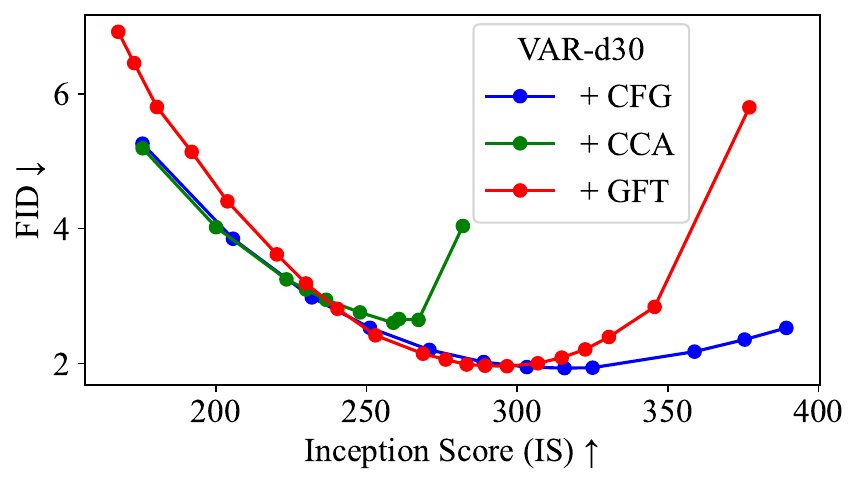}
\end{minipage}
\\[0.5em] 
\begin{minipage}{0.49\linewidth}
\centering
\includegraphics[width=0.95\columnwidth]{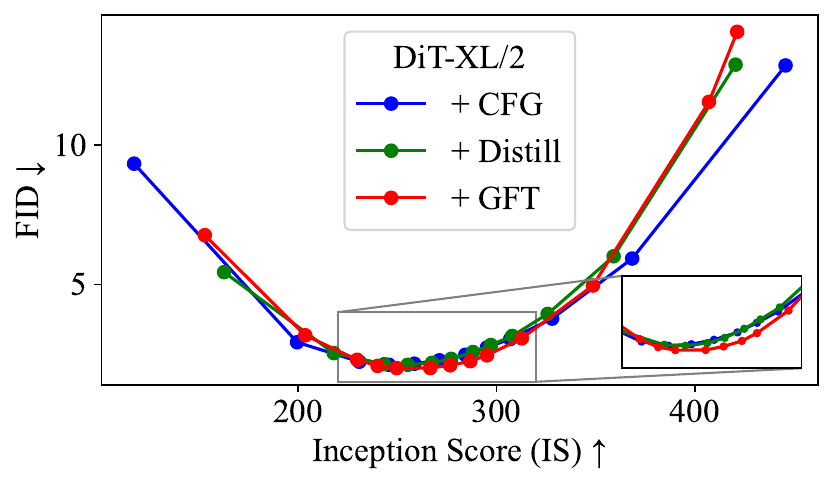}
\end{minipage}
\hfill
\begin{minipage}{0.49\linewidth}
\centering
\includegraphics[width=0.95\columnwidth]{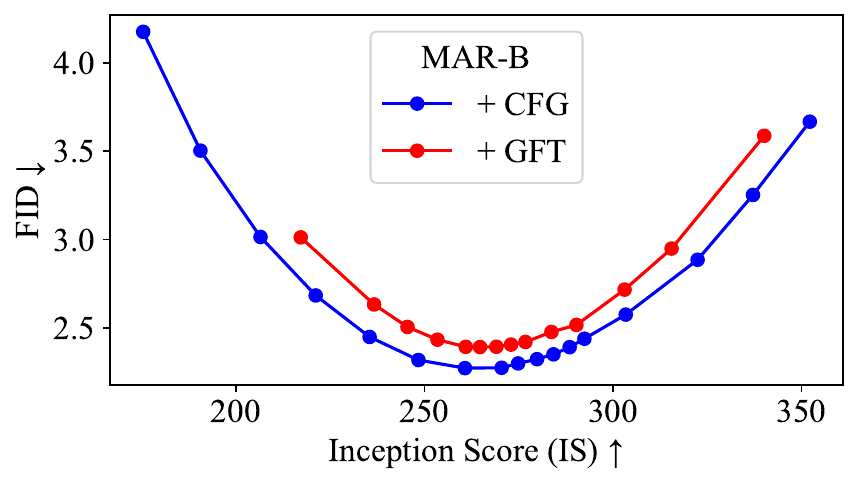}
\end{minipage}
\caption{FID-IS trade-off comparison in fine-tuning experiments.}
\label{fig:ft_tradeoff}
\end{figure}

\begin{figure}[H] 
\centering
\begin{minipage}{0.49\linewidth}
\centering
\includegraphics[width=0.95\columnwidth]{figures/IS_FID_tradeoff_LlamaGen_L.pdf}
\end{minipage}
\hfill
\begin{minipage}{0.49\linewidth}
\centering
\includegraphics[width=0.95\columnwidth]{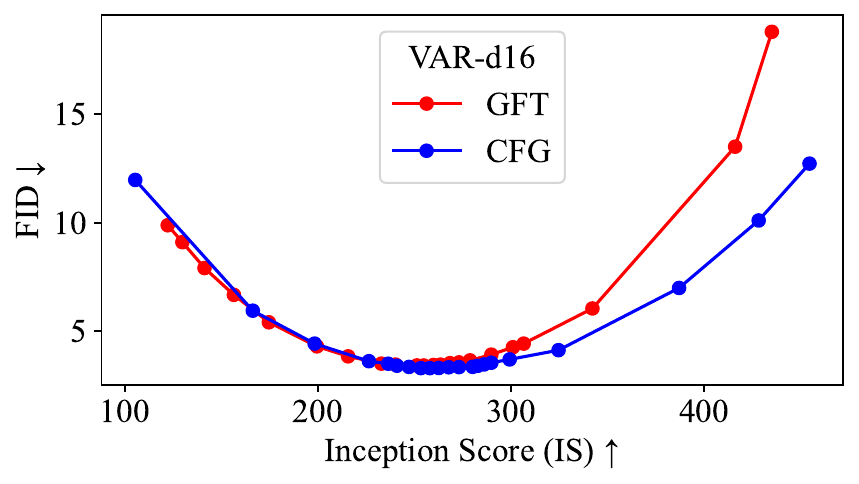}
\end{minipage}
\\[0.5em] 
\begin{minipage}{0.49\linewidth}
\centering
\includegraphics[width=0.95\columnwidth]{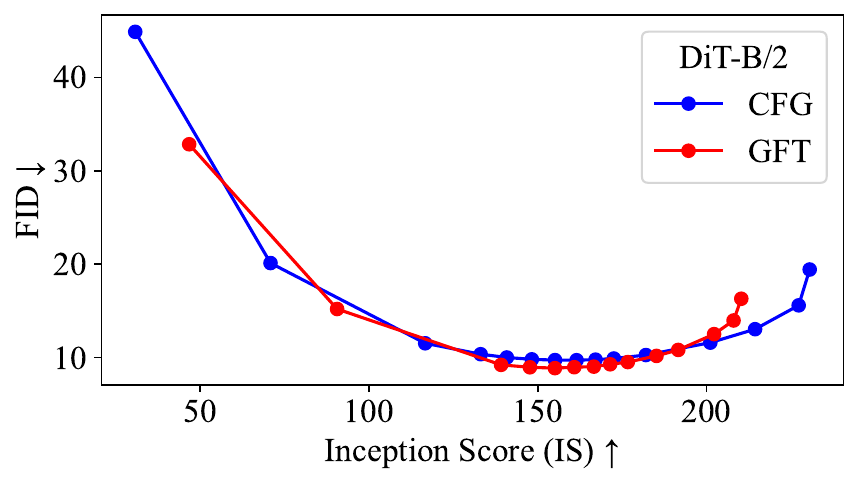}
\end{minipage}
\hfill
\begin{minipage}{0.49\linewidth}
\centering
\includegraphics[width=0.95\columnwidth]{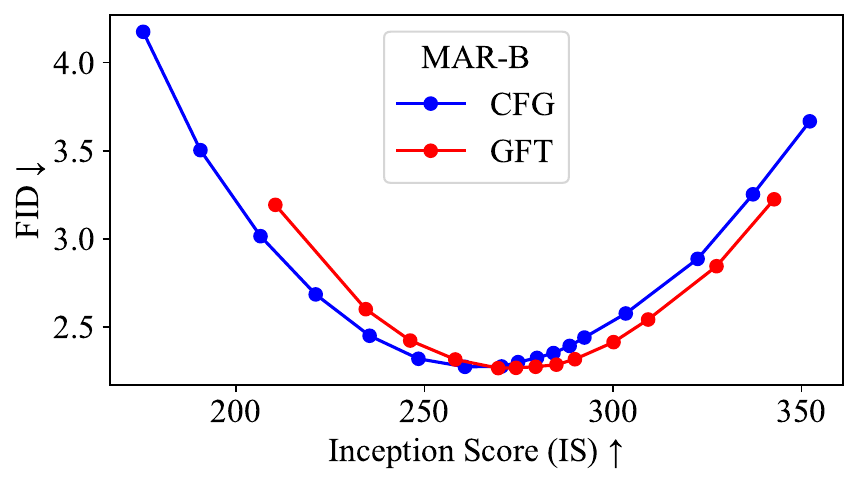}
\end{minipage}
\caption{FID-IS trade-off comparison in from-scratch-training experiments.}
\label{fig:scatch_tradeoff}
\end{figure}

\clearpage
\newpage

\begin{figure}[H] 
\centering
\begin{minipage}{0.49\linewidth}
\centering
\includegraphics[width=0.95\columnwidth]{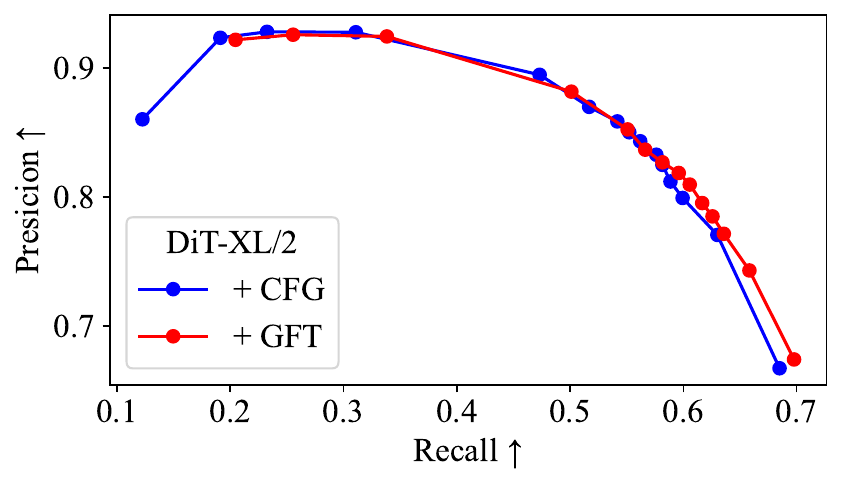}
\end{minipage}
\hfill
\begin{minipage}{0.49\linewidth}
\centering
\includegraphics[width=0.95\columnwidth]{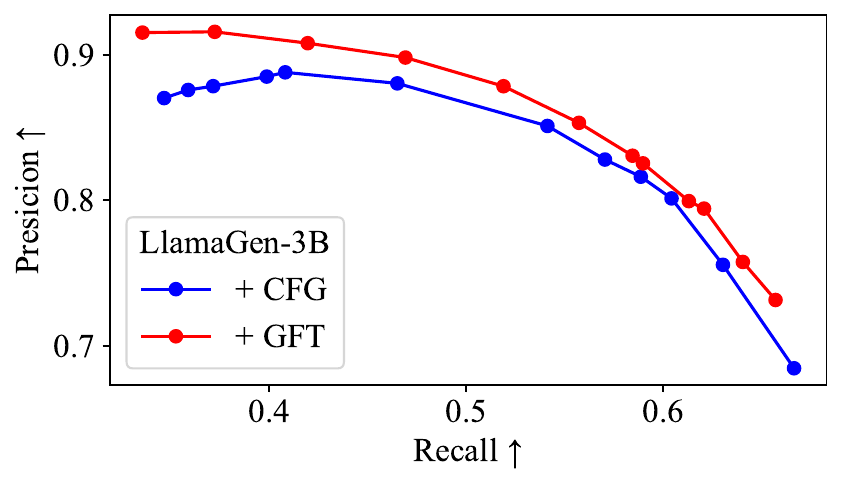}
\end{minipage}
\caption{Precision-Recall trade-off comparison for DiT and LlamaGen  in fine-tuning experiments.}
\label{fig:scatch_tradeoff}
\end{figure}

\begin{figure}[H] 
\centering
\begin{minipage}{0.49\linewidth}
\centering
\includegraphics[width=0.95\columnwidth]{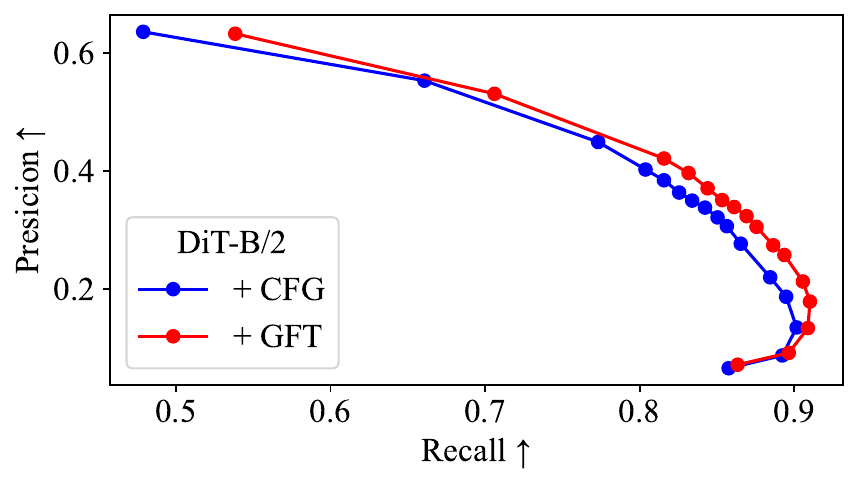}
\end{minipage}
\hfill
\begin{minipage}{0.49\linewidth}
\centering
\includegraphics[width=0.95\columnwidth]{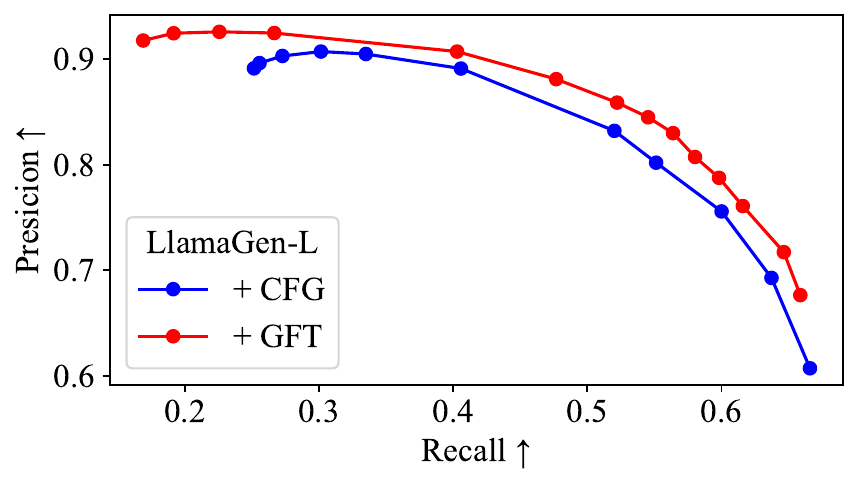}
\end{minipage}
\caption{Precision-Recall trade-off comparison for DiT and LlamaGen  in from-scratch-training experiments.}
\label{fig:scatch_tradeoff}
\end{figure}

\clearpage
\newpage

\begin{figure*}[h]
\centering
\begin{minipage}{0.246\linewidth}
    \centering
    \includegraphics[width=\linewidth]{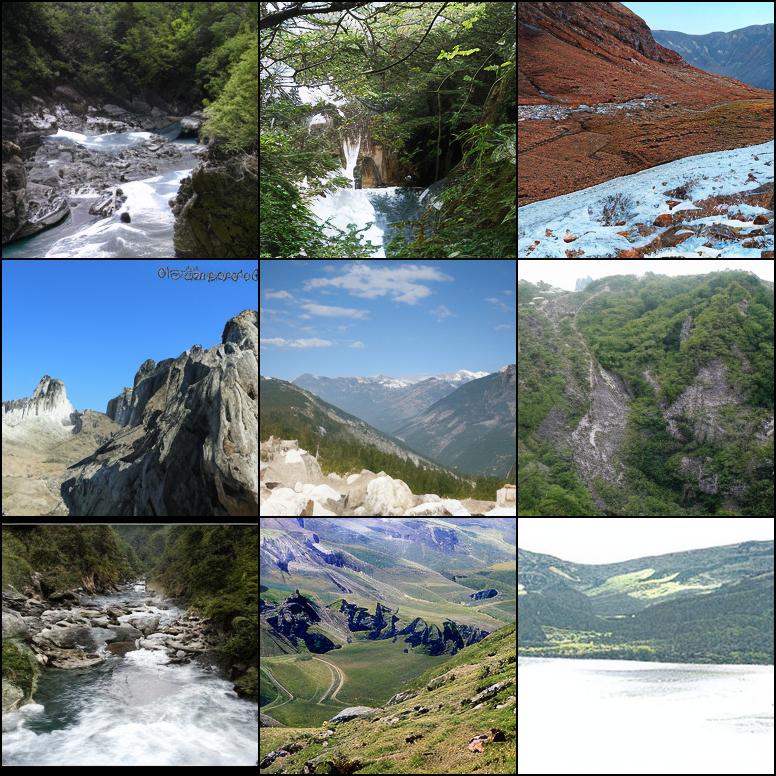}
\end{minipage}
\begin{minipage}{0.246\linewidth}
    \centering
    \includegraphics[width=\linewidth]{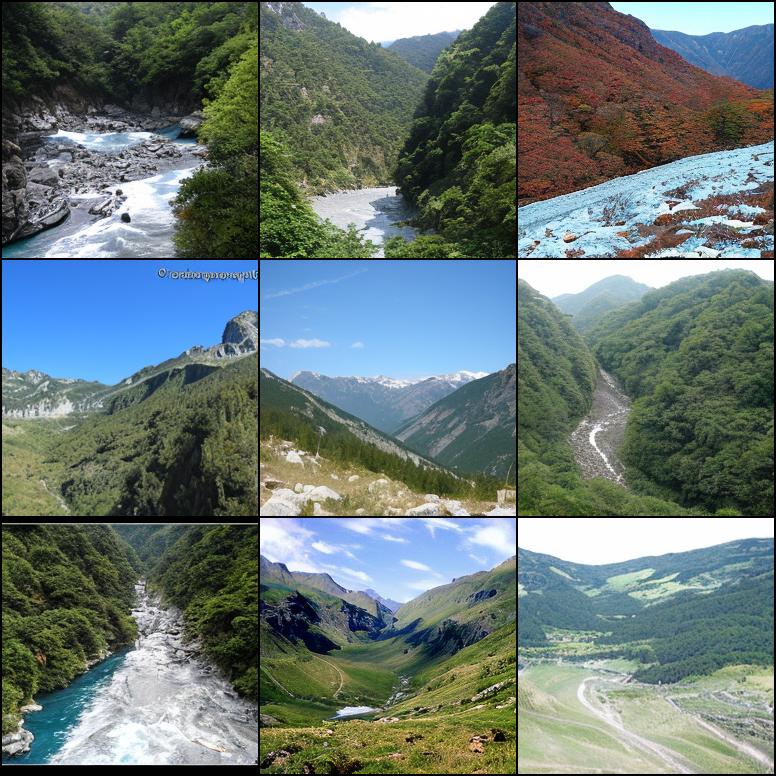}
\end{minipage}
\begin{minipage}{0.246\linewidth}
    \centering
    \includegraphics[width=\linewidth]{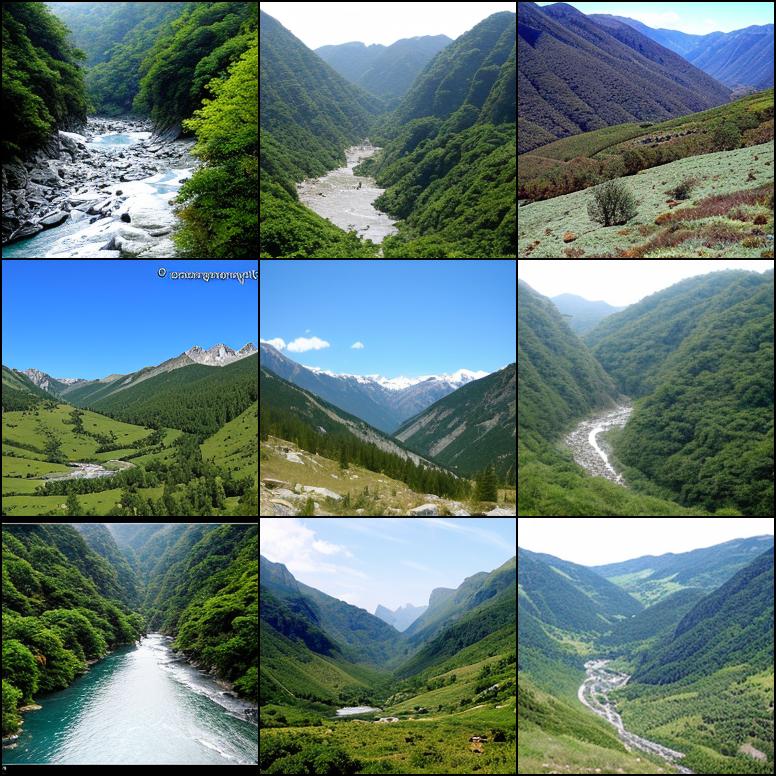}
\end{minipage}
\begin{minipage}{0.246\linewidth}
    \centering
    \includegraphics[width=\linewidth]{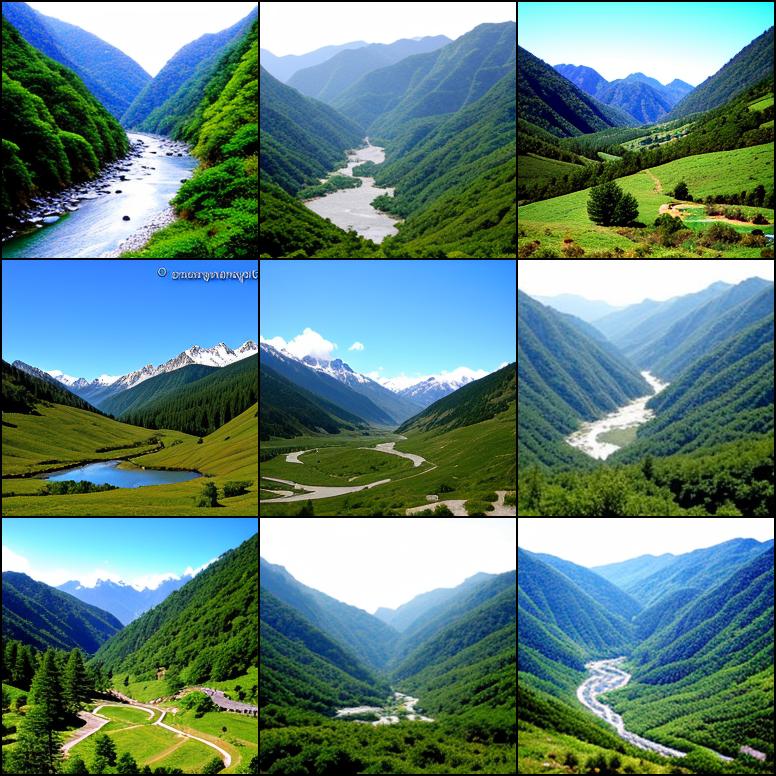}
\end{minipage}

\begin{minipage}{0.246\linewidth}
    \centering
    \includegraphics[width=\linewidth]{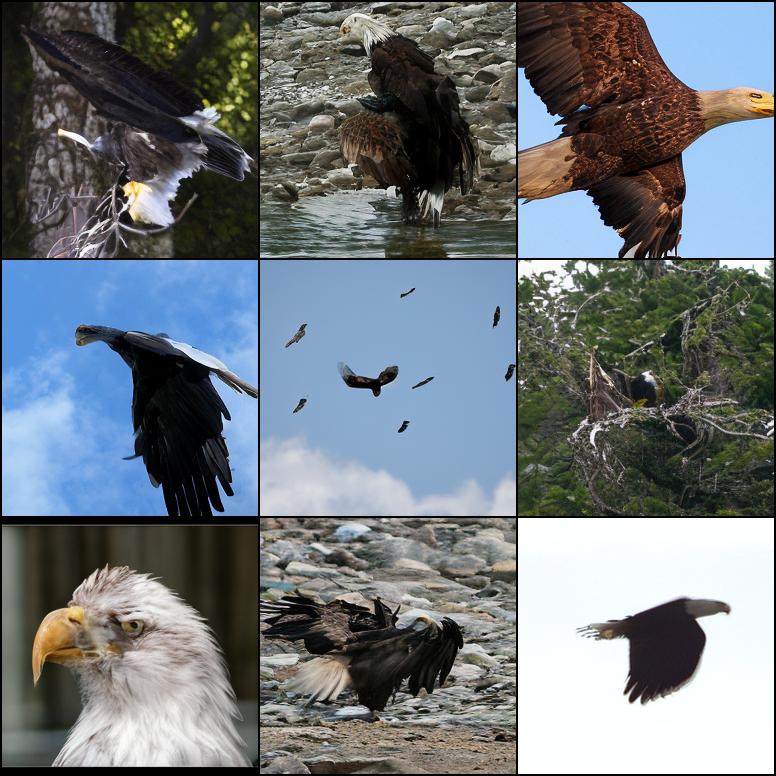}
\end{minipage}
\begin{minipage}{0.246\linewidth}
    \centering
    \includegraphics[width=\linewidth]{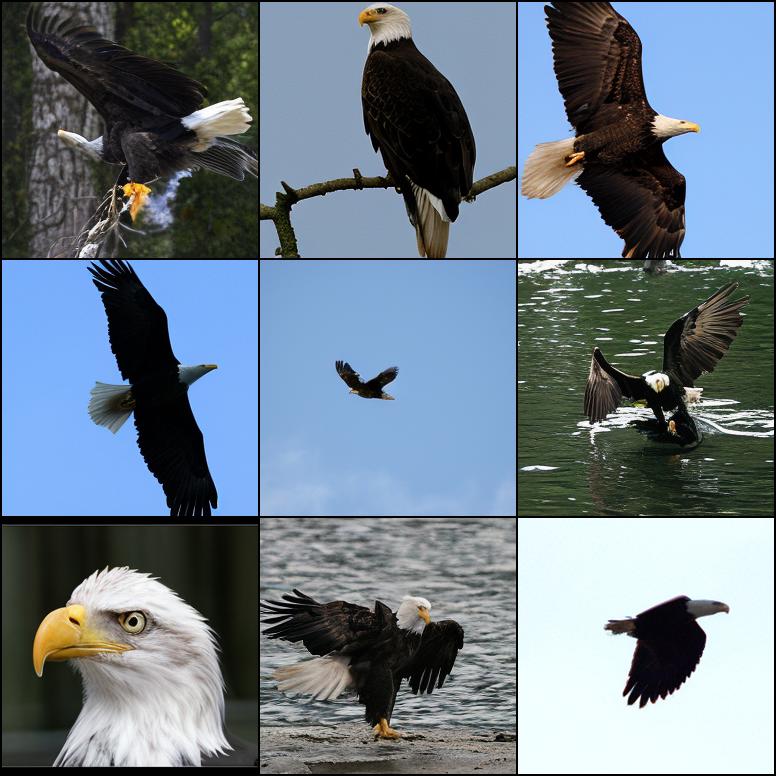}
\end{minipage}
\begin{minipage}{0.246\linewidth}
    \centering
    \includegraphics[width=\linewidth]{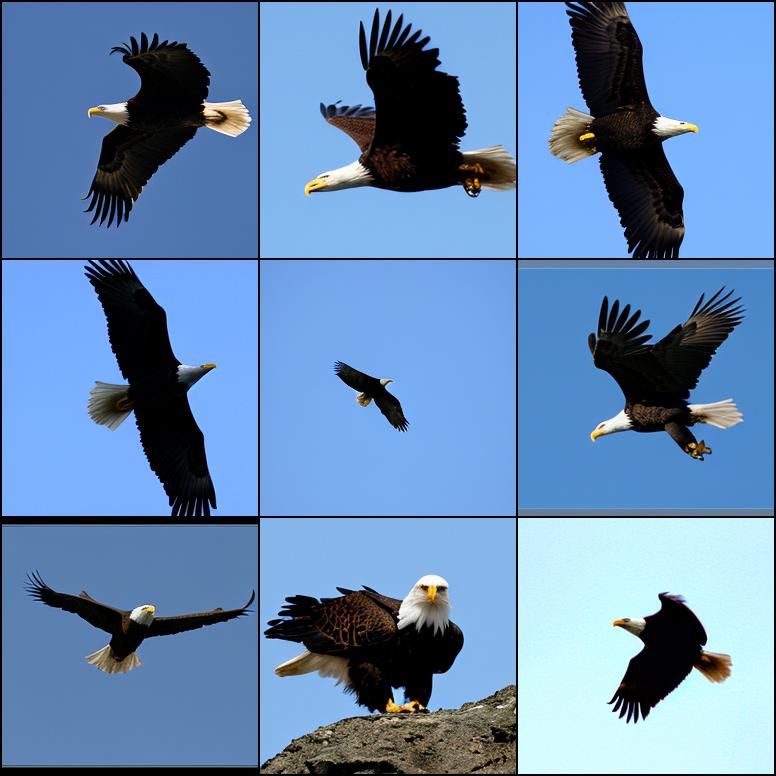}
\end{minipage}
\begin{minipage}{0.246\linewidth}
    \centering
    \includegraphics[width=\linewidth]{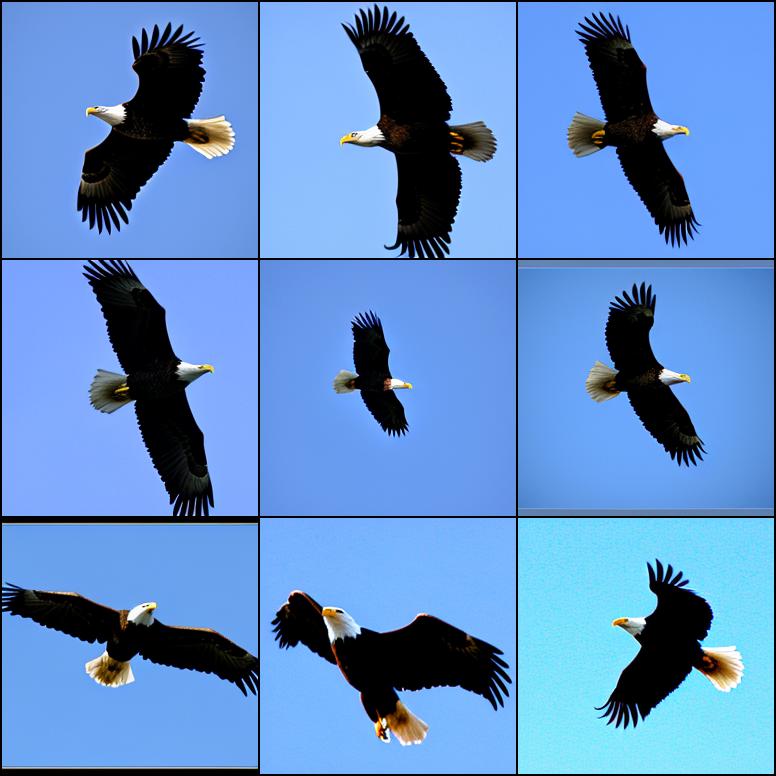}
\end{minipage}

\begin{minipage}{0.246\linewidth}
    \centering
    \includegraphics[width=\linewidth]{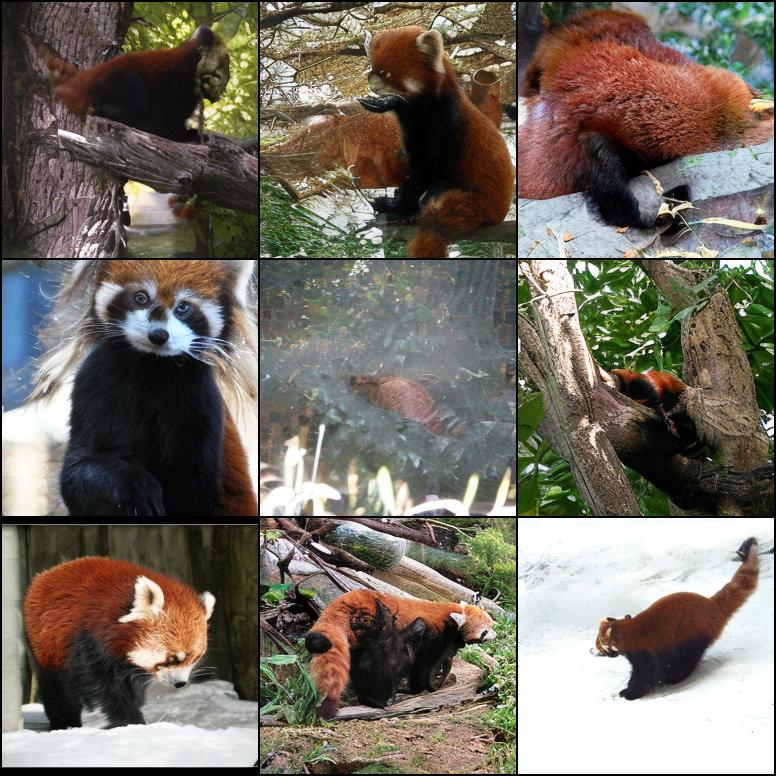}
\end{minipage}
\begin{minipage}{0.246\linewidth}
    \centering
    \includegraphics[width=\linewidth]{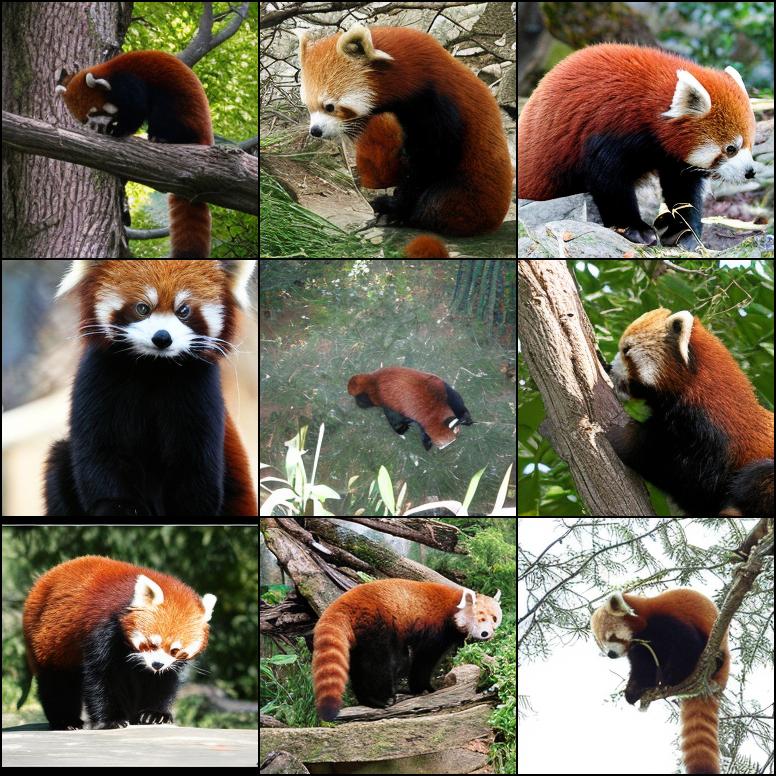}
\end{minipage}
\begin{minipage}{0.246\linewidth}
    \centering
    \includegraphics[width=\linewidth]{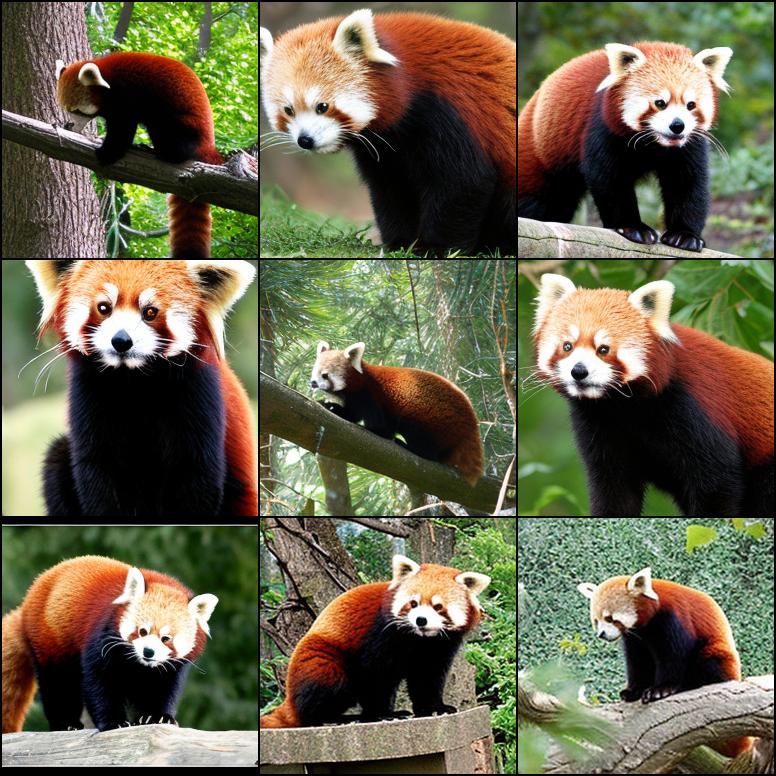}
\end{minipage}
\begin{minipage}{0.246\linewidth}
    \centering
    \includegraphics[width=\linewidth]{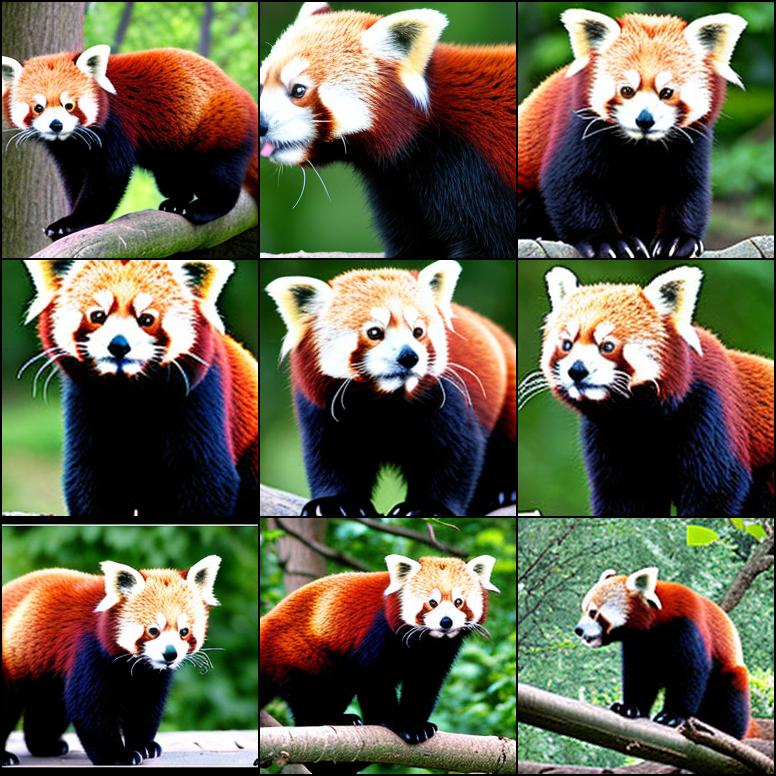}
\end{minipage}
\begin{minipage}{0.246\linewidth}
    \centering
    \includegraphics[width=\linewidth]{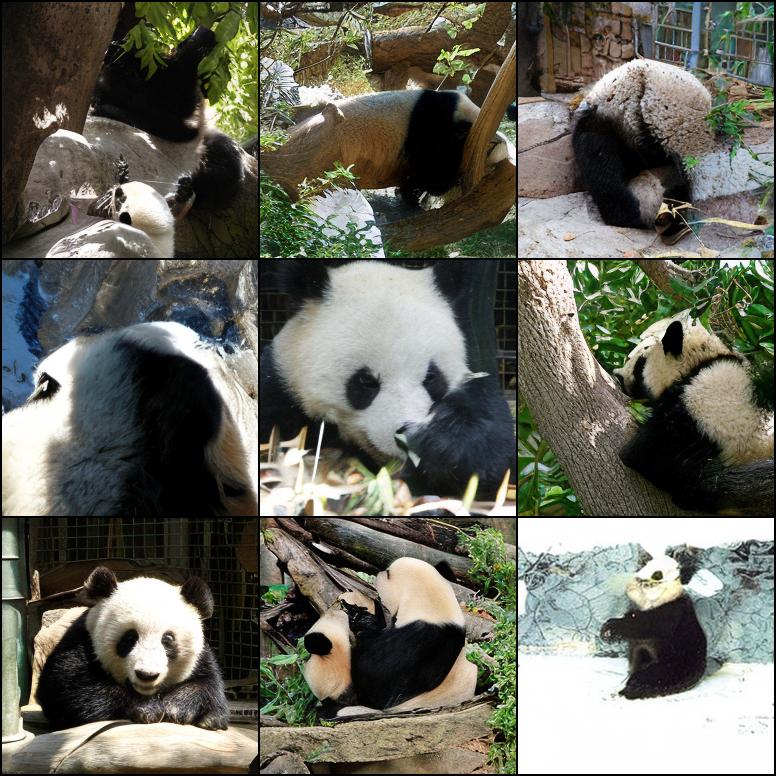}
\end{minipage}
\begin{minipage}{0.246\linewidth}
    \centering
    \includegraphics[width=\linewidth]{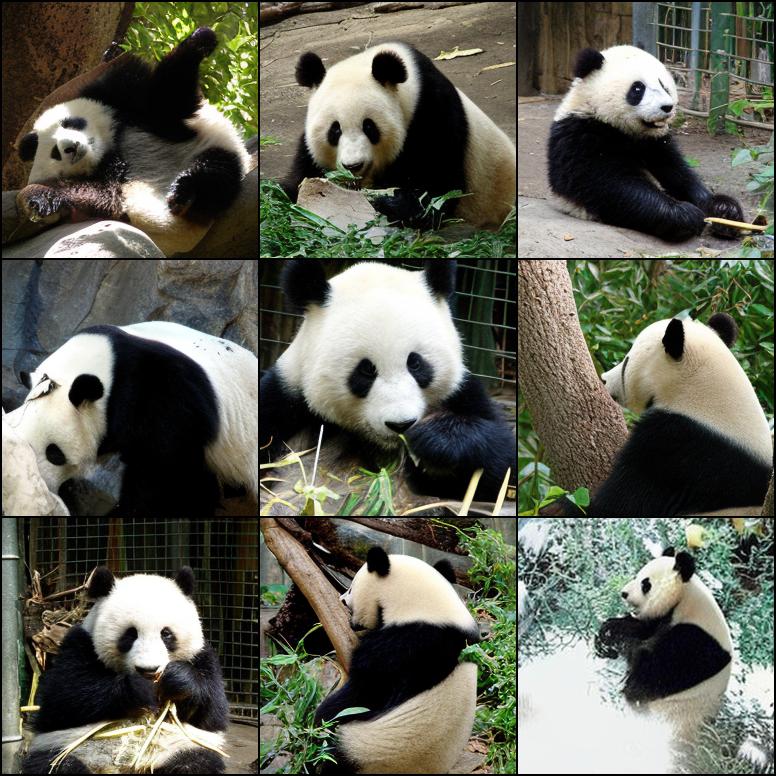}
\end{minipage}
\begin{minipage}{0.246\linewidth}
    \centering
    \includegraphics[width=\linewidth]{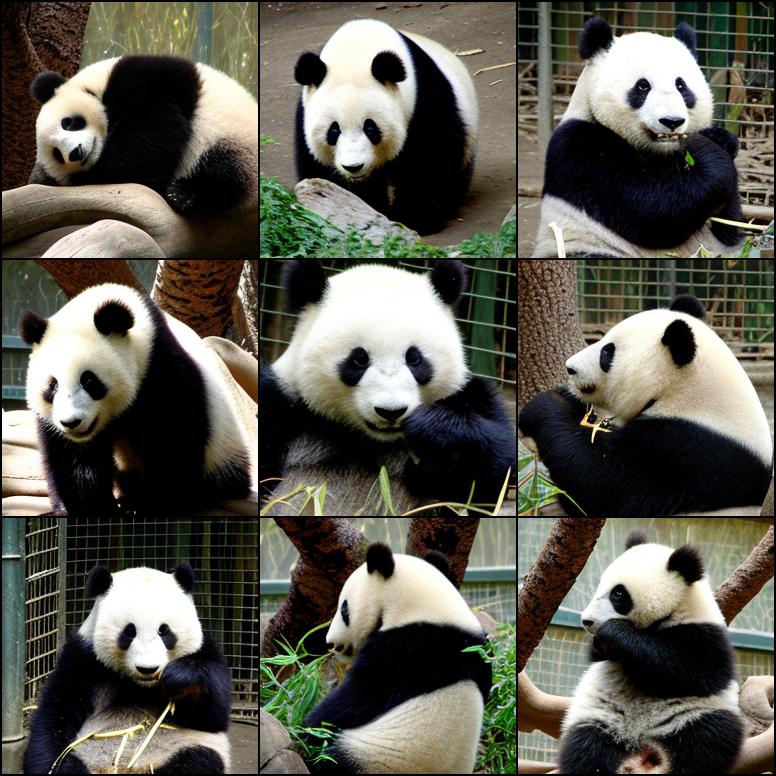}
\end{minipage}
\begin{minipage}{0.246\linewidth}
    \centering
    \includegraphics[width=\linewidth]{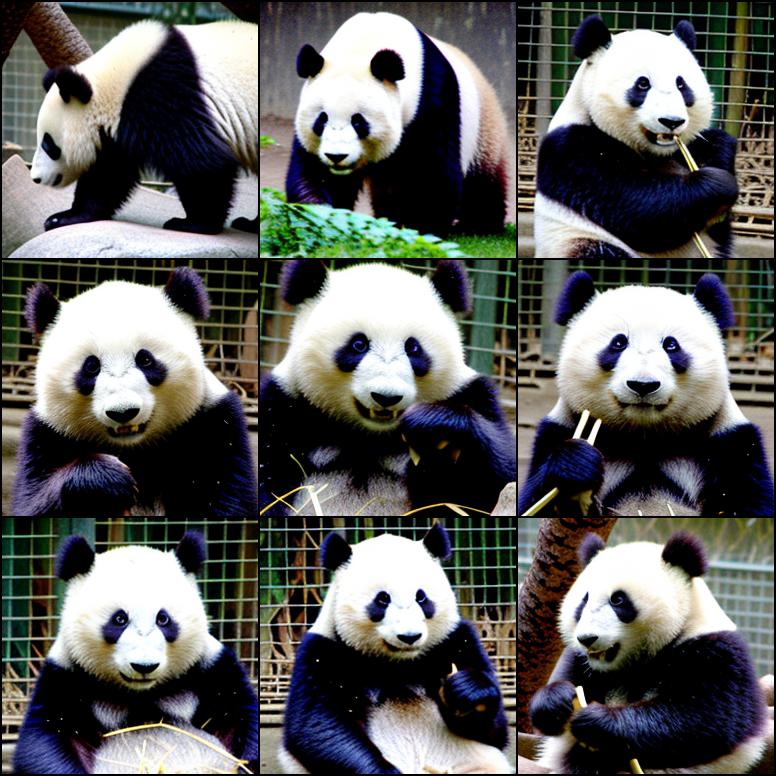}
\end{minipage}
\begin{minipage}{0.246\linewidth}
    \centering
    \includegraphics[width=\linewidth]{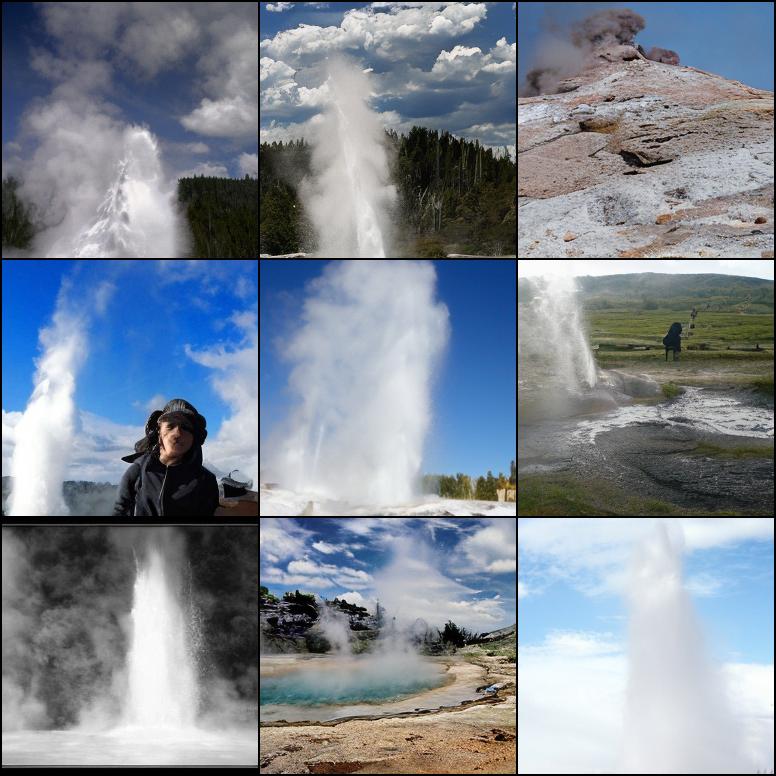}
\end{minipage}
\begin{minipage}{0.246\linewidth}
    \centering
    \includegraphics[width=\linewidth]{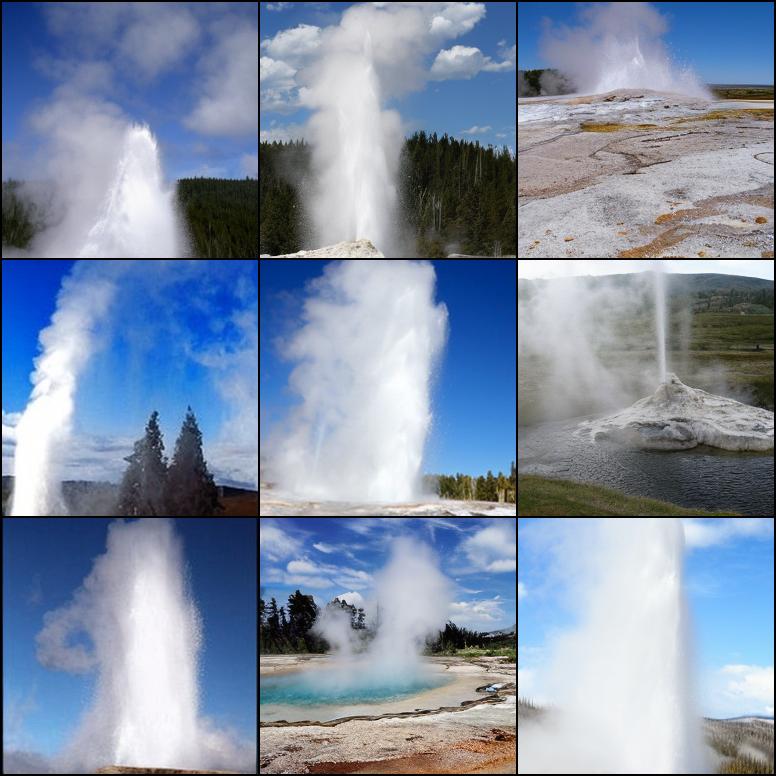}
\end{minipage}
\begin{minipage}{0.246\linewidth}
    \centering
    \includegraphics[width=\linewidth]{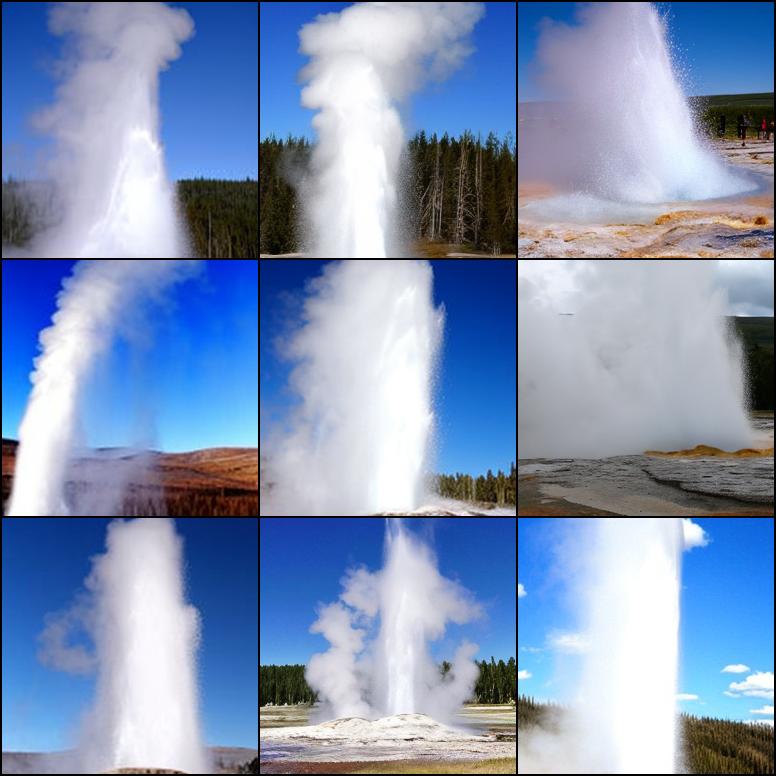}
\end{minipage}
\begin{minipage}{0.246\linewidth}
    \centering
    \includegraphics[width=\linewidth]{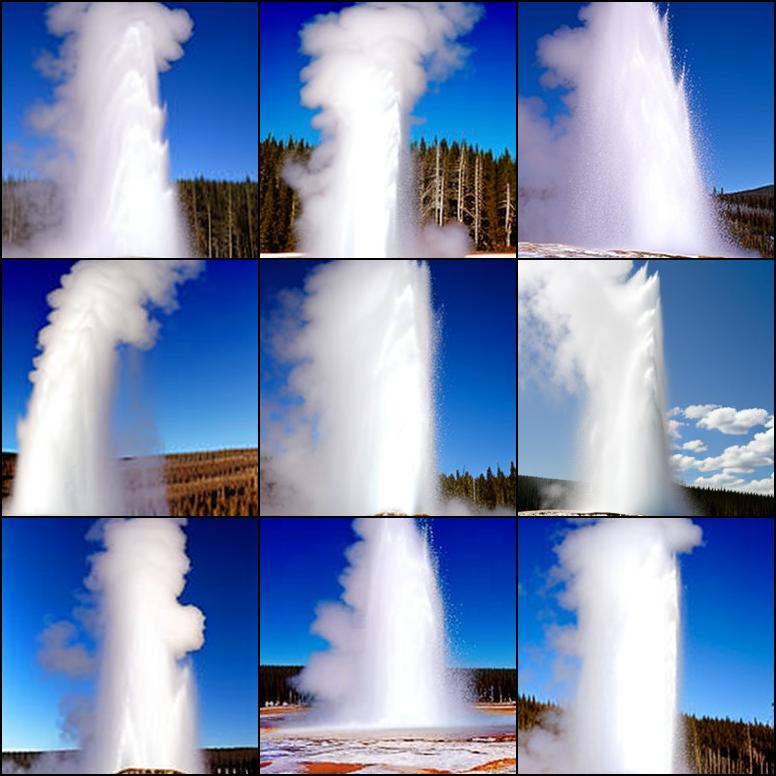}
\end{minipage}
\label{fig:various beta_1}
\vspace{2mm}
\begin{minipage}{0.245\linewidth}
\centering
$\beta=1.0$
\end{minipage}
\begin{minipage}{0.245\linewidth}
\centering
$\beta=0.5$
\end{minipage}
\begin{minipage}{0.245\linewidth}
\centering
$\beta=0.25$
\end{minipage}
\begin{minipage}{0.245\linewidth}
\centering
$\beta=0.1$
\end{minipage}
\caption{Additional results of temperature sampling ($\beta$) impact on DiT-XL/2 after applying GFT.}
\label{fig:more_beta}
\end{figure*}
\clearpage
\newpage

\begin{figure*}[h]
\centering

\begin{minipage}{0.32\linewidth}
\centering
\includegraphics[width=\columnwidth]{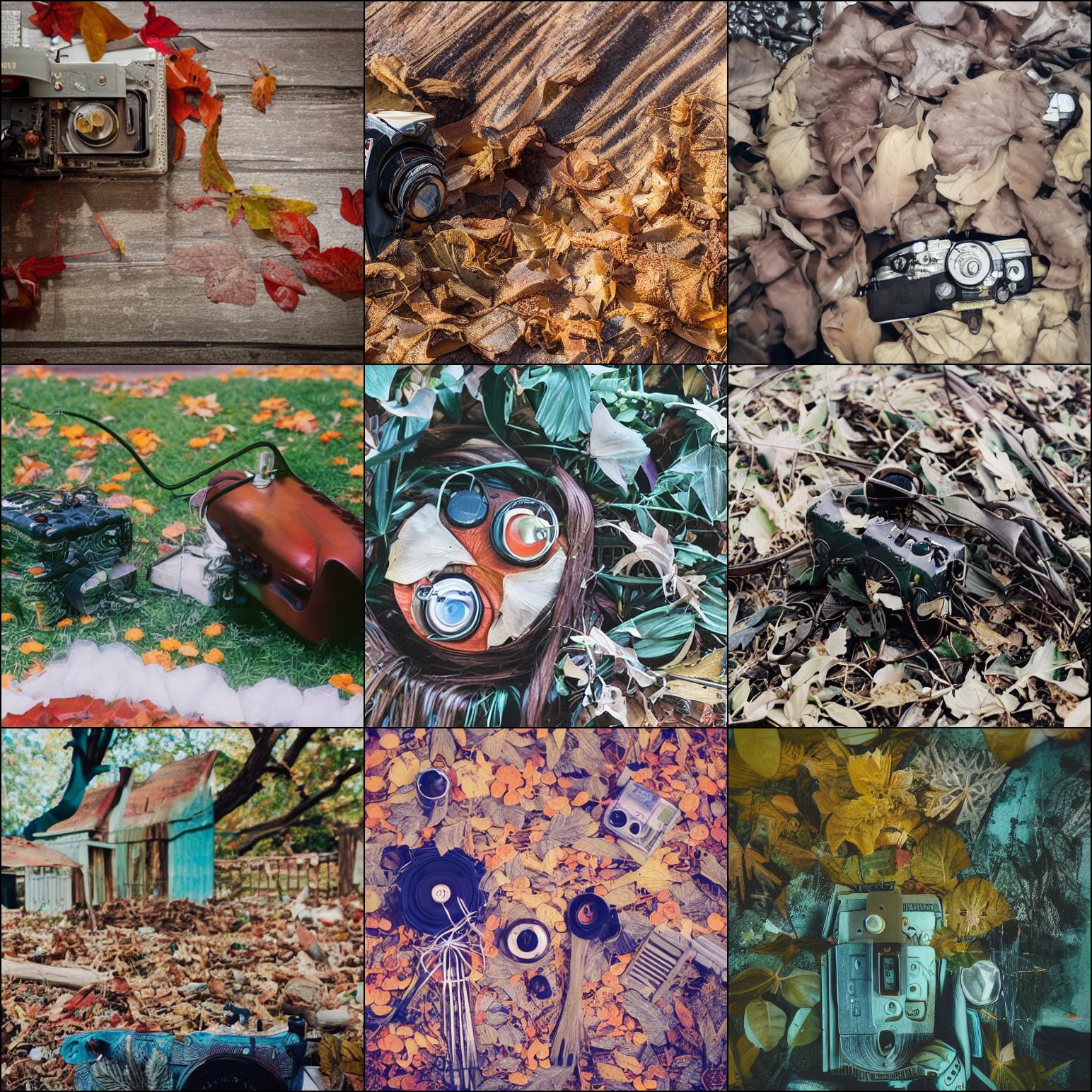}
\end{minipage}
\begin{minipage}{0.32\linewidth}
\centering
\includegraphics[width=\columnwidth]{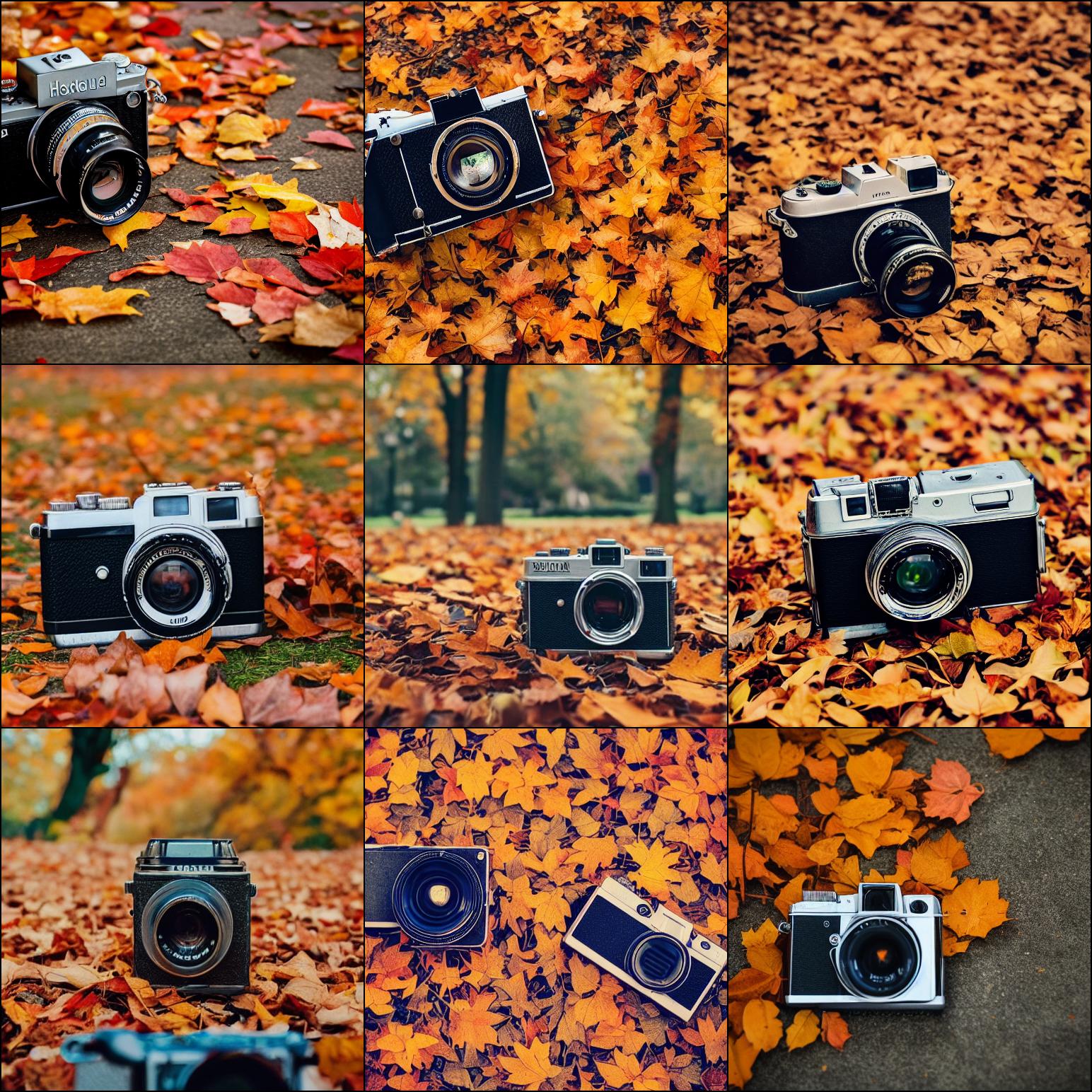}
\end{minipage}
\begin{minipage}{0.32\linewidth}
\centering
\includegraphics[width=\columnwidth]{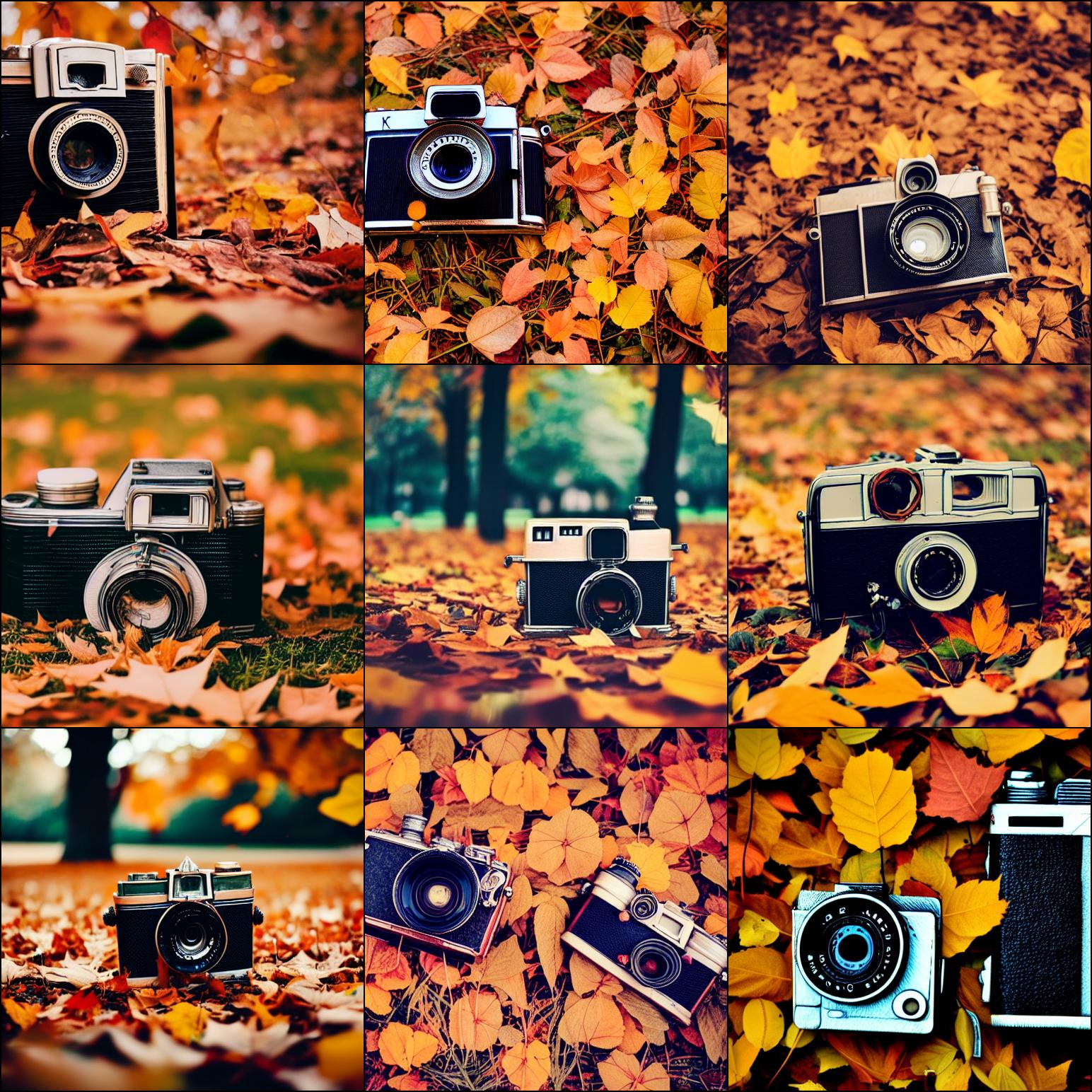}
\end{minipage}

\begin{minipage}{0.32\linewidth}
\centering
\includegraphics[width=\columnwidth]{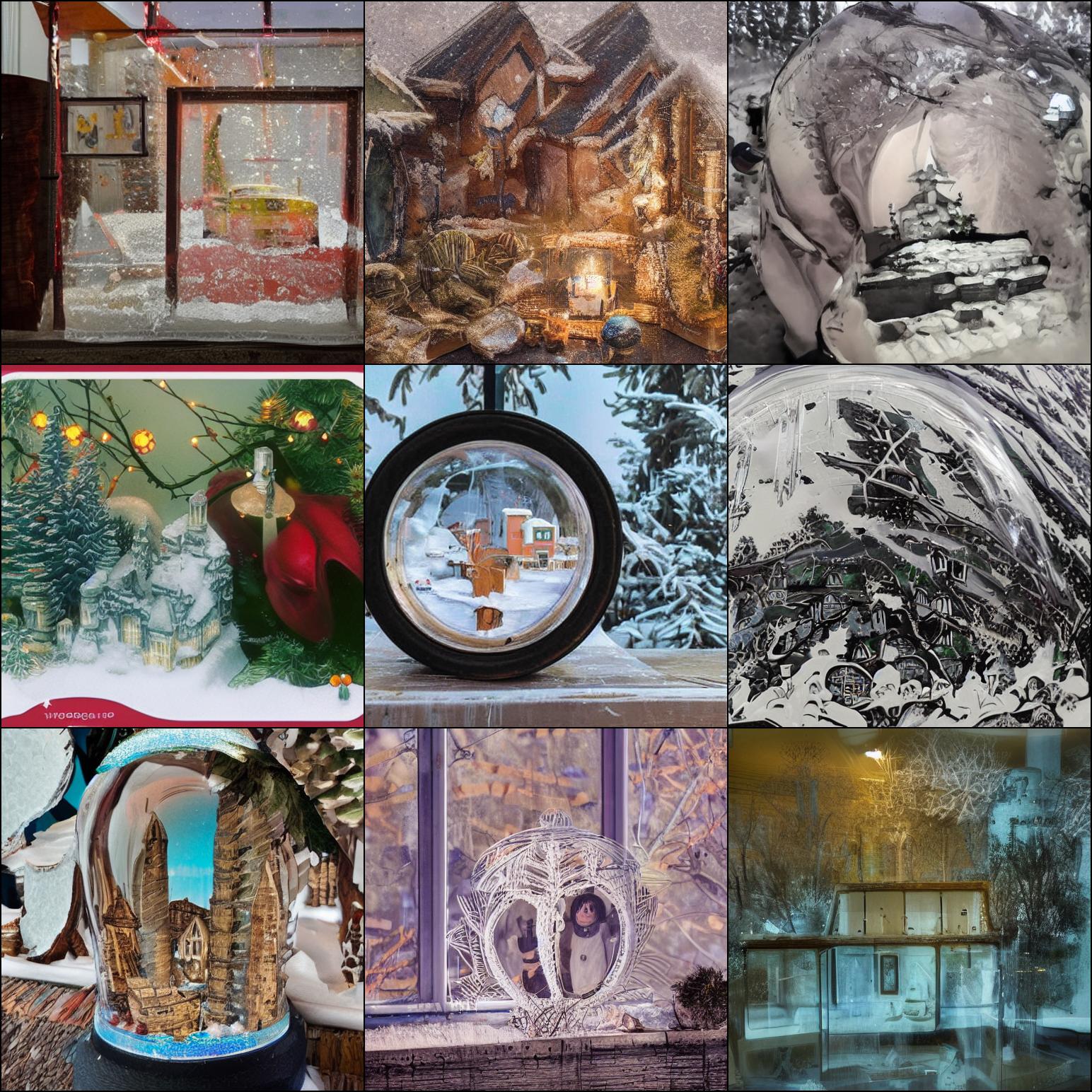}
\end{minipage}
\begin{minipage}{0.32\linewidth}
\centering
\includegraphics[width=\columnwidth]{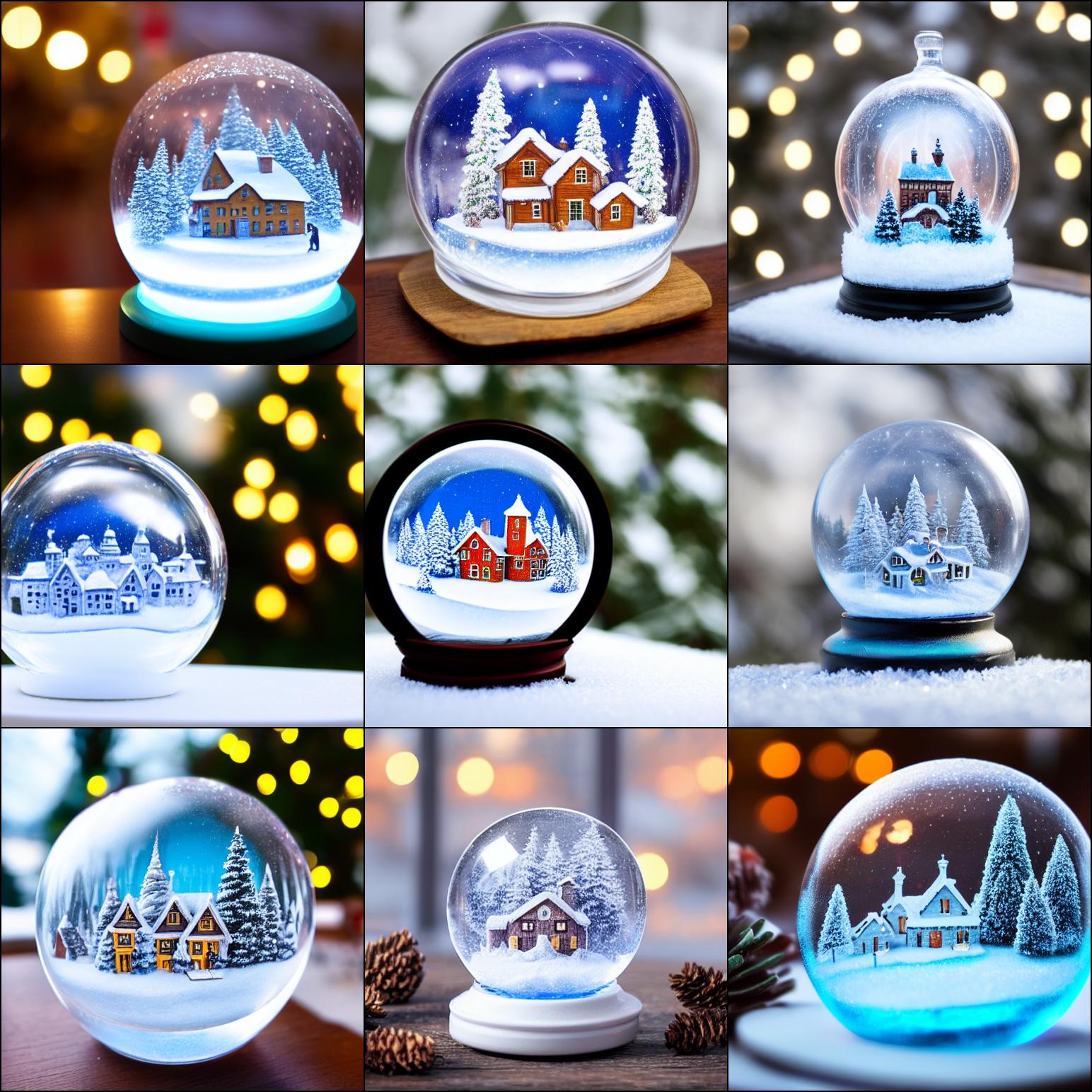}
\end{minipage}
\begin{minipage}{0.32\linewidth}
\centering
\includegraphics[width=\columnwidth]{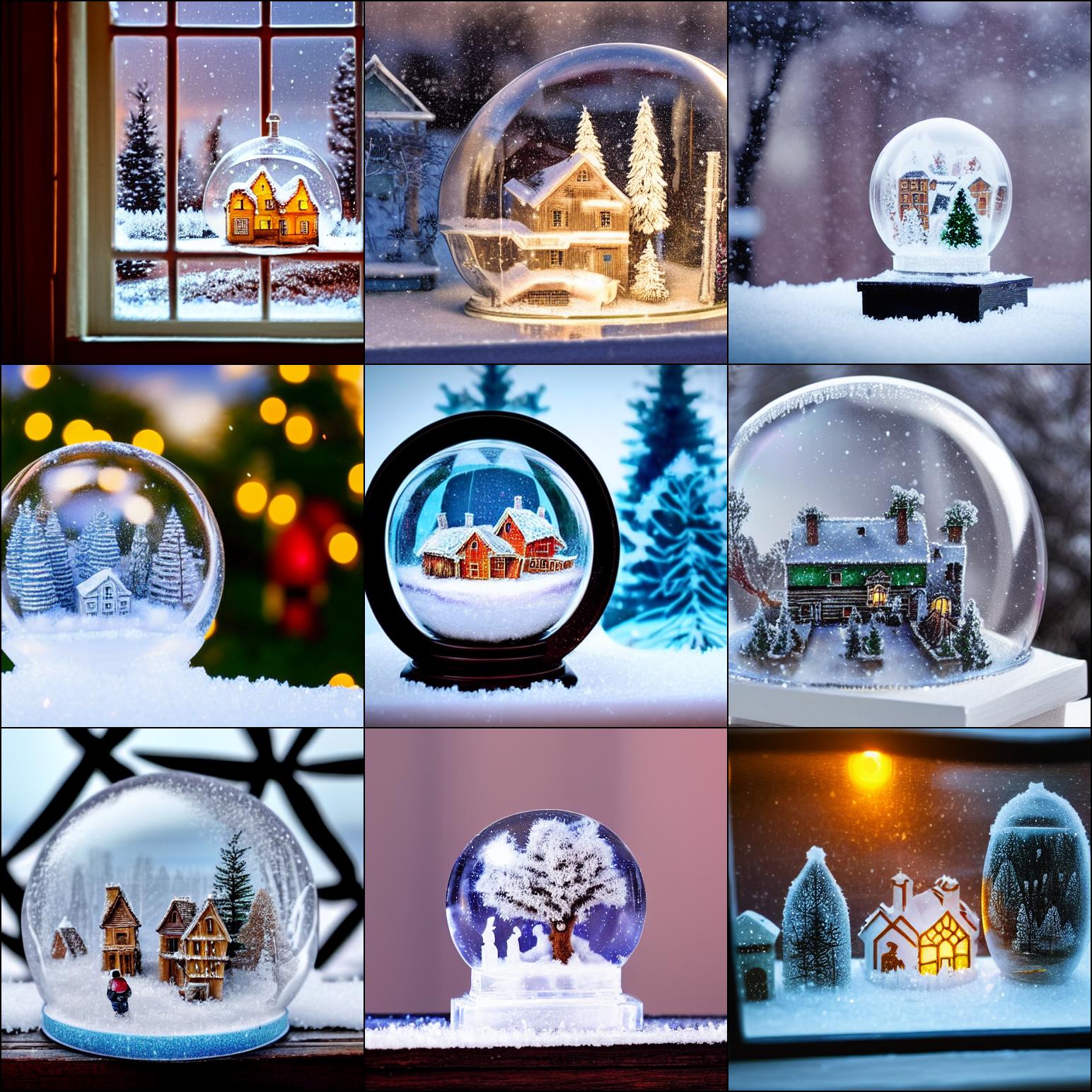}
\end{minipage}

\begin{minipage}{0.32\linewidth}
\centering
\includegraphics[width=\columnwidth]{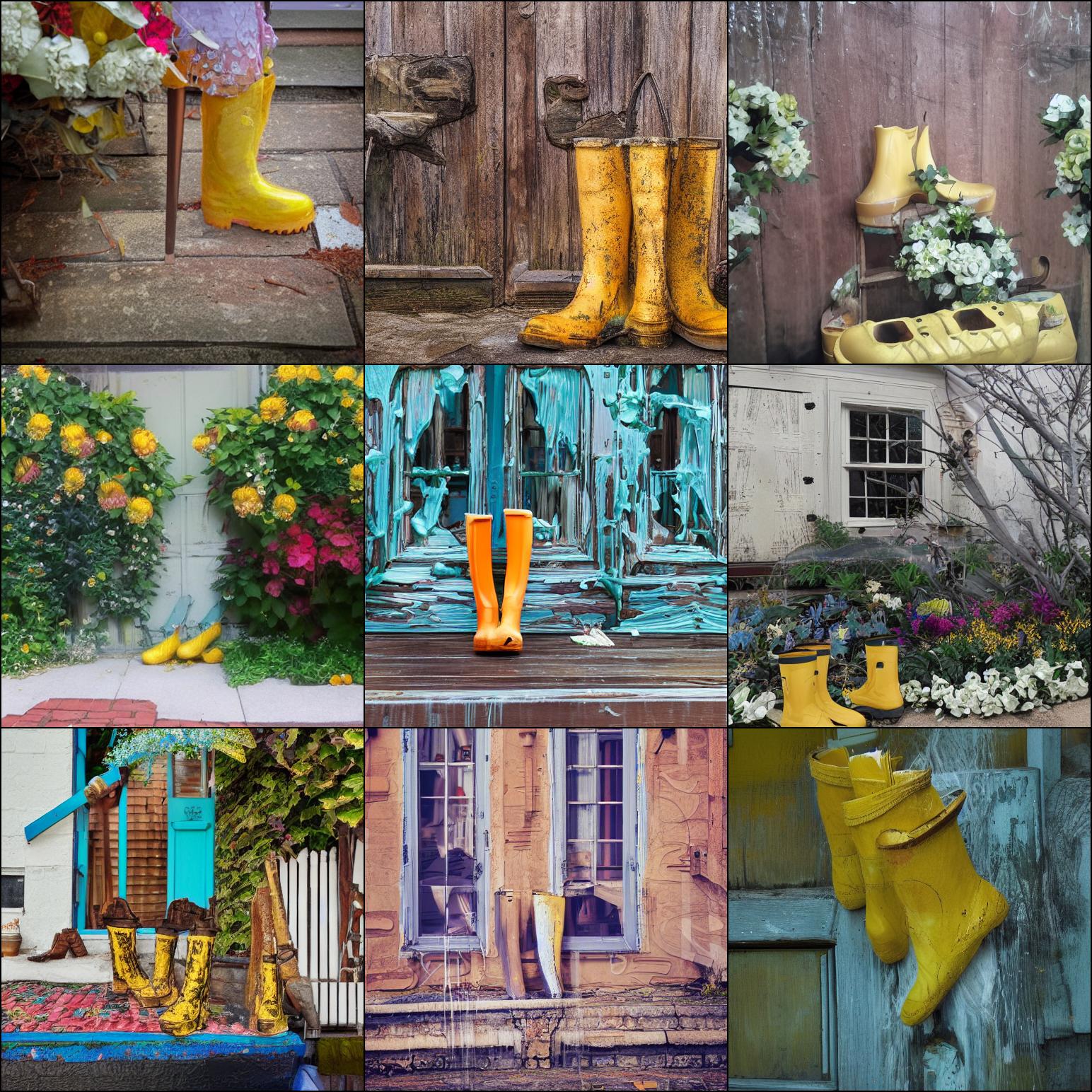}
\end{minipage}
\begin{minipage}{0.32\linewidth}
\centering
\includegraphics[width=\columnwidth]{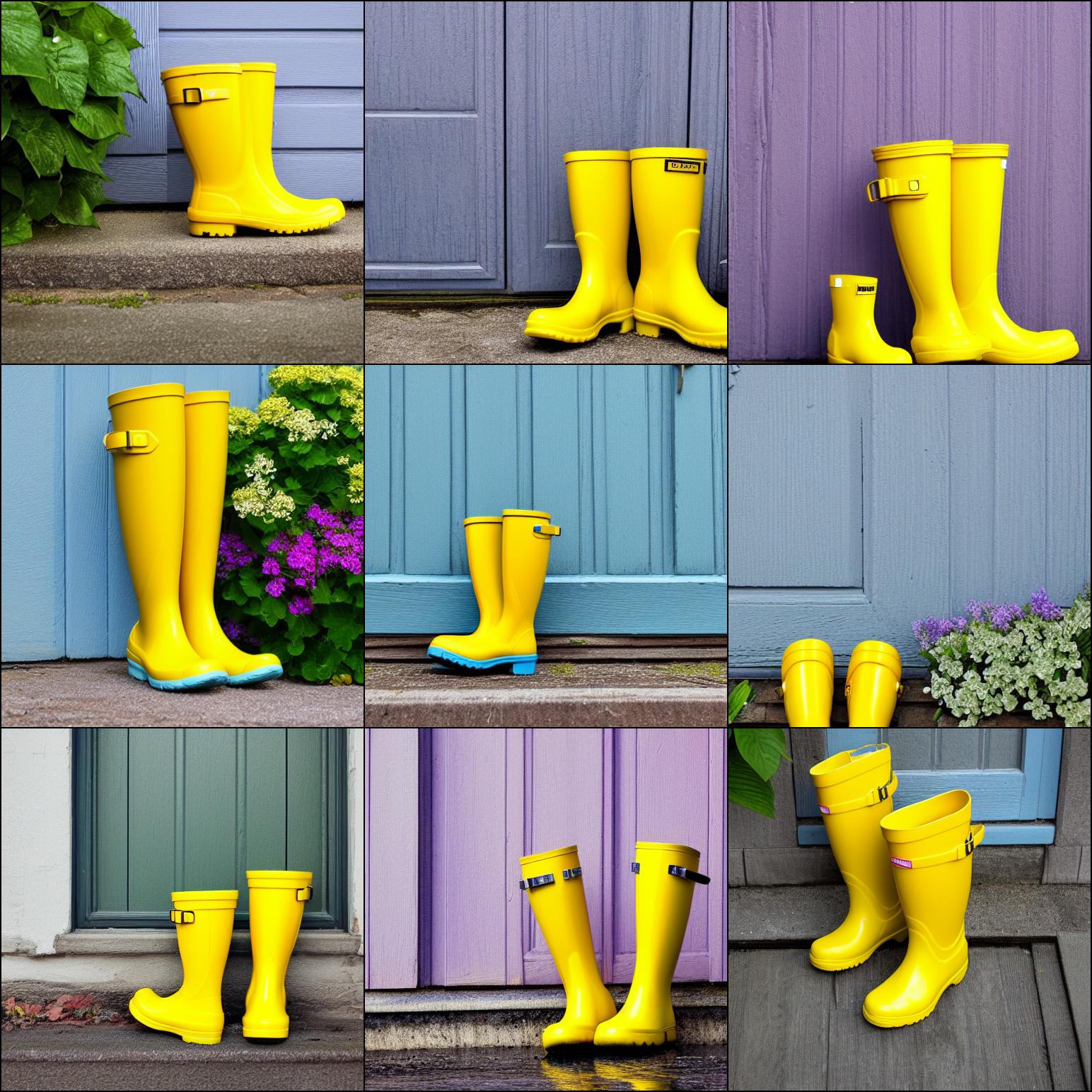}
\end{minipage}
\begin{minipage}{0.32\linewidth}
\centering
\includegraphics[width=\columnwidth]{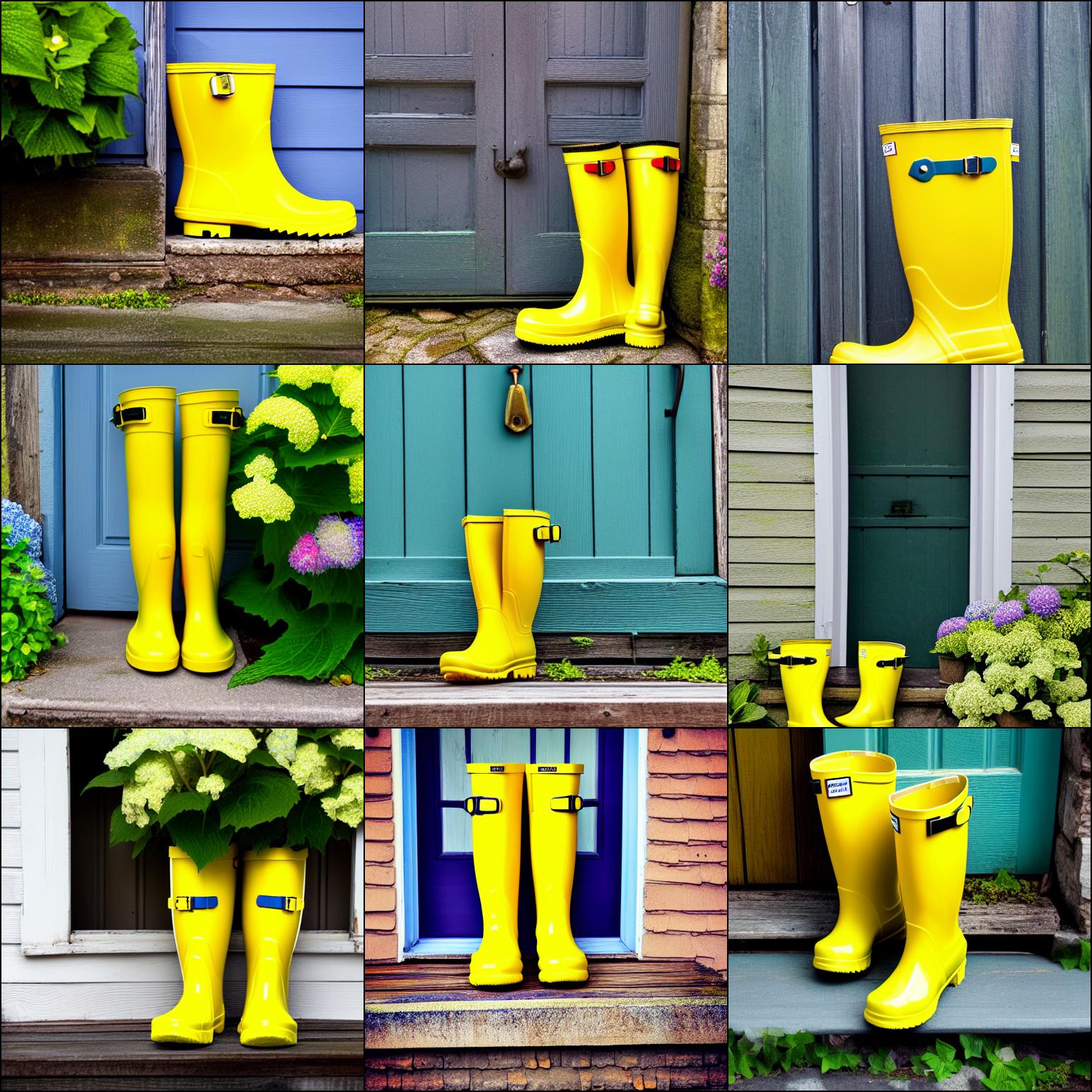}
\end{minipage}

\begin{minipage}{0.32\linewidth}
\centering
\includegraphics[width=\columnwidth]{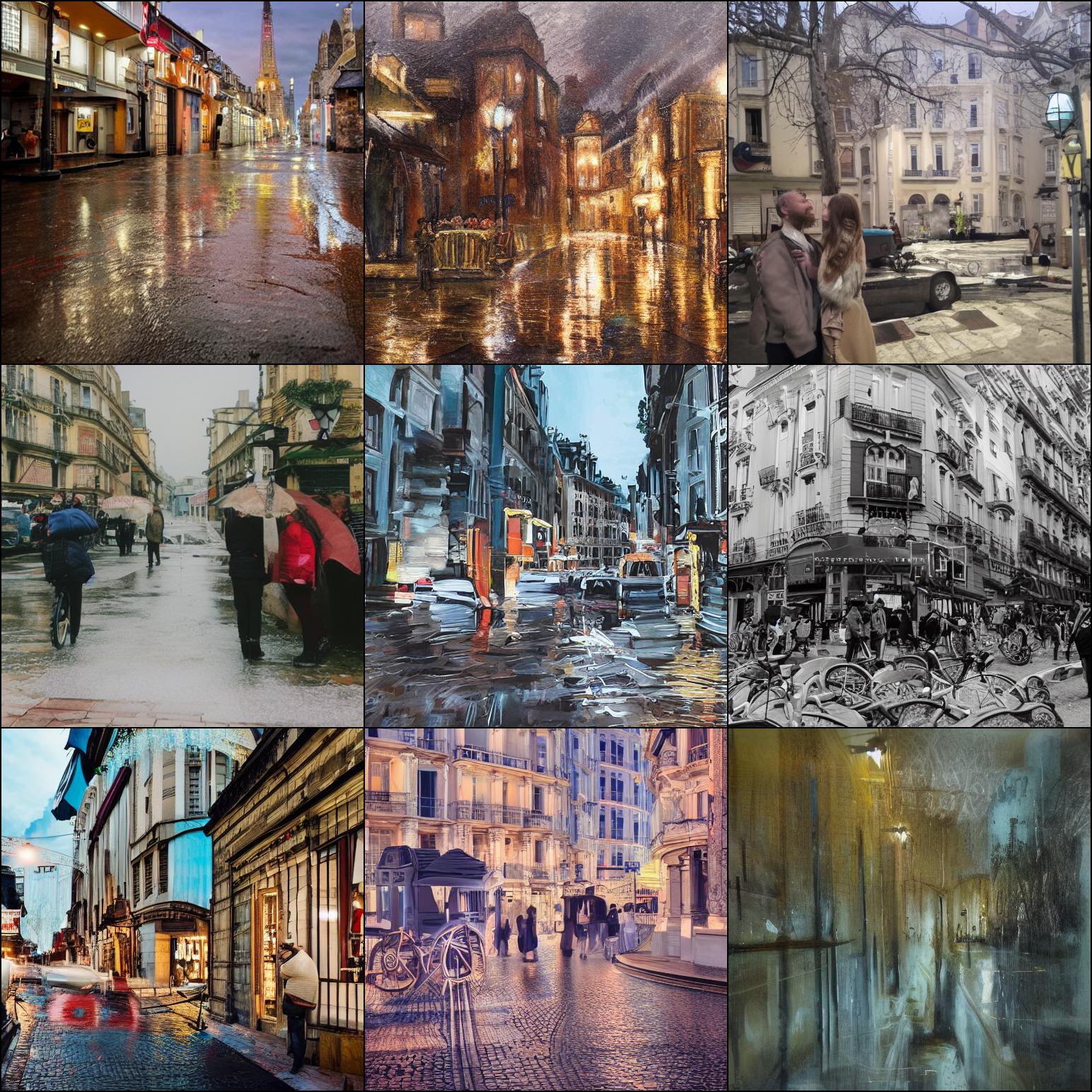}
\end{minipage}
\begin{minipage}{0.32\linewidth}
\centering
\includegraphics[width=\columnwidth]{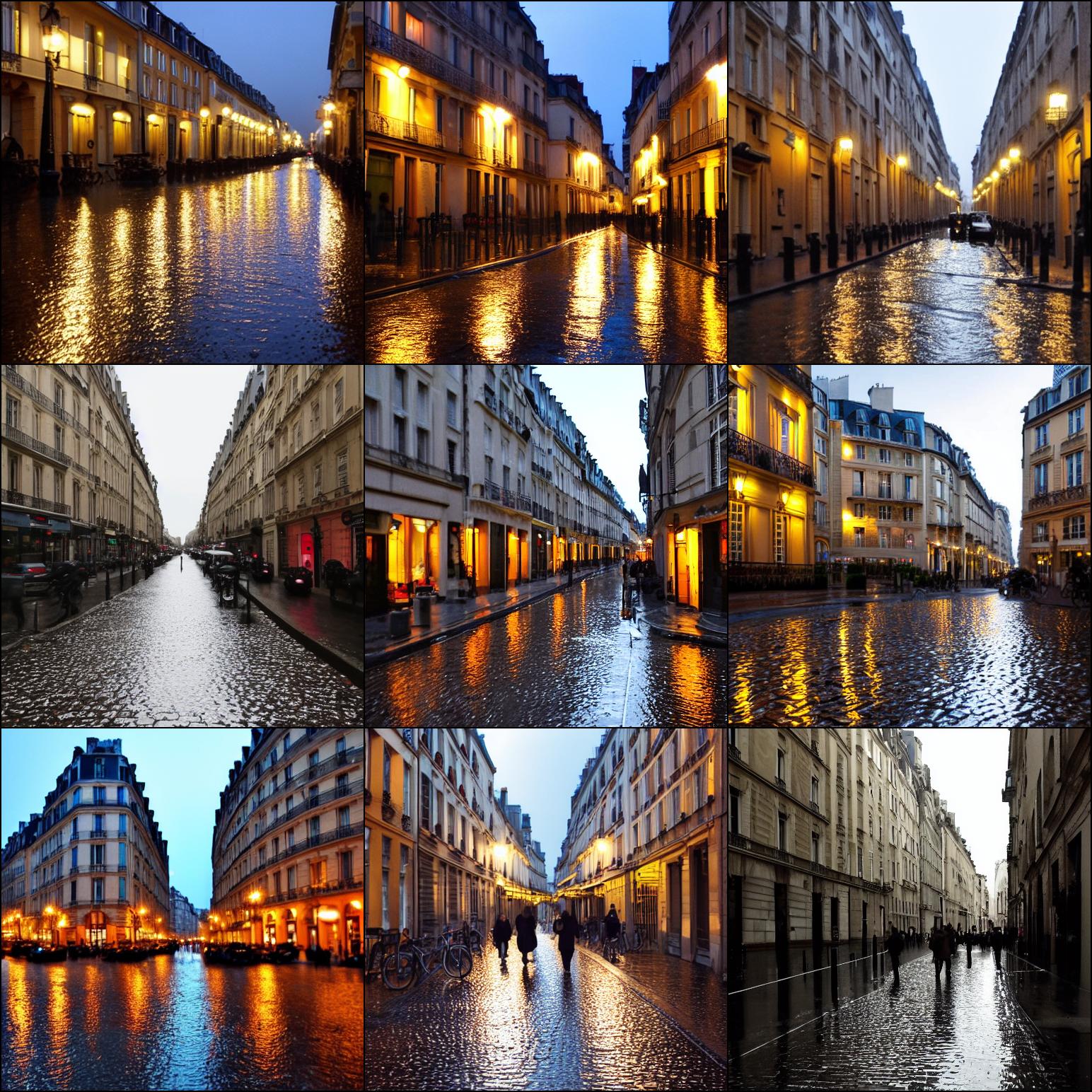}
\end{minipage}
\begin{minipage}{0.32\linewidth}
\centering
\includegraphics[width=\columnwidth]{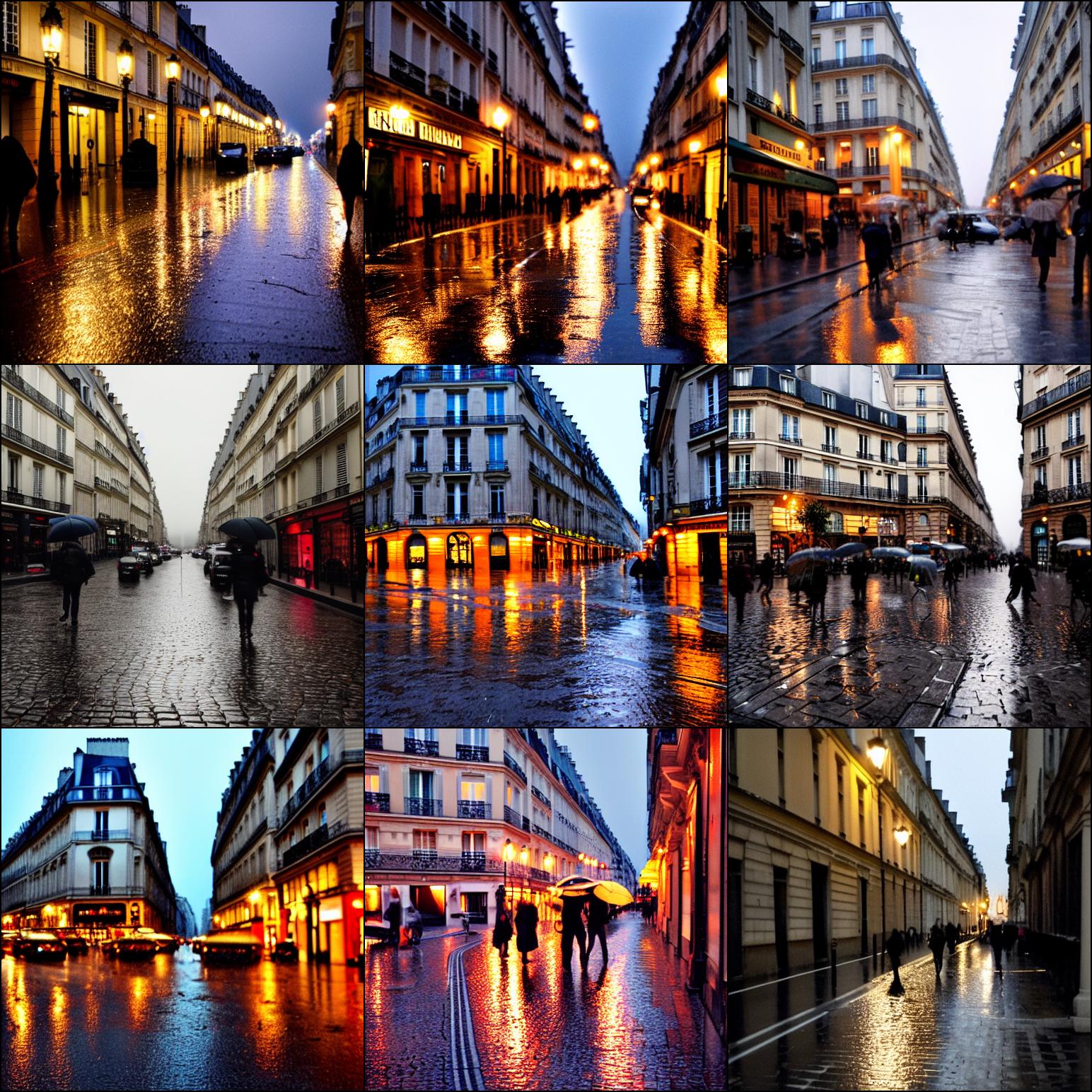}
\end{minipage}

\vspace{2mm}
\begin{minipage}{0.32\linewidth}
\centering
w/o Guidance
\end{minipage}
\begin{minipage}{0.32\linewidth}
\centering
GFT (w/o Guidance)
\end{minipage}
\begin{minipage}{0.32\linewidth}
\centering
w/ CFG Guidance
\end{minipage}

\caption{Additional results of qualitative T2I  comparison between vanilla conditional generation, GFT, and CFG on Stable Diffusion 1.5.}
\label{fig:more_figure_comparision}
\end{figure*}

\clearpage
\newpage

\begin{figure*}[h]
\centering

\begin{minipage}{0.32\linewidth}
\centering
\includegraphics[width=\columnwidth]{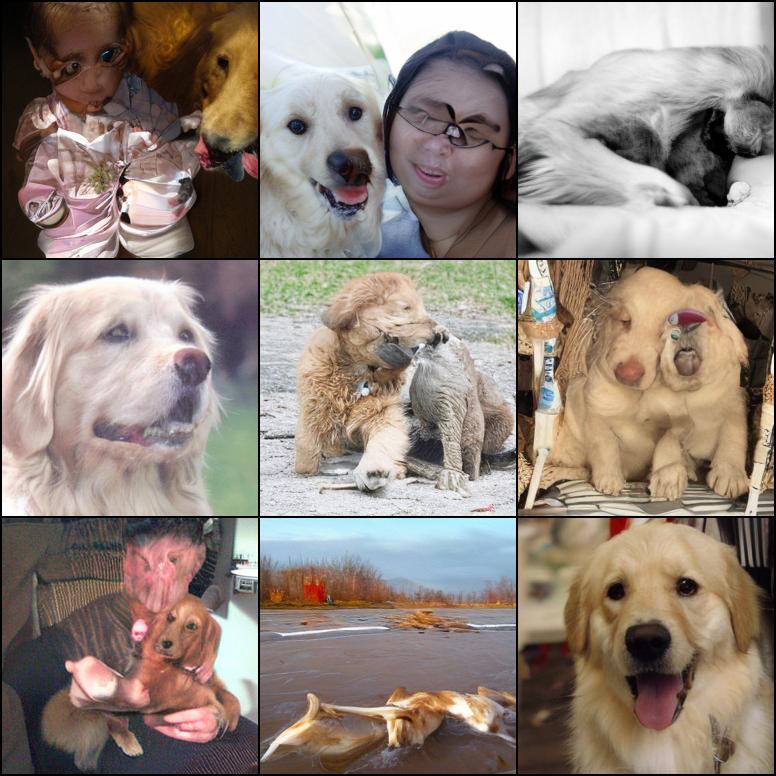}
\end{minipage}
\begin{minipage}{0.32\linewidth}
\centering
\includegraphics[width=\columnwidth]{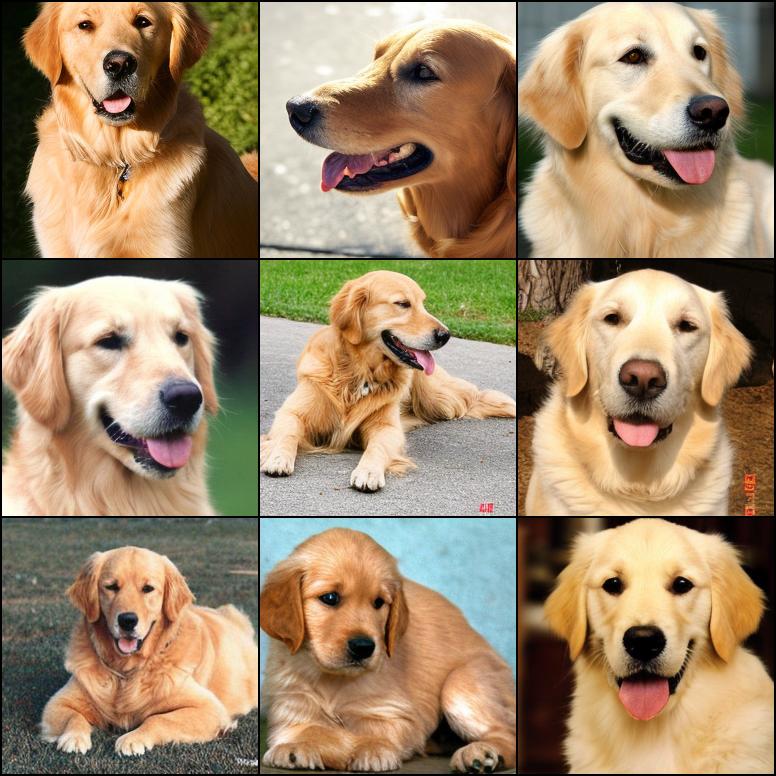}
\end{minipage}
\begin{minipage}{0.32\linewidth}
\centering
\includegraphics[width=\columnwidth]{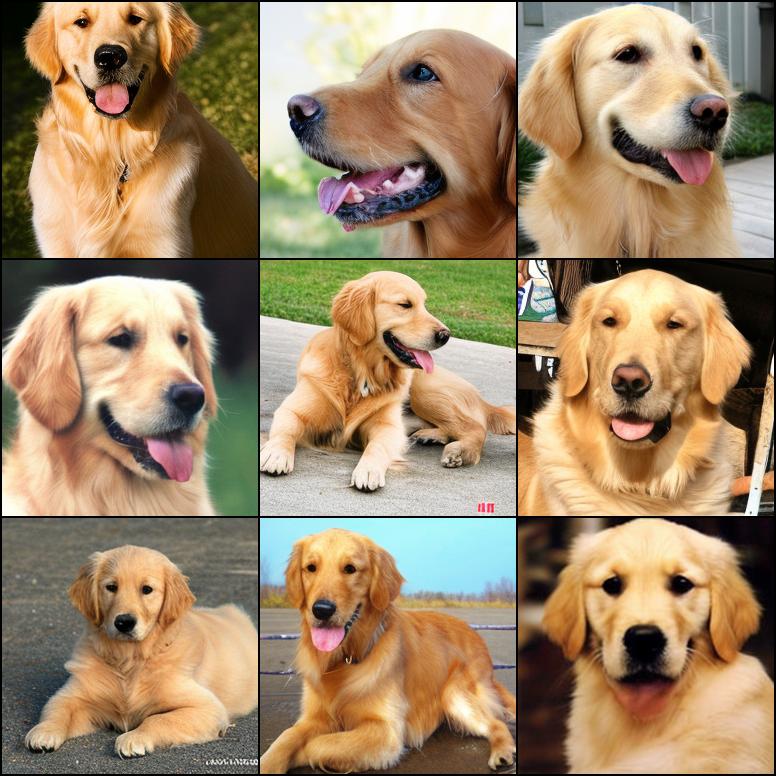}
\end{minipage}

\begin{minipage}{0.32\linewidth}
\centering
\includegraphics[width=\columnwidth]{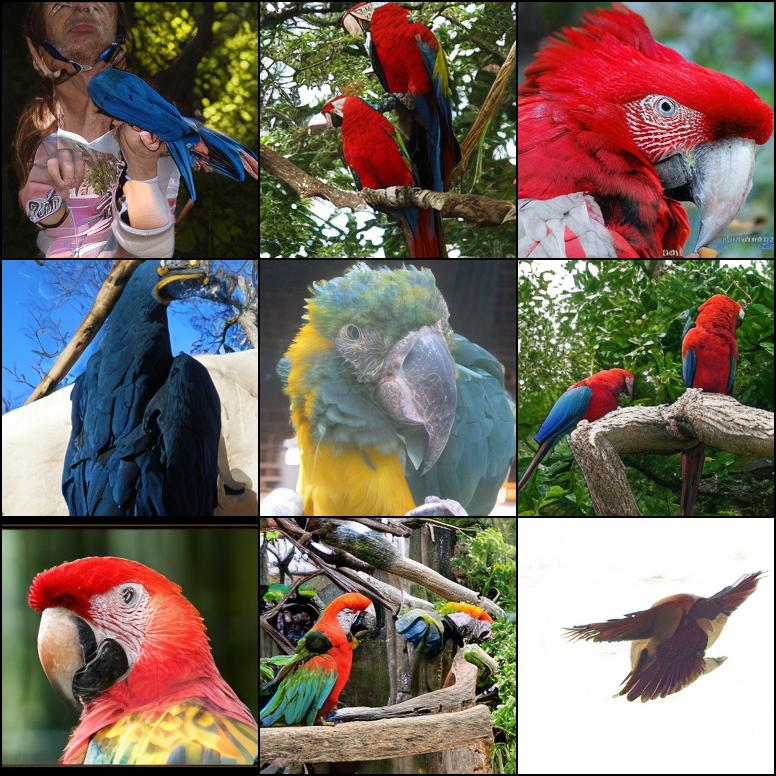}
\end{minipage}
\begin{minipage}{0.32\linewidth}
\centering
\includegraphics[width=\columnwidth]{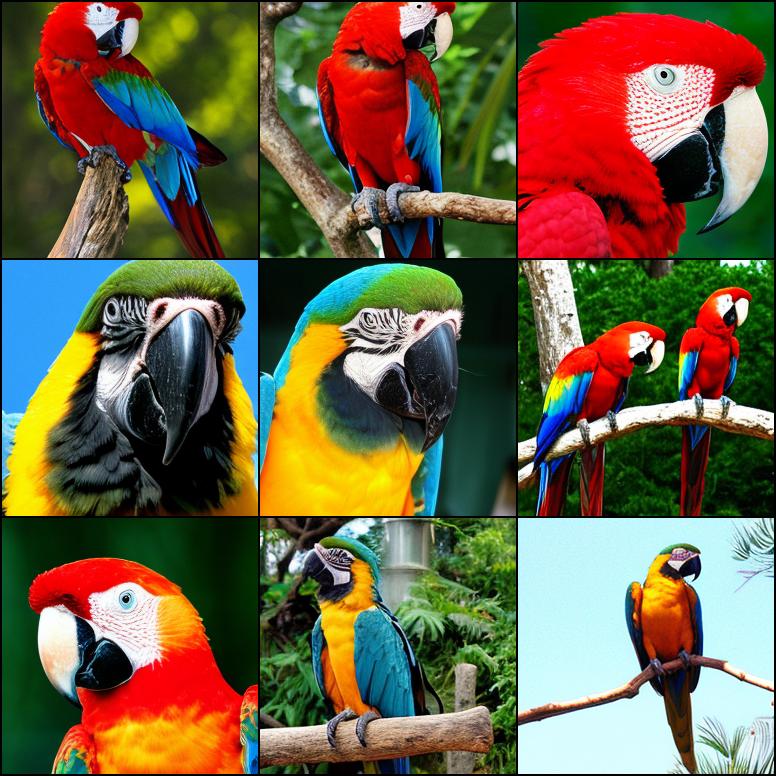}
\end{minipage}
\begin{minipage}{0.32\linewidth}
\centering
\includegraphics[width=\columnwidth]{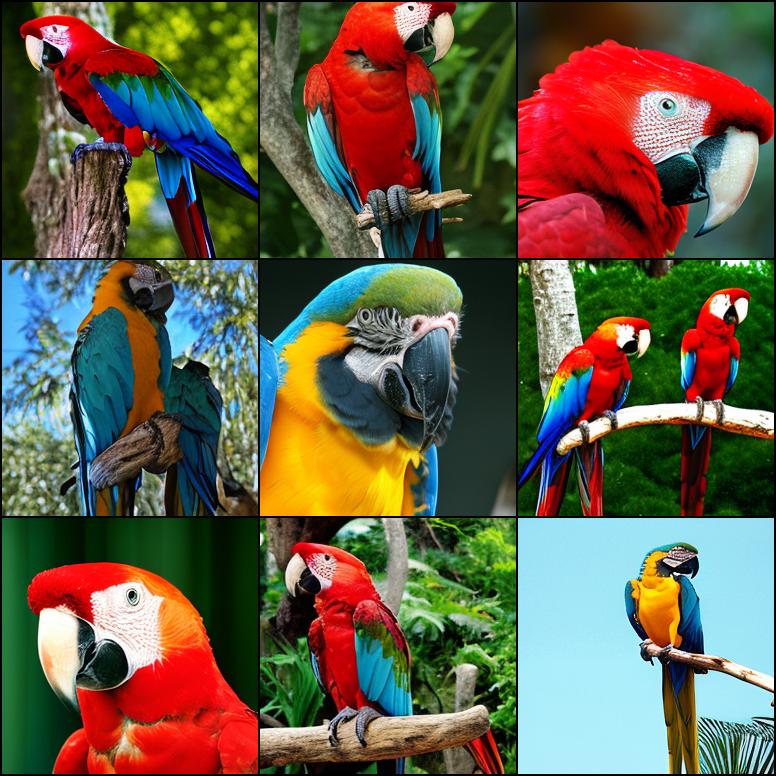}
\end{minipage}

\begin{minipage}{0.32\linewidth}
\centering
\includegraphics[width=\columnwidth]{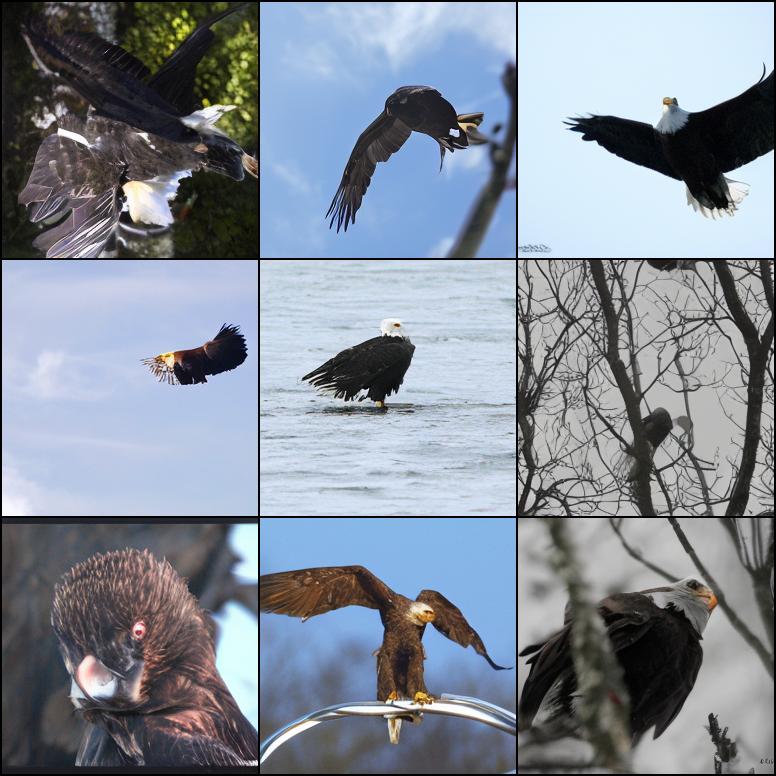}
\end{minipage}
\begin{minipage}{0.32\linewidth}
\centering
\includegraphics[width=\columnwidth]{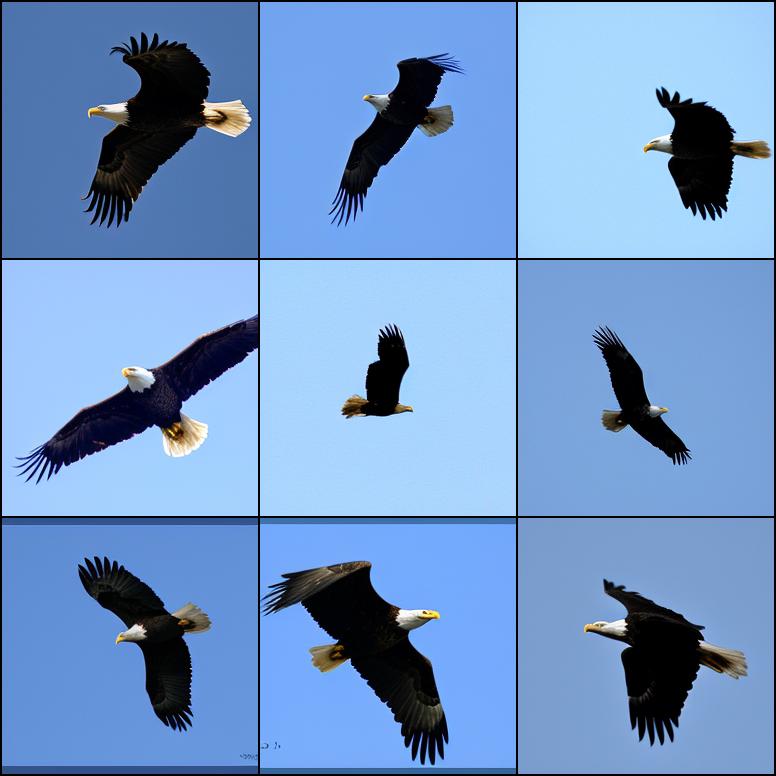}
\end{minipage}
\begin{minipage}{0.32\linewidth}
\centering
\includegraphics[width=\columnwidth]{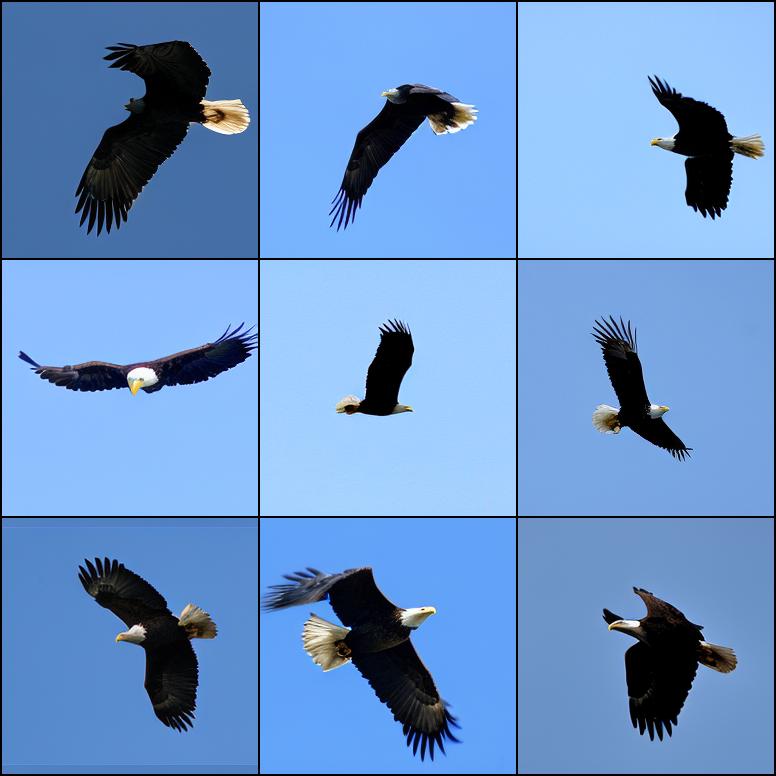}
\end{minipage}

\begin{minipage}{0.32\linewidth}
\centering
\includegraphics[width=\columnwidth]{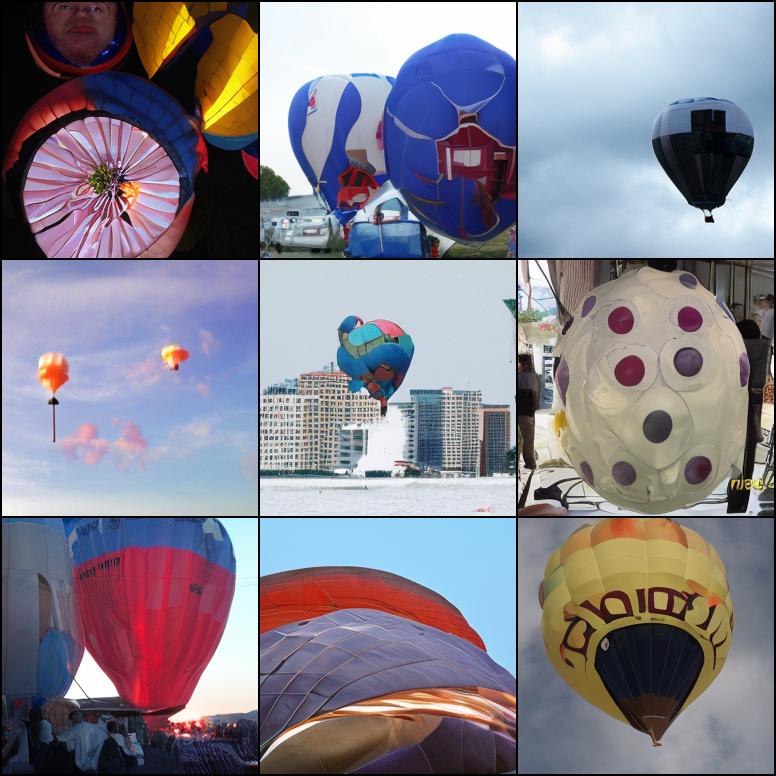}
\end{minipage}
\begin{minipage}{0.32\linewidth}
\centering
\includegraphics[width=\columnwidth]{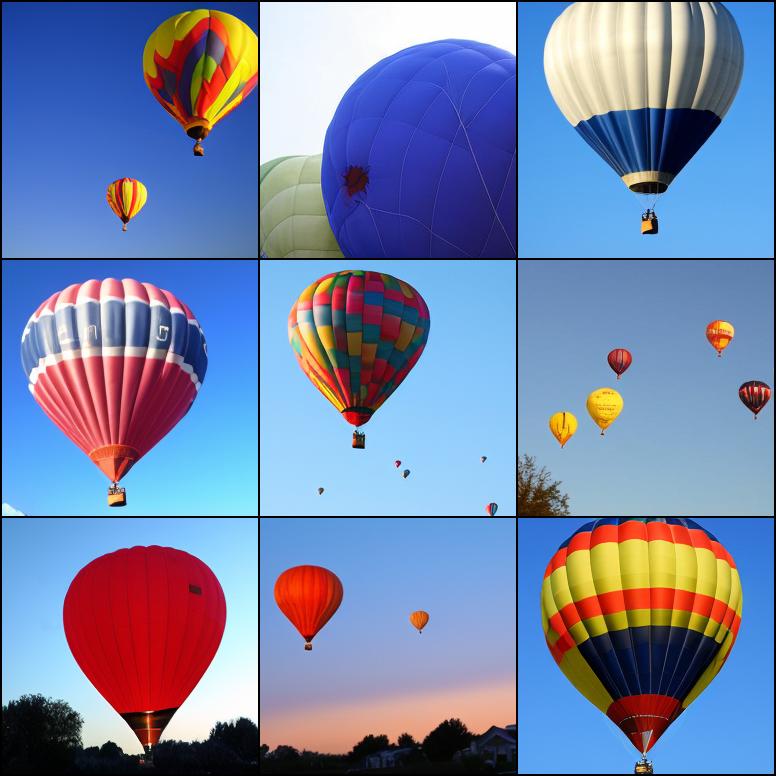}
\end{minipage}
\begin{minipage}{0.32\linewidth}
\centering
\includegraphics[width=\columnwidth]{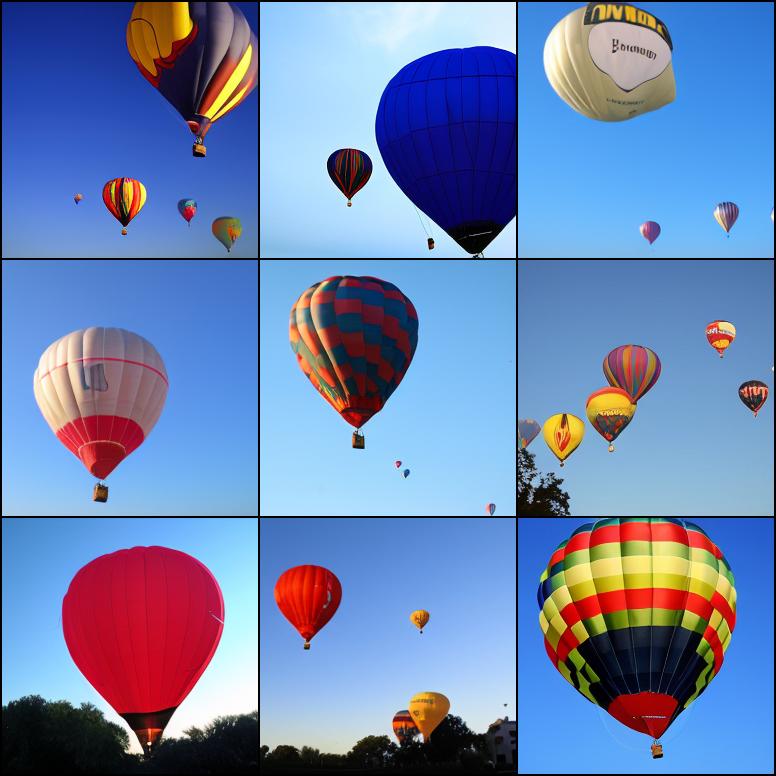}
\end{minipage}


\vspace{2mm}
\begin{minipage}{0.32\linewidth}
\centering
w/o Guidance
\end{minipage}
\begin{minipage}{0.32\linewidth}
\centering
GFT (w/o Guidance)
\end{minipage}
\begin{minipage}{0.32\linewidth}
\centering
w/ CFG Guidance
\end{minipage}

\caption{Additional results of qualitative C2I comparison between vanilla conditional generation, GFT, and CFG on DiT-XL/2.}
\label{fig:more_figure_comparision_DiT}
\end{figure*}

\clearpage
\newpage
\section{Implementation Details.}
\label{Implementation_Details}
For all models, we keep training hyperparameters and other design choices consistent with their official codebases if not otherwise stated. We employ a mix of H100, A100 and A800 GPU cards for experimentation. 

\paragraph{DiT.} We mainly apply GFT to fine-tune DiT-XL/2 (28 epochs, 2\% of pretraining epochs) and train DiT-B/2 from scratch (80 epochs, following the original DiT paper's settings \citep{dit}). Since the DiT-B/2 pretraining checkpoint is not publicly available, we reproduce its pretraining experiment. For all experiments, we use a batch size of 256 and a learning rate of $1e-4$. For DiT-XL/2 fine-tuning experiments, we employ a cosine-decay learning rate scheduler. 

For comparison, we also fine-tune DiT-XL/2 using guidance distillation, with a scale range from 1 to 5, while keeping all other hyperparameters aligned with GFT. 

The original DiT uses the old-fashioned DDPM~\citep{ddpm} which learns both the mean and variance, while GFT is only concerned about the mean. We therefore abandon the variance output channels and related losses during training and switch to the  Dpm-solver++~\citep{dpm-solver++} sampler with 50 steps at inference. For reference, our baseline, DiT-XL/2 with CFG, achieves an FID of 2.11 using DPM-solver++, compared with 2.27 reported in the original paper. All results are evaluated with EMA models. The EMA decay rate is set to 0.9999.

\paragraph{VAR.} We mainly apply GFT to fine-tune VAR-d30 models (15 epochs) or train VAR-d16 models from scratch (200 epochs). Batch size is 768. The initial learning rate is $1e-5$ in fine-tuning experiments and $1e-4$ in pretraining experiments. Following VAR \citep{var}, we employ a learning rate scheduler including a warmup and a linear decay process (minimal is 1\% of the initial). 

VAR by default adopts a pyramid CFG technique on predicted logits. The guidance scale $0$ decreases linearly during the decoding process. Specifically, let n be the current decoding step index, and N be the total steps. The $n$-step guidance scale $s_n$ is 
\begin{equation*}
    s_n = \frac{n}{N-1} s_0.
\end{equation*}
We find pyramid CFG is crucial to an ideal performance of VAR, and thus design a similar pyramid $\beta$ schedule during training:
\begin{equation*}
    \beta_n = \left[(\frac{n}{N-1})^\alpha (\frac{1}{\beta_0} -1) + 1\right]^{-1},
\end{equation*}
where $\beta_n$ represents the token-specific $\beta$ value applied in the GFT AR loss (Eq. \ref{eq:ar_combine}). $\alpha \geq 0$ is a hyperparameter to be tuned.

When $\alpha=0$, we have $\beta_n=\beta_0$, standing for standard GFT. When $\alpha=1.0$, we have $\frac{1}{\beta_n} - 1 = (\frac{n}{N-1}) (\frac{1}{\beta_0} -1)$, corresponding to the default pyramid CFG technique applied by VAR. In practice, we set $\alpha=1.5$ in GFT training and find this slightly outperforms $\alpha=1.0$.

\paragraph{LlamaGen.} We mainly apply GFT to fine-tune LlamaGen-3B models (15 epochs) or train LlamaGen-L models from scratch (300 epochs). For fine-tuning, the batch size is 256, and the learning rate is $2e-4$. For pretraining, the batch size is 768, and the learning rate is $1e-4$. We adopt a cosine-decay learning rate scheduler in all experiments.
\paragraph{MAR.} We apply GFT to MAR-B, including both fine-tuning (10 epochs) and training from scratch (800 epochs). We find the batch size crucial for MAR and use 2048 following the original paper. For fine-tuning, we employ a learning rate scheduler including a 5-epoch linear warmup to $8e-4$ and a cosine decay process to $1e-4$. For training from scratch, we employ a 100-epoch linear lr warmup to $8e-4$, followed by a constant lr schedule, which is the same configuration as the original MAR pretraining. 

The original MAR 
follows the old-fashioned DDPM~\citep{ddpm} which learns both the mean and variance, while GFT is only concerned about the mean. We therefore abandon the variance output channels and related losses during training and switch to the DDIM~\citep{ddim} sampler with 100 steps at inference. As the $\beta$ condition may not precisely capture the effects of the guidance scale after training, we tune the inference $\beta$ schedule to maximize the performance. Specifically, we adopt a power-cosine schedule
\begin{equation*}
    \beta_n=\left[\frac{1-\cos((n/(N-1))^\alpha\pi)}{2}(\frac{1}{\beta_0} -1) + 1\right]^{-1}
\end{equation*}
where we choose $\alpha=0.4$.
\paragraph{Stable Diffusion 1.5.} We apply GFT to fine-tune Stable Diffusion 1.5 (SD1.5) \citep{ldm} for 70,000 gradient updates with a batch size of 256 and constant learning rate of $1e-5$ with 1,000 warmup steps. We disable conditioning dropout as we find it improves CLIP score. 

For comparison, we also fine-tune SD1.5 using guidance distillation with a scale range from 1 to 14, while keeping other hyperparameters aligned with GFT.

For evaluation, following GigaGAN \citep{gigagan} and DMD \citep{dmd}, we generate images using 30K prompts from the COCO2014 \citep{coco} validation set, downsample them to 256×256, and compare with 40,504 real images from the same validation set. We use clean-FID \citep{cleanfid} to calculate FID and OpenCLIP-G \citep{openclip, cherti2023reproducible} to calculate CLIP score \citep{clip}. All results are evaluated using 50 steps DPM-solver++~\citep{dpm-solver++} with EMA models. The EMA decay rate is set to 0.9999.

\section{Prompts for Figure \ref{fig:more_figure_comparision}}
We use the following prompts for Figure \ref{fig:more_figure_comparision}.
\begin{itemize}
\item A vintage camera in a park, autumn leaves scattered around it.
\item Pristine snow globe showing a winter village scene, sitting on a frost-covered pine windowsill at dawn.
\item Vibrant yellow rain boots standing by a cottage door, fresh raindrops dripping from blooming hydrangeas.
\item Rain-soaked Parisian streets at twilight.
\end{itemize}

\end{document}